\newmdenv[
  leftmargin = 0pt,
  innerleftmargin = 1em,
  innertopmargin = 0pt,
  innerbottommargin = 0pt,
  innerrightmargin = 0pt,
  rightmargin = 0pt,
  linewidth = 3pt,
  topline = false,
  rightline = false,
  bottomline = false,
  linecolor=red,
]{leftbar}
\definecolor{marine}{RGB}{85,100,255}
\definecolor{pomegrenate}{RGB}{255,18,94}
\DeclareRobustCommand{\qed}{$\square$}
\DeclareMathOperator{\theargmin}{argmin}
\DeclareMathOperator{\BERN}{Bernoulli}
\DeclareMathOperator{\variance}{var}
\newcommand{\SBM}{{\mathop{\scriptscriptstyle \mathrm{SBM}}}}
\newcommand{\Gk}{G^{(k)}}
\DeclareMathOperator{\M}{\mathbb{M}}
\DeclareMathOperator{\N}{\mathbb{N}}
\DeclareMathOperator{\R}{\mathbb{R}}
\DeclareMathOperator{\cM}{\mathcal{M}}
\DeclareMathOperator{\cO}{\mathcal{O}}
\DeclareMathOperator{\cI}{\mathcal{I}}
\newcommand{\bx}{\bm{x}}
\newcommand{\bA}{\bm{A}}
\newcommand{\cG}{\mathcal{G}}
\newcommand{\bQ}{\bm{Q}}
\newcommand{\bp}{\bm{p}}
\newcommand{\bs}{\bm{s}}
\newcommand{\blamb}{\bm{\lambda}}
\DeclareRobustCommand{\argmin}[1]{\underset{#1}{\theargmin}\mspace{4mu}}
\DeclareRobustCommand{\bern}[1]{\BERN \left(#1\right)}
\DeclareRobustCommand{\var}[1]{\variance\left[#1\right]}
\NewDocumentCommand \E { m o}
{
  \IfNoValueTF {#2} {\mathbb{E}\left[#1\right]}{\mathbb{E}_{#2}\mspace{-4mu}\left[#1\right] }
}
\NewDocumentCommand \prob { m o}
{
  \IfNoValueTF {#2} {\mathbb{P}\left(#1\right)}{\mathbb{P}_{#2} \left(#1\right)}
}
\newcommand{\ER}{Erd\H{o}s-R\'enyi}
\def\@opargbegintheorem#1#2#3{\trivlist
\item[]{\bfseries #1\ #2\ (#3)} \itshape}
\newtheorem{corollary}{Corollary}
\newtheorem{Definition}{Definition}
\newtheorem{Lemma}{Lemma}
\newtheorem{lproof}{Proof of Lemma}
\newtheorem{proof}{Proof of Theorem}
\newtheorem{Remark}{Remark}
\newtheorem{theorem}{Theorem}
\begin{document}
\begin{frontmatter}

  \title{Theoretical analysis and computation of the sample Fr\'echet mean for sets of large graphs based on spectral information}
  \author{Daniel Ferguson}
  \author{Fran\c{c}ois G. Meyer\fnref{fnt2}}
  \address{Applied Mathematics, University of Colorado at Boulder, Boulder CO 80305}
  \fntext[fnt2]{Corresponding author: \href{mailto:fmeyer@colorado.edu}{\sf fmeyer@colorado.edu}\hfill \\
    \hspace{0.5cm}This work was supported by the National Science Foundation, 
    \href{https://www.nsf.gov/awardsearch/showAward?AWD_ID=1815971}{CCF/CIF 1815971}.}

  \begin{abstract}
    \small\baselineskip=9pt To characterize the location (mean, median) of a set of graphs, one needs a notion of centrality that is adapted to metric spaces, since graph sets are not Euclidean spaces. A standard approach is to consider the Fr\'echet mean. In this work, we equip a set of graphs with the pseudometric  defined by the $\ell_2$ norm between the eigenvalues of their respective adjacency matrix. Unlike the edit distance, this pseudometric reveals structural changes at multiple scales, and is well adapted to studying various statistical problems for graph-valued data. We describe an algorithm to compute an approximation to the sample Fr\'echet mean of a set of undirected unweighted graphs with a fixed size using this pseudometric.
  \end{abstract}

  \begin{keyword}
    graph mean; Fr\'echet mean; Statistical network analysis.
  \end{keyword}
\end{frontmatter}

\section{Introduction.}
Machine learning almost always requires the estimation of the average of a dataset. Algorithms for clustering, classification, and linear regression all utilize the average value \cite{HTF08}. When the distance is induced by a norm, then the mean is a simple algebraic operation. If the data lie on a Riemannian manifold, equipped with a metric, then one can extend the notion of mean with the concept of Fr\'echet mean \cite {P06}. In fact the concept of Fr\'echet mean only requires that a (pseudo)metric between points be defined, and therefore one can consider the Fr\'echet mean of a set in a pseudometric space \cite{F48}.  Not surprisingly, many machine learning algorithms, which  were developed for Euclidean spaces, can be extended to use the Fr\'echet mean. The purpose of this paper is to solve the nontrivial problem of determining the Fr\'echet mean for data sets of graphs when the pseudometric is the $\ell_2$ distance between the eigenvalues of the adjacency matrices of two graphs.

In this work we consider a set of simple graphs with $n$ vertices which have an edge density, $\rho_n$, that satisfies,
\begin{equation}
  n^{-2/3} \ll \rho_n \ll 1.
\end{equation}
We note that the vertex set must be sufficiently large and that the technique introduced in this paper will perform poorly for sets of small graphs.

Our line of attack involves the following two intermediate results: (1) a graph's largest eigenvalues can be approximated within any precision by the corresponding eigenvalues of a realization of a stochastic block model; (2) given a set of target eigenvalues, one can recover the stochastic block model whose Fr\'echet mean has a spectrum that best matches these target eigenvalues. We prove various error bounds and convergence results for our algorithm and validate the theory with several experiments. 

\section{State of the Art.}
We consider the set of undirected, unweighted graphs of fixed size $n$, wherein we define a distance. To characterize the location (mean, median) of the set  of graphs we need a notion of centrality that is adapted to metric spaces, since graph sets are not Euclidean spaces. A standard approach is to consider the Fr\'echet sample mean, and the Fr\'echet total sample variance.

The choice of metric is crucial to the computation of the Fr\'echet mean, since each metric induces a different mean graph. The Fr\'echet mean of graphs has been studied in the context where the distance is the edit distance (e.g.,
\cite{BNY20,G12,J16,JO09, JMB01} and references therein).  The edit distance reflects small scale  changes in the graphs and therefore the Fr\'echet mean will be sensitive to the fine structural variations between graphs. Effectively, the Fr\'echet mean with respect to the edit distance can be interpreted as an average of the fine structures in the observed graphs.

In this paper, we consider that the fine scale, which is defined by the local connectivity at the level of each vertex, may be intrinsically random and the quantification of such random fluctuations is uninformative when comparing graphs.  We therefore study a distance that can detect large scale patterns of connectivity that happen at multiple scales (e.g., community structure \cite{A17, LGT14}, modularity \cite{GN02}).

The adjacency spectral distance, which we define as the $\ell_2$ norm of the difference between the spectra of the adjacency matrices of the two graphs of interest \cite{WZ08}, exhibits good performance when comparing various types of graphs \cite{WM20}, making it a reliable choice for a wide range of problems. Spectral distances also exhibit practical advantages, as they can inherently compare graphs of different sizes and can compare graphs without known vertex correspondence (see e.g., \cite{FSS05, FVSRB10} and references therein). 

The adjacency spectrum is well-understood, and is perhaps the most frequently studied graph spectrum
(e.g., \cite{F01,FFF03}). The eigenvalues of the adjacency matrix carry important topological information
about the graph at different structural scales \cite{M85,WZ08}. The spectrum reveals information about large scale features such as community structure \cite{LGT14} or the existence of highly connected ``hubs'' \cite{FFF03}, as well as the smaller scale structure such as local connectivity (i.e. the degree of a vertex) or the ubiquity of substructures such as triangles \cite{F01}. In \cite{WM20}, the authors observe that the adjacency spectral pseudo-distance exhibits good performance across a variety of scenarios, making it a reliable choice for a wide range of problems (from the two-sample test problem to the change point detection problem).

A stronger notion of spectral similarity involves the similarity between the eigenspaces of the adjacency matrices of
the respective graphs (e.g., \cite{deng20, jin20, loukas19}). These notions of spectral similarity have regained some
interest in the context of graph coarsening (or aggregation) for graph neural
networks \cite{bravo-hermsdorff19, chenj21, li18, on20}.

Inspired by the conjecture of Vu \cite{V14}, we propose to use the eigenvalues of the adjacency matrix as
``fingerprints'' that uniquely (up to trivial permutations and the possible iso-spectral graphs) characterize a graph
(see also \cite{JS12}).

In practice, it is often the case that only the first $c$ eigenvalues are compared, where $c\ll n$. We still refer to such distances as spectral distances but comparison using the largest $c$ eigenvalues for small $c$ allows one to focus on the global structure of the graph while ignoring the local structure \cite{LGT14}. We provide in Section \ref{sec:Approx} our recommendation for the choice of $c$.

Instead of solving the minimization problem associated with the computation of the Fr\'echet mean in the original set $\cG$, the authors in \cite{FVSRB10} suggest to embed the graphs in Euclidean space, wherein they can trivially find the mean of the set. Because the embedding in \cite{FVSRB10} is not an isometry, there is no guarantee that the inverse of the average embedded graphs be equal to the Fr\'echet mean. Furthermore, the inverse embedding may not be available in closed form.  In the case of simple graphs, the Laplacian matrix of the graph uniquely characterizes the graph. The authors in \cite{GLBRK17} define the mean of a set of graphs using the Fr\'echet sample mean (computed on the manifold defined by the cone of symmetric positive semi-definite matrices) of the respective Laplacian matrices.
\section{Main Contributions.}
The Fr\'echet mean graph has become a standard tool for the analysis of graph-valued data. In this paper, we derive a method to compute the sample Fr\'echet mean when the distance between graphs is computed by comparing the spectra of the adjacency matrices of the respective graphs. We provide a rigorous theoretical analysis of our algorithm that demonstrates that our estimator converges toward the true sample Fr\'echet mean in the limit of large graph sizes. This is the first computation of the sample Fr\'echet mean for graphs when considering a spectral distance. This novel theoretical result relies on a combination of two ideas: stochastic block models provide universal approximants in the spectral adjacency pseudometric, and the dominant eigenvalues of the adjacency matrix of a stochastic block model can be used to recover the corresponding graph. We use numerical simulations to compare our theoretical analysis to the finite graph size estimates obtained with our algorithm.
\section{Notations.}
We denote by $G = (V,E)$ a graph with vertex set $V = \lbrace 1,2,...,n \rbrace$ and edge set $E \subset V \times V$. For
vertices $i,j \in V$ an edge exists between them if the pair $(i,j) \in E$. The size of a graph is called
$n = \vert V \vert$ and the number of edges is $m = \vert E \vert$. The density of a graph is called
$\rho_n = \frac{m}{n(n-1)/2}$. 

The matrix $\bA$ is the adjacency matrix of the graph and is defined as
\begin{align}
  \bA_{ij} = 
  \begin{cases}
    1 \quad \text{if }(i,j) \in E,\\
    0 \quad \text{else.}
  \end{cases}
\end{align}
We define the function $\sigma$ to be the mapping from the set of $n \times n$ adjacency matrices (square, symmetric matrices with zero entries
on the diagonal), $\M_{n \times n}$ to $\R^n$ that assigns to an adjacency matrix the vector of its $n$ sorted eigenvalues,
\begin{align}
  \sigma:        \M_{n\times n} & \longrightarrow \R^n,\\
  \bA & \longmapsto \blamb = [\lambda_1,\ldots,\lambda_n],
\end{align}
where $\lambda_1 \geq \ldots \geq \lambda_n$. Because we often consider the $c$ largest eigenvalue of the
adjacency matrix $\bA$, we define the mapping to the truncated spectrum as $\sigma_c$ ,
\begin{align}
  \sigma_c:        \M_{n\times n} & \longrightarrow \R^c,\\
  \bA & \longmapsto \blamb_c = [\lambda_1,\ldots,\lambda_c].
\end{align}
\begin{Definition}
  We define the adjacency spectral pseudometric as the $\ell_2$ norm between
  the spectra of the respective adjacency matrices,
  \begin{align} 
    d_A(G,G') = ||\sigma(\bA)-\sigma(\bA')||_2. \label{distance-adj}
  \end{align}
  
\end{Definition}
The pseudometric $d_A$ satisfies the symmetry and triangle inequality axioms, but not the identity axiom. Instead, $d_A$
satisfies the reflexivity axiom
\begin{equation*}
  d_A(G,G) = 0, \quad \forall G \in \cG.
\end{equation*}

When the adjacency matrices (or Laplacian) of graphs have a similar spectra it can be shown that the graphs have similar global and structural properties \cite{WM20}. As a natural extension of this spectral metric, sometimes only the largest $c$ eigenvalues are measured where $c \ll n$. We refer to this next metric as a truncation of the adjacency spectral pseudometric.

\begin{Definition}
  We define the truncated adjacency spectral pseudometric as the $\ell_2$ norm between
  the largest $c$ eigenvalues of the respective adjacency matrices, 
  \begin{align}
    d_{A_c}(G,G') = ||\sigma_c(\bA)-\sigma_c(\bA')||_2. \label{distance-trunc}
  \end{align}
\end{Definition}
\begin{Definition}
  \label{def:SetOfGraphs}
  We denote by  $\cG$  the set of all simple unweighted graphs on $n$ nodes.
\end{Definition}

\subsection{Random Graphs} 
We denote by $\cM (\cG)$ the space  of probability measures on $\cG$. In this work, when we talk about a measure we  always
mean a probability measure.
\begin{Definition}
  We define the set of random graphs distributed according  to $\mu$ to be the probability space $\left(\cG, \mu\right)$.
\end{Definition}
\begin{Remark}
  In this paper, the $\sigma$-field associated with the $\left(\cG, \mu\right)$ will always be the power set of $\cG$.
\end{Remark}
This definition allows us to unify various ensembles of random graphs (e.g., \ER, inhomogeneous \ER, Small-World, Barabasi-Albert etc) through the unique concept of a probability space.
\subsubsection{Kernel Probability Measures}
\label{subsec:kernProbMeas}
\noindent Here we define an important class of probability measures for our study.
\begin{Definition}
  \label{def:KernProbMeas}
  A probability measure $\mu \in \cM (\cG)$ is called a kernel probability measure if there exist a positive constant $\rho_n$ and a function $f$,
  \begin{equation}
    \rho_n f: [0,1]\times[0,1] \mapsto [0,1],
  \end{equation}
  such that $f(x,y) = f(1-y,1-x)$, and 
  \begin{align*}
    \forall G \in \cG, \text{with adjacency matrix}\; \bA=\left(a_{ij}\right), \\
    \mu\left( \left\{ \bA \right \}\right) = \mspace{-12mu} \prod_{1 \leq i < j \leq n}\mspace{-12mu}\prob{a_{ij}} = \mspace{-12mu} \prod_{1 \leq i < j \leq
      n}\mspace{-12mu} \bern{\rho_n f(\frac i n,\frac j n)}.
  \end{align*}
  The function $\rho_n f$ is called a kernel of $\mu$ and the probability measure is denoted $\mu_{\rho_n f}$.
\end{Definition}
\begin{Remark}
  We refer to these measures as kernel probability measures since the kernels naturally give rise to linear integral operators with kernels $f$.
\end{Remark}
\noindent We note that given the sequence $\left \{\frac i n \right \}_{i=1}^n$ and the measure $\mu$, the kernel $\rho_n f$ forms an
equivalence class of functions, characterized by their values on the grid $\left \{\frac i n \right \}_{i=1}^n \times \left \{\frac j n \right \}_{j=1}^n$.

\begin{Definition}
  We denote by $G_{\mu}$ a random realization of a graph $G \in \left(\cG,\mu\right)$.
\end{Definition}

\subsubsection{Stochastic Block Models
  \label{subsec:sbm}}
\noindent The stochastic block model (see \cite{A17}) plays an important role in this work. We review the specific features of this model using the notations that were defined in the previous paragraphs. The key aspects of the model are: the geometry of the blocks, the within-community edges densities, and the across-community edge densities. An example of the kernel function and associated adjacency matrix from a stochastic block model is given in Fig.~\ref{fig:Ex-f}.

We denote by $c$ the number of communities in the stochastic block model. The {\noindent \bf geometry} of the stochastic block model is encoded using the relative sizes of the communities. We denote by $\bs \in \ell_1$ a non-increasing non-negative sequence of relative community sizes with $c$ non-zero entries and $||\bs||_1 = 1$.

For the geometry specified by $\bm s$ we define an associated {\bf edge density} vector $\bp \in \ell_\infty$ such that $0<p_i$ for $i = 1,...,c$ and $p_i = 0$ for $i > c$ which describes the within-community edge densities. 

Finally, we denote by $\bQ = (q_{ij})$ an infinite matrix of cross-community edge densities where $q_{i,i} = 0$, $q_{i,j}  = q_{j,i}$, and $q_{i,j} = 0$ if $i > c$ or $j > c$.

\begin{Remark}
  We allow for infinite vectors with finite number of non-zero entries so that we may allow for the smooth introduction of new communities within the stochastic block model. For example, let $t \in [0,1]$ and parametrize $\bs$ and $\bp$ by $t$ as
  \begin{equation}
    \bs(t) = \begin{bmatrix}1 - t/2 \\ t/2 \\ 0 \\ \vdots\end{bmatrix} \quad \rho_n \bp(t) = \begin{bmatrix} 0.2 + t/2 \\ 0.1+t/2 \\ 0 \\ \vdots \end{bmatrix}.
  \end{equation}
\end{Remark}

We can parameterize a stochastic block model using one representative of the equivalence class of kernels, $f$, which we call the canonical stochastic block model kernel.

\begin{Definition}[Canonical stochastic block model kernel]
  The function $f$, which is piecewise constant over the blocks, and is defined by\\ $\rho_n f: [0,1] \times [0,1]  \longrightarrow [0,1]$
  \begin{align}
    (x,y) \longmapsto
    \begin{cases}
      \rho_n p_i & \text{if} \quad \sum_{k=1}^{i-1}s_k  \leq x <  \sum_{k=1}^{i} s_k, \; \\ & \text{and} \quad 
      \sum_{k=1}^{i-1} s_k  \leq y <  \sum_{k=1}^{i} s_k,\\
      \rho_n q_{ij} & \text{if} \quad \sum_{k=1}^{i-1}s_k  \leq x <  \sum_{k=1}^{i} s_k, \; \\ & \text{and} \quad 
      \sum_{k=1}^{j-1} s_k \leq y <  \sum_{k=1}^{j} s_k
    \end{cases}
  \end{align}
  is called the canonical kernel of the stochastic block model with measure
  $\mu$ (see, e.g. Fig.~\ref{fig:Ex-f}), and we denote it by $f(x,y,\bp,\bQ,\bs)$. \\
\end{Definition}
\begin{Definition}[Set of canonical stochastic block model kernels]
  We denote the set of all canonical stochastic block model kernels $f$ as
  \begin{equation}
    \mathcal{F} = \lbrace f \vert f \text{ is a canonical stochastic block model kernel.} \rbrace
  \end{equation}
\end{Definition}	
\noindent \textit{Example 1.} Given $\bm s = \begin{bmatrix} 1/2 & 1/4 & 1/4 & 0  \cdots \end{bmatrix}^T$ the values of $f(x,y; \bm p, \bQ, \bm s)$ in the unit square are shown in Fig. \ref{fig:Ex-f}.

\begin{figure}
  \centering
  \begin{minipage}{.5\textwidth}
    \centering
    \includegraphics[clip, trim = 9.95cm 3.25cm 9.5cm 2.25cm, width = 4.25cm, height = 4.2cm]{./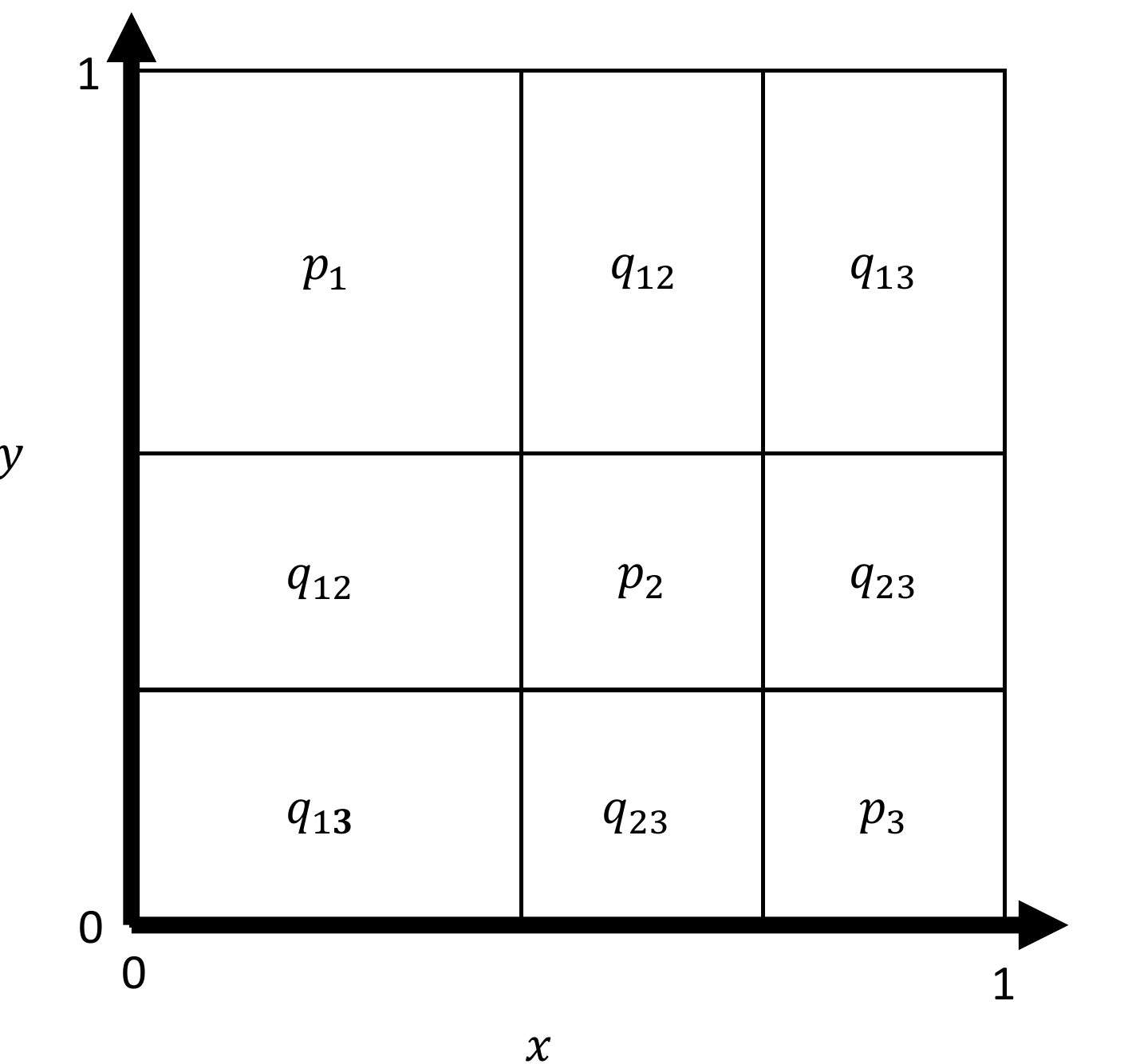}
    \caption{\textit{Example stochastic block model kernel $f(x,y; \bp, \bm Q, \bs)$}}
    \label{fig:Ex-f}
  \end{minipage}%
  \begin{minipage}{.5\textwidth}
    \centering
    \includegraphics[clip, trim = 5cm 8cm 4.75cm 8.75cm, width= 3.5cm, height = 4.15cm]{./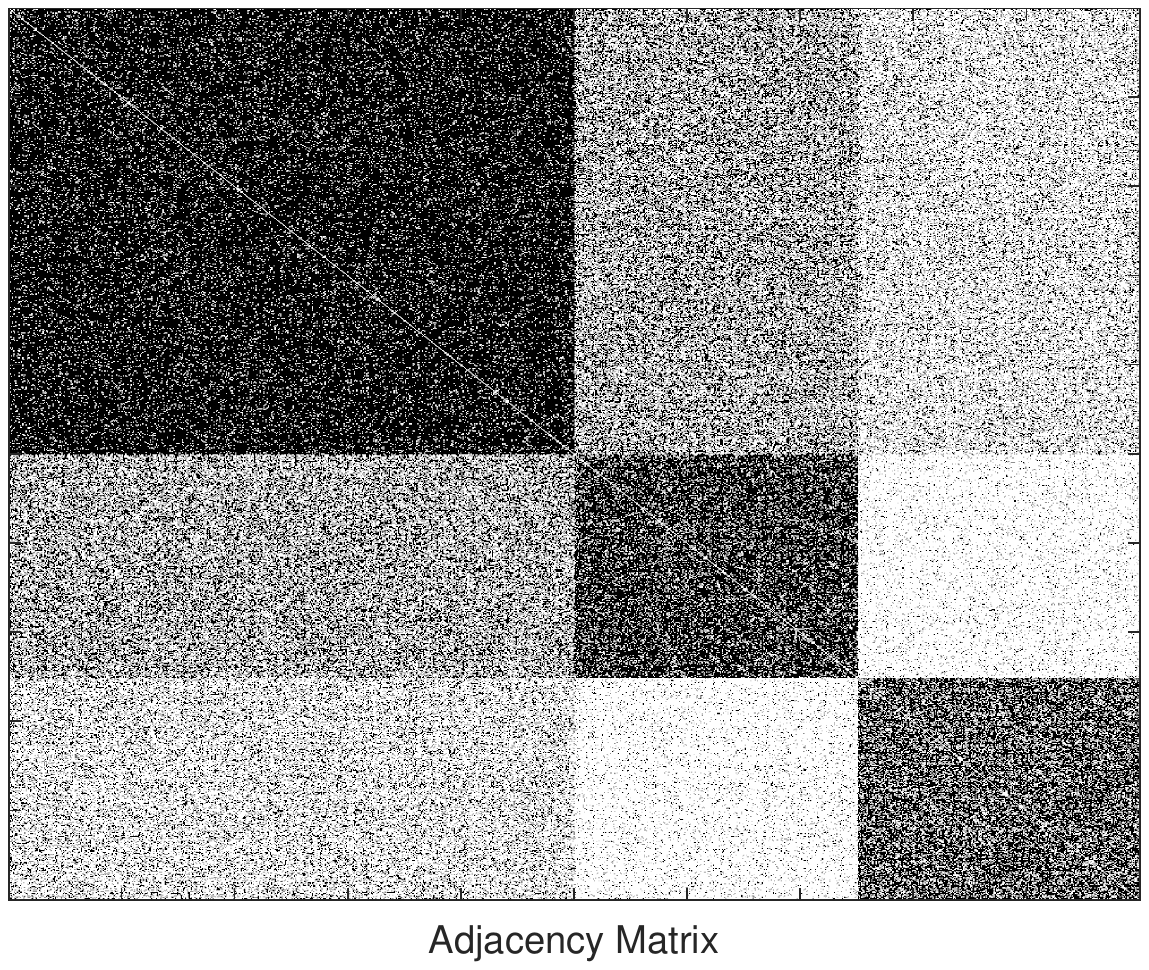}
    \caption{\textit{Example adjacency matrix}}
  \end{minipage}
\end{figure}

\section{The Fr\'echet mean and sample Fr\'echet mean}
\label{sec:FM&EFM}
This section specifies the minimization procedures associated with the Fr\'echet mean and sample Fr\'echet mean problem. We equip the set $\cG$ of graphs defined on $n$ vertices (see definition \ref{def:SetOfGraphs}) with the pseudometric defined by the
$\ell_2$ norm between the spectra of the respective adjacency matrices, $d_{A_c}$, (see (\ref{distance-trunc})). We consider a
probability measure $\mu \in \cM(\cG)$ that describes the probability of obtaining a given graph when we sample $\cG$ according to
$\mu$. Using $d_{A_c}$, we quantify the spread of the graphs, and we define a notion of centrality, which gives the
location of the expected graph, according to $\mu$.

\begin{Definition}[Fr\'echet mean \cite{F48}]
  The Fr\'echet mean of the probability measure $\mu$ in the pseudometric space $(\cG,d_{A_c})$ is the set of graphs $G^*$ whose expected distance squared to $\cG$ is minimum,
  \begin{equation}
    \left\{G^* \in \cG\right\} =  \argmin{G \in \cG} \E{d_{A_c}^2(G,G_{\mu})}[\mu], \label{def:FM-general}
  \end{equation}
\end{Definition}
where $G_\mu$ is a random realization of a graph distributed according to the probability measure $\mu$, and the expectation
$\E{d^2(G,G_{\mu})}[\mu]$ is computed with respect to the probability measure $\mu$. \\

Because $\cG$ is a finite set, the minimization problem (\ref{def:FM-general}) always has at least one solution. Throughout this work, we are interested in determining at least one element of the set $\lbrace G^* \in \cG \rbrace$. Since our results hold for any minimizer of (\ref{def:FM-general}) (i.e. for any Fr\'echet mean of $\mu$), to ease the exposition, and without loss of generality, we assume that the Fr\'echet mean is unique. We therefore write $\left\{G^* \in \cG\right\}$ as a singleton and we denote the Fr\'echet mean as
\begin{align}
  G^* = \underset{G \in \cG}{\text{argmin }}\E{d_{A_c}^2(G,G_{\mu})}[\mu]. \label{def:FM} 
\end{align}

We note the similarity between equation (\ref{def:FM}) and the definition of the barycenter \cite{P06}. Indeed, as we change $\mu$ we expect that, for a fixed $G$, $\E{d_{A_c}^2(G,G_{\mu})}[\mu]$ will change, and therefore the Fr\'echet mean, $G^*$, will move inside $\cG$ for different choices of the probability measure $\mu$. Observe that $G^*$ plays the role of the center of mass, for the mass distribution associated with $\mu$. 

In practice, the only information known about a distribution on $\cG$ comes from a sample of graphs. Therefore, we need a notion
of Fr\'echet sample mean, which is defined by replacing $\mu$ with the empirical measure.
\begin{Definition}[Sample Fr\'echet mean] 
  Let $\left\{\Gk\right\}\; 1 \leq k \leq N$ be a set of graphs in $\cG$. The sample Fr\'echet mean is defined by
  \begin{align}
    G_N^* = \argmin{G \in \cG} \frac 1 N \sum_{k=1}^N d^2(G,\Gk). \label{eqn:EFM}
  \end{align}
\end{Definition}
Again, our results hold for any minimizer of (\ref{eqn:EFM}) and so without loss of generality we assume that the sample Fr\'echet mean is unique for any given set of graphs. The dependence on $N$ is explicitly written here but may be suppressed throughout the paper when it is obvious. 

The computation of $G_N^*$ for sets of large graphs is intractable due to two primary issues. The first is that $\left|\cG\right| = \mathcal{O}(2^{n^2})$ so any brute force procedure to solve the minimization problem in (\ref{eqn:EFM}) will not compute in reasonable time. Second, the set $\cG$ is not ordered so searching the space of graphs in a principled manner is difficult (in contrast to the situation with trees \cite{B14, BHV01}). We suggest solving these issues by first lifting the sample Fr\'echet mean problem to $\cM(\cG)$ and defining an approximation to the lifted problem. Our specific approach involves searching for the correct parameters of a canonical stochastic block model kernel, $f$, such that the sample Fr\'echet mean given $\mu_{\rho_n f}$ almost surely approximates the target graph, $G_N^*$, with respect to $d_{A_c}$.

\section{Approximation of the sample Fr\'echet mean.}
\label{sec:Approx}
We first state the primary theoretical results, Theorem \ref{thm:SpecSimLargeGraphs} and Corollary \ref{cor:ApproxEmpFM}, which constitute the main contributions of this work and which form the foundation of our algorithm (see Alg. \ref{alg:DetFM_C}). We additionally state in this section theorems that are necessary results for the implementation of our algorithm.

Let $G \in \cG$ with adjacency matrix $\bA$ such that $n^{-2/3} \ll \rho_n \ll 1$. Assume that 
\begin{equation}
  0 \preccurlyeq \sigma_c(\bA)
\end{equation}
and for every $1 \leq i \neq j \leq c$, $\lambda_i \neq \lambda_j$. Our primary theoretical contribution, Theorem \ref{thm:SpecSimLargeGraphs}, states that we may approximate any graph $G$ that satisfies our assumptions by the sample Fr\'echet mean of an appropriate stochastic block model kernel probability measure, $\mu_{\rho_n f}$, almost surely with respect to the truncated adjacency spectral pseudo-metric.
\begin{theorem}[Spectrally similar large graphs]
  \label{thm:SpecSimLargeGraphs}
  $\forall \epsilon > 0$, $\exists n_1 \in \N$ such that $\forall n > n_1$, $\exists f(x,y; \bp, \bQ, \bs)$ a canonical stochastic block model kernel with $c$ communities such that
  \begin{equation}
    \lim_{N \to \infty} d_{A_c}(G,G_{N, \mu_{\rho_n f}}^*) < \epsilon \quad a.s.
  \end{equation}
  where $G_{N, \mu_{\rho_n f}}^*$ denotes the sample Fr\'echet mean of $\lbrace \Gk \rbrace_{k=1}^N$, an iid sample distributed according to $\mu_{\rho_n f}$.
\end{theorem}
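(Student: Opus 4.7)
The plan exploits the fact that $d_{A_c}$ depends only on the top $c$ eigenvalues, reducing the theorem to an inverse spectral problem for stochastic block models. The three main ingredients are a reformulation of the sample Fréchet mean as a best-approximation-in-spectrum problem, a strong law of large numbers combined with spectral concentration for SBM adjacency matrices, and an inverse-eigenvalue construction that picks $(\bp, \bQ, \bs)$ so as to match $\sigma_c(\bA)$.

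First, I would expand the squared pseudometric to rewrite
\begin{equation*}
  \frac{1}{N} \sum_{k=1}^N d_{A_c}^2(G, \Gk)
  = \left\|\sigma_c(\bA) - \overline{\blamb}_{N}\right\|_2^2 + C_N,
\end{equation*}
where $\overline{\blamb}_{N} = \tfrac 1 N \sum_{k=1}^N \sigma_c(\bA^{(k)}) \in \R^c$ and $C_N$ does not depend on $G$. Therefore the sample Fréchet mean $G^*_{N, \mu_{\rho_n f}}$ is any graph in $\cG$ whose truncated spectrum lies closest, in $\ell_2$, to $\overline{\blamb}_N$.

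Next, I would identify a deterministic limit of $\overline{\blamb}_N$. By the strong law of large numbers applied coordinate-wise in $\R^c$, $\overline{\blamb}_N \to \overline{\blamb}_\infty := \E{\sigma_c(\bA^{(1)})}[\mu_{\rho_n f}]$ almost surely as $N \to \infty$. In the regime $n^{-2/3} \ll \rho_n \ll 1$, a Füredi--Komlós-type spectral concentration result for SBMs yields $\sigma_c(\bA^{(k)}) = \sigma_c(\E[\bA^{(k)}]) + o(1)$ with high probability in $n$, so $\overline{\blamb}_\infty$ is $o_n(1)$-close to $\sigma_c(\E\bA)$, the deterministic spectrum of the expected adjacency matrix induced by $\rho_n f$.

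Third, I would solve the inverse-spectrum problem: given $\sigma_c(\bA)=(\lambda_1,\dots,\lambda_c)$ with $\lambda_i>0$ and all distinct, construct $(\bp, \bQ, \bs)$ such that the rank-$c$ operator associated with the canonical kernel has spectrum within $\epsilon/3$ of $(\lambda_1,\dots,\lambda_c)$. The canonical $c$-block kernel carries $c(c+1)/2 + c$ free real parameters, so a continuity/surjectivity argument on the parameter-to-spectrum map should show this matching is achievable for all sufficiently large $n$. Combining the three ingredients by triangle inequality, the almost sure limit $d_{A_c}(G, G^*_{N, \mu_{\rho_n f}})$ is bounded by three errors, each of which can be made smaller than $\epsilon/3$ for $n$ large enough.

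The main obstacle is the interplay between the inverse-spectrum step and the discreteness of $\cG$: no graph need attain a prescribed top-$c$ spectrum exactly, and the Fréchet argmin is constrained to the finite set $\cG$. The resolution is that an SBM draw itself lies in $\cG$, so its spectrum --- which, by the concentration result, is within $o(1)$ of $\overline{\blamb}_\infty$ --- provides an upper bound on the minimum achievable $\ell_2$ distance. Making this rigorous requires quantitative control of the Füredi--Komlós bound uniform in the SBM parameters and in $n$, together with a constructive inverse-spectral map for canonical $c$-block kernels; these are the technical cruxes of the full proof.
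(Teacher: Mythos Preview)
Your proposal is correct and follows essentially the same architecture as the paper: rewrite the sample Fr\'echet mean as projection onto the nearest realizable top-$c$ spectrum, invoke the strong law of large numbers to identify the limit target $\E{\sigma_c(\bA_{\mu_{\rho_n f}})}$, construct an SBM kernel whose induced spectrum hits that target, and use the fact that an SBM realization itself furnishes a graph in $\cG$ with spectrum close to the target to bound the projection error. The triangle-inequality split into three $\epsilon/3$ pieces mirrors the paper's Step~4 exactly.

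Two tactical differences are worth flagging. First, for the inverse-spectrum step you gesture at a dimension-counting continuity/surjectivity argument on the $c(c+1)/2+c$ parameters; the paper instead gives a one-line explicit construction (Lemma~\ref{lem:SBMCorrectEigs}): take $\bQ=0$ and set $p_i=\theta_i/s_i$, which makes the block-diagonal kernel $f$ have $L_f$-eigenvalues exactly $\theta_i=\lambda_i(\bA)/(n\rho_n)$. This is both simpler and fully constructive. Second, for the concentration step you invoke a F\"uredi--Koml\'os-type bound around $\sigma_c(\E\bA)$, whereas the paper works with $\E{\sigma_c(\bA)}$ via the Chakrabarty--Chakraborty--Hazra estimate (Theorem~\ref{thm:EstLargeEigs}) and the accompanying CLT (Theorem~\ref{thm:ConvDist}); the latter directly yields both the $\mathcal{O}(\sqrt{\rho_n})$ bias bound and the positive-probability existence argument (Lemma~\ref{lem:GraphNearExpEigs}) that you identify as the discreteness crux. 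Either route works, but the paper's choice keeps the centering consistent throughout and avoids needing a separate argument that $\sigma_c(\E\bA)$ and $\E{\sigma_c(\bA)}$ coincide to the required order.
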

\begin{proof}
  The proof is in \ref{app:SpecSimLargeGraphs}.
\end{proof}
\begin{Remark}
  \label{rem:Geom}
  While we are free to choose the geometry vector $\bs$, we make the choice that $s_1 \geq s_i$ for $i = 2,...,c$ and $s_i = s_j$ for $i,j = 2,...,c$. This choice is not necessary and any choice of the vector $\bs$ would be suitable (see e.g. subsection \ref{subsec:VarComSize}).
\end{Remark}
The following corollary applies Theorem \ref{thm:SpecSimLargeGraphs} to the sample Fr\'echet mean of any given data set of graphs, $\lbrace \Gk \rbrace_{k=1}^N$. It states that for any given set of graphs whose sample Fr\'echet mean, $G_N^*$, satisfies the assumptions of Theorem \ref{thm:SpecSimLargeGraphs}, there exists a canonical stochastic block model kernel defining a probability measure, $\mu_{\rho_n f}$, where the sample Fr\'echet mean of an iid sample from $\mu_{\rho_n f}$, denoted $G_{\tilde N, \mu_{\rho_n f}}^*$, is almost surely close to $G_N^*$. This corollary forms the basis of our approach to solving the sample Fr\'echet mean problem, equation (\ref{eqn:EFM}).

Let $\lbrace \Gk \rbrace_{k=1}^N$ be a set of graphs with sample Fr\'echet mean $G_N^*$. Assume $G_N^*$ satisfies the assumptions of Theorem \ref{thm:SpecSimLargeGraphs}.
\begin{corollary}[Approximation of the sample Fr\'echet mean]
  \label{cor:ApproxEmpFM}
  $\forall \epsilon > 0$, $\exists n_1 \in \N$ such that $\forall n > n_1$, $\exists f(x,y; \bp, \bQ, \bs)$ a canonical stochastic block model kernel with $c$ communities such that
  \begin{equation}
    \lim_{\tilde N \to \infty} d_{A_c}(G_N^*,G_{\tilde{N},\mu_{\rho_n f}}^*) < \epsilon \quad a.s.
  \end{equation}
  where $G_{\tilde N, \mu_{\rho_n f}}^*$ denotes the sample Fr\'echet mean of $\lbrace \tilde G^{(k)} \rbrace_{k=1}^{\tilde N}$, an iid sample distributed according to $\mu_{\rho_n f}$.
\end{corollary}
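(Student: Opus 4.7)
The plan is to derive the corollary as an essentially immediate application of Theorem \ref{thm:SpecSimLargeGraphs}. The key observation is that the sample Fr\'echet mean $G_N^*$ is, by construction, a single element of $\cG$: although it is defined through a minimization over a data set, once that minimization is performed the output is simply some graph in $\cG$. So nothing in Theorem \ref{thm:SpecSimLargeGraphs} cares that our target came from such a minimization procedure; it only cares that the target is a graph in $\cG$ satisfying the stated spectral hypotheses.

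Concretely, I would proceed as follows. First, fix $\epsilon > 0$. By hypothesis, $G_N^*$ satisfies all the assumptions required of a ``target'' graph in Theorem \ref{thm:SpecSimLargeGraphs}, namely that its adjacency matrix $\bA_{G_N^*}$ has edge density in the regime $n^{-2/3} \ll \rho_n \ll 1$, $0 \preccurlyeq \sigma_c(\bA_{G_N^*})$, and the top $c$ eigenvalues are pairwise distinct. Second, apply Theorem \ref{thm:SpecSimLargeGraphs} with the choice $G := G_N^*$ and with the same $\epsilon$. This yields an integer $n_1$ and, for every $n > n_1$, a canonical stochastic block model kernel $f(x,y;\bp,\bQ,\bs)$ with $c$ communities such that
\begin{equation*}
  \lim_{\tilde N \to \infty} d_{A_c}\bigl(G_N^*, G_{\tilde N, \mu_{\rho_n f}}^*\bigr) < \epsilon \quad a.s.,
\end{equation*}
where $G_{\tilde N, \mu_{\rho_n f}}^*$ is the sample Fr\'echet mean of an iid sample of size $\tilde N$ drawn from $\mu_{\rho_n f}$. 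This is exactly the assertion of the corollary, simply with $\tilde N$ playing the role of $N$ in the statement of the theorem.

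Because this argument is a one-line specialization, there is no real technical obstacle at the corollary level; all the work is hidden inside Theorem \ref{thm:SpecSimLargeGraphs}. The only point requiring a small amount of care is notational: we must be careful to distinguish the sample size $N$ that produced the target $G_N^*$ from the iid sample size $\tilde N$ used to form $G_{\tilde N, \mu_{\rho_n f}}^*$, since the limit is taken only in the latter. I would make this explicit in the proof so that the reader does not confuse the two limits, and then note that $N$ is fixed throughout and plays no role beyond selecting the particular target graph to which the theorem is applied.
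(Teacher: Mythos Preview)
Your proposal is correct and matches the paper's approach: the paper does not even provide a separate proof of the corollary, treating it as an immediate specialization of Theorem~\ref{thm:SpecSimLargeGraphs} with $G := G_N^*$. Your explicit remark about distinguishing the fixed data-set size $N$ from the sampling size $\tilde N$ is a useful clarification that the paper leaves implicit.
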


\begin{Remark}
  \label{rem:Density}
  One  requirement on $G_N^*$ is that the density, denoted $\rho_n^*$, satisfies $n^{-2/3} \ll \rho_n^* \ll 1$. The theory in \cite{FM21} states that as long each graph in our sample set $\lbrace \Gk \rbrace_{k=1}^N$ satisfies this density condition, then so to does $G_N^*$.
\end{Remark}

Theorem \ref{thm:SpecSimLargeGraphs} and Corollary \ref{cor:ApproxEmpFM} suggest that the sample Fr\'echet mean can be approximated by a large sample Fr\'echet mean from a stochastic block model in the limit of large graph size. This result is purely existential and does not provide an algorithm to construct the sequence of stochastic block model kernels.

We now address how to determine the correct canonical stochastic block model kernel $f$ with the following three theorems. The strategy to determine the kernel $f$ begins with the fact that the eigenvalues, $\sigma_c(\bA_N^*)$, concentrate around the sample mean spectrum, $\frac 1 N \sum_{k=1}^N \sigma_c(\bA^{(k)})$ (Theorem \ref{thm:GeomEigsFM}). We can therefore find the canonical stochastic block model kernel $f$ where the sample Fr\'echet mean from $\mu_{\rho_n f}$ has an adjacency matrix whose eigenvalues best approximate the sample mean spectrum, $\frac 1 N \sum_{k=1}^N \sigma_c(\bA^{(k)})$, by utilizing the estimates given in Theorems \ref{thm:AlmostSure} and \ref{thm:EstLargeEigs}. Once we have estimated the parameters of the canonical stochastic block model kernel $f$ we estimate the sample Fr\'echet mean given $f$, $G_{\tilde N, \mu_{\rho_n f}}^*$, by sampling graphs distributed according to $\mu_{\rho_n f}$ and computing the set mean graph (Theorem \ref{thm:HighProbStat}).

Let $\lbrace \Gk \rbrace_{k=1}^N$ be a set of graphs with sample Fr\'echet mean $G_N^*$ with adjacency matrix $\bA_N^*$. Our following theorem shows that the eigenvalues of the adjacency matrix $\bA_N^*$ concentrates around the sample mean spectrum.
\begin{theorem}
  \label{thm:GeomEigsFM}
  $\forall \epsilon > 0$, $\exists n^* \in \N$ such that $\forall n > n^*$,
  \begin{equation}
    ||\sigma_c(\bA_N^*) - \frac 1 N \sum_{k=1}^N \sigma_c(\bA^{(k)}) ||_2 < \epsilon.
  \end{equation}
\end{theorem}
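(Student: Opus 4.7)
The plan is to exploit the Euclidean nature of the spectral pseudometric to reduce the Fr\'echet minimization to a nearest-point problem in $\R^c$, then invoke the fact that any admissible target spectrum can be realized, up to an arbitrarily small error, by the spectrum of an actual $n$-vertex graph once $n$ is large.

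First, write $\lambda^* = \sigma_c(\bA_N^*)$, $\lambda^{(k)} = \sigma_c(\bA^{(k)})$, and $\bar\lambda = \tfrac1N\sum_{k=1}^N \lambda^{(k)}$. The bias--variance identity, applied coordinate-wise in $\R^c$, gives for every $\lambda\in\R^c$
$$\frac1N\sum_{k=1}^N \|\lambda-\lambda^{(k)}\|_2^2 \;=\; \|\lambda-\bar\lambda\|_2^2 \;+\; \frac1N\sum_{k=1}^N \|\lambda^{(k)}-\bar\lambda\|_2^2.$$
The last term is independent of $\lambda$, so minimizing the Fr\'echet functional over $\cG$ is equivalent to minimizing $\|\sigma_c(\bA)-\bar\lambda\|_2^2$ over $G\in\cG$. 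Consequently
$$\|\lambda^*-\bar\lambda\|_2 \;=\; \min_{G\in\cG}\,\|\sigma_c(\bA)-\bar\lambda\|_2,$$
and the theorem reduces to producing, for sufficiently large $n$, a single graph $G'\in\cG$ with $\|\sigma_c(\bA')-\bar\lambda\|_2<\epsilon$.

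To construct such a $G'$ I would use a stochastic block model. Each $G^{(k)}$ has density in the regime $n^{-2/3}\ll\rho_n\ll1$, so $\bar\lambda$ inherits the same scaling, and Remark \ref{rem:Density} places it in the range where the spectral--concentration machinery of the paper applies. Pick block-model parameters $(\bp,\bQ,\bs)$ (fixing $\bs$ as in Remark \ref{rem:Geom}) so that the $c$ dominant eigenvalues of the expected adjacency matrix coincide with $\bar\lambda$; this is exactly the parameter-to-spectrum inversion quantified by Theorems \ref{thm:AlmostSure} and \ref{thm:EstLargeEigs}. A realization of this SBM then satisfies, via eigenvalue concentration for SBMs in the density regime $\rho_n\gg n^{-2/3}$ combined with a Weyl / Wielandt--Hoffman perturbation bound, $\|\sigma_c(\bA')-\bar\lambda\|_2<\epsilon$ with probability tending to one as $n\to\infty$. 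Picking any such realization yields the required $G'$, and the theorem follows from the minimization identity displayed above.

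The main obstacle is the inversion step: showing that $\bar\lambda$ actually lies (up to $\epsilon$) in the image of the SBM expected-spectrum map. A naive convexity argument is not available, because the set of realizable truncated spectra of $n$-vertex graphs is not convex. Instead, the argument must rely on the genuine flexibility of the SBM parameter family to reproduce arbitrary admissible vectors of $c$ sorted nonnegative reals, which is precisely the content of the inversion and estimation theorems (\ref{thm:AlmostSure}, \ref{thm:EstLargeEigs}) that the paper develops for exactly this purpose. Once these are granted, the remaining calculations are routine.
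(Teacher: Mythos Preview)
Your skeleton is exactly the paper's: the bias--variance identity reduces the Fr\'echet problem to finding one witness graph whose top-$c$ spectrum is within $\epsilon$ of $\bar\lambda$, and the witness is produced via a stochastic block model tailored to $\bar\lambda$. Two points need tightening, one of which is a genuine gap.

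The substantive gap is the concentration step. Operator-norm concentration for an SBM adjacency matrix gives $\|\bA'-\E\bA'\|_{op}=\Theta(\sqrt{n\rho_n})$, and Weyl or Wielandt--Hoffman then only yields $|\lambda_i(\bA')-\lambda_i(\E\bA')|=O(\sqrt{n\rho_n})$, which diverges; so the route ``operator-norm concentration $+$ Weyl'' cannot deliver $\|\sigma_c(\bA')-\bar\lambda\|_2<\epsilon$. The paper does not use Weyl here. It instead relies on the much finer eigenvalue fluctuation result of Chakrabarty--Chakraborty--Hazra (Theorem~\ref{thm:ConvDist} in the appendix), which gives $\lambda_i(\bA')-\E[\lambda_i(\bA')]=O_P(\sqrt{\rho_n})$; this is packaged as Lemma~\ref{lem:GraphNearExpEigs}, which produces the witness $G'$ directly. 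You need that sharper input (or an equivalent $O(\sqrt{\rho_n})$ eigenvalue-level bound), not a matrix-norm perturbation inequality.

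The minor point is the citation for the inversion. Theorem~\ref{thm:AlmostSure} concerns the sample Fr\'echet mean of SBM draws and says nothing about hitting a prescribed target vector. The paper's inversion is the elementary block-diagonal construction (Lemma~\ref{lem:SBMCorrectEigs}): take $\bQ=0$ and set $p_i=\bar\lambda_i/(n\rho_n s_i)$ so that $\lambda_i(L_f)=\bar\lambda_i/(n\rho_n)$ exactly; Theorem~\ref{thm:EstLargeEigs} then transfers this to $\E[\lambda_i(\bA_{\mu_{\rho_n f}})]=\bar\lambda_i+O(\sqrt{\rho_n})$. With Lemma~\ref{lem:GraphNearExpEigs} supplying the witness near $\E[\sigma_c(\bA_{\mu_{\rho_n f}})]$, your minimization identity finishes the proof exactly as you wrote.
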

\begin{proof}
  The proof is in \ref{app:GeomEigsFM}
\end{proof}

Let $\lbrace \tilde G \rbrace_{k=1}^{\tilde N}$ be an iid sample of graphs distributed according to $\mu_{\rho_n f}$ with sample Fr\'echet mean $G_{\tilde N, \rho_n f}^*$. Let $\bA_{\tilde N, \rho_n f}^*$ be the adjacency matrix of $G_{\tilde N, \rho_n f}^*$. Our next two theorems show how we estimate the eigenvalues, $\sigma_c(\bA_{\tilde N, \rho_n f}^*)$, in terms of the kernel function $f$. We first show that the expected eigenvalues, $\E{\sigma_c(\bA_{\mu_{\rho_n f}})}$, are almost surely within $\epsilon$ of $\sigma_c(\bA_{\tilde N, \rho_n f}^*)$. 

\begin{theorem}[The Eigenvalues of the sample Fr\'echet Mean of Stochastic Block Models]
  \label{thm:AlmostSure}
  $\forall \epsilon > 0$, $\exists n^* \in \N$ such that for all $n > n^*$,
  \begin{equation}
    \lim_{\tilde N \to \infty} || \sigma_c(\bA_{\tilde N, \rho_n f}^*) - \E{\sigma_c(\bA_{\mu_{\rho_n f}})}||_2 < \epsilon \quad a.s.
  \end{equation}
\end{theorem}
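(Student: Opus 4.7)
The plan is to combine Theorem \ref{thm:GeomEigsFM} with the classical Strong Law of Large Numbers (SLLN) applied to the spectrum-valued random variable $\sigma_c(\bA_{\mu_{\rho_n f}})$. The key observation is that Theorem \ref{thm:GeomEigsFM} gives a \emph{deterministic} bound (depending only on $n$) between the eigenvalues of the sample Fr\'echet mean and the sample mean of the eigenvalues, while the SLLN handles the asymptotics in $\tilde N$.

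First I would fix $\epsilon > 0$ and apply Theorem \ref{thm:GeomEigsFM} to the iid sample $\{\tilde G^{(k)}\}_{k=1}^{\tilde N}$ drawn from $\mu_{\rho_n f}$. This produces $n_1^* \in \N$ such that for all $n > n_1^*$ and for every finite sample,
\begin{equation*}
\Bigl\| \sigma_c(\bA_{\tilde N, \rho_n f}^*) - \frac{1}{\tilde N}\sum_{k=1}^{\tilde N} \sigma_c(\tilde \bA^{(k)}) \Bigr\|_2 < \epsilon/2.
\end{equation*}
Note that this bound holds irrespective of $\tilde N$ and irrespective of the realization of the sample, which is crucial for the forthcoming triangle-inequality argument.

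Next I would apply the SLLN to the $\R^c$-valued iid random vectors $\sigma_c(\tilde \bA^{(k)})$. Each coordinate $\lambda_i(\tilde \bA^{(k)})$ is uniformly bounded in absolute value by $n$ (a fixed constant with respect to $\tilde N$), so every coordinate has finite expectation and the componentwise SLLN yields almost surely
\begin{equation*}
\lim_{\tilde N \to \infty} \frac{1}{\tilde N}\sum_{k=1}^{\tilde N}\sigma_c(\tilde \bA^{(k)}) \; = \; \E{\sigma_c(\bA_{\mu_{\rho_n f}})}.
\end{equation*}
Because convergence holds coordinatewise and the dimension $c$ is fixed, convergence also holds in the $\ell_2$ norm. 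In particular, on a probability-one event there exists $\tilde N_0$ (random) such that for $\tilde N \geq \tilde N_0$ the $\ell_2$ distance between the sample mean spectrum and $\E{\sigma_c(\bA_{\mu_{\rho_n f}})}$ is at most $\epsilon/2$.

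Finally, I would take $n^* = n_1^*$ and apply the triangle inequality: for all $n > n^*$, on the probability-one event above and for $\tilde N \geq \tilde N_0$,
\begin{equation*}
\|\sigma_c(\bA_{\tilde N, \rho_n f}^*) - \E{\sigma_c(\bA_{\mu_{\rho_n f}})}\|_2 < \epsilon/2 + \epsilon/2 = \epsilon,
\end{equation*}
so the limit-superior as $\tilde N \to \infty$ is a.s.\ bounded by $\epsilon$, which is the stated conclusion. The main potential obstacle is of a bookkeeping nature: ensuring that Theorem \ref{thm:GeomEigsFM} indeed yields a bound that is uniform over realizations (not just in expectation or in probability), so that the two sources of error---the one from passing to the Fr\'echet mean and the one from the empirical average of spectra---can be controlled independently and then glued together by the triangle inequality. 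Everything else reduces to the standard componentwise SLLN for bounded random variables.
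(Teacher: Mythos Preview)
Your plan is very close to the paper's, but the obstacle you flag as ``bookkeeping'' is the substantive point, and Theorem~\ref{thm:GeomEigsFM} as stated does not deliver the uniformity you need. In the paper's proof of Theorem~\ref{thm:GeomEigsFM}, the threshold $n^*$ is produced by constructing a kernel from the \emph{sample-dependent} mean $\frac{1}{\tilde N}\sum_k \sigma_c(\tilde{\bA}^{(k)})$ and then invoking Lemma~\ref{lem:GraphNearExpEigs} and Theorem~\ref{thm:EstLargeEigs} for that kernel; the resulting $n^*$ therefore varies with the realization and with $\tilde N$, so you cannot fix $n$ once and then send $\tilde N\to\infty$ using Theorem~\ref{thm:GeomEigsFM} as a black box. (There is also a minor ordering issue: the projection identity~(\ref{eqn:EqFM}) on which Theorem~\ref{thm:GeomEigsFM} rests is actually derived inside the proof of Theorem~\ref{thm:AlmostSure}, so in the paper Theorem~\ref{thm:GeomEigsFM} is not logically prior.)

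The paper sidesteps this by reversing the order of the two limits implicit in your argument. It first establishes the projection identity $\sigma_c(\bA_{\tilde N,\rho_n f}^*)=\argmin{\blamb\in\Lambda_n^c}\|\blamb-\tfrac{1}{\tilde N}\sum_k\blamb^{(k)}\|_2$, then lets $\tilde N\to\infty$ via the SLLN so that the projection target becomes the \emph{deterministic} vector $\E{\sigma_c(\bA_{\mu_{\rho_n f}})}$, and only then applies Lemma~\ref{lem:GraphNearExpEigs} with the \emph{fixed original} kernel $f$. This yields an $n^*$ depending only on $f$ and $\epsilon$, and the bound follows because the projection onto $\Lambda_n^c$ is at least as close to the target as the witness graph $G'$. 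Unwinding your black-box use of Theorem~\ref{thm:GeomEigsFM} to repair the uniformity would lead you to exactly this argument; what the paper's order buys is that the existence lemma is applied once, to a sample-independent kernel, rather than to a random one at every $\tilde N$.
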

\begin{proof}
  The proof is in \ref{app:SpecSimLargeGraphs}.
\end{proof}

Since we do not have a closed form expression for $\E{\sigma_c(\bA_{\mu_{\rho_n f}})}$, Theorem \ref{thm:EstLargeEigs} shows that we can estimate the expected eigenvalues, $\E{\sigma_c(\bA_{\mu_{\rho_n f}})}$, in terms of the kernel function $f$. It should also be noted that the following theorem is a modification of Theorem 2.4 in \cite{CCH20}.

Let $\mu_{\rho_n f} \in \cM(\cG)$ be a kernel probability measure with kernel $f$. Let $L_f$ be the linear integral operator with the same kernel function, $f$. Assume $L_f$ has a finite rank of $c$. Denote the eigenvalues and eigenfunctions of $L_f$ as $\lambda_i(L_f)$ and $r_i(x)$ respectively where for each $i = 1,...,c$, $r_i(x)$ is assumed to be piecewise Lipschitz with finitely many discontinuities.

\begin{theorem}[Estimation of the Largest Eigenvalues of Stochastic Block Models] \hfill \\
  \label{thm:EstLargeEigs}
  For $i = 1,...,c$
  \begin{equation}
    \E{\lambda_i(\bA_{\mu_{\rho_n f}})} = \lambda_i(L_f) n \rho_n + \mathcal{O}(\sqrt{\rho_n})
  \end{equation}
\end{theorem}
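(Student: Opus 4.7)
The plan is to split the error into a stochastic (concentration) piece comparing $\lambda_i(\bA_{\mu_{\rho_n f}})$ with $\lambda_i(\E{\bA_{\mu_{\rho_n f}}})$, and a deterministic (quadrature) piece comparing $\lambda_i(\E{\bA_{\mu_{\rho_n f}}})$ with $n\rho_n\lambda_i(L_f)$. The key starting observation is that the expected adjacency matrix has entries $\E{\bA}_{ij} = \rho_n f(i/n, j/n)$ for $i\ne j$ (with a zero diagonal), so that $\E{\bA}/\rho_n$ is, up to a diagonal correction of operator norm $\mathcal{O}(1)$, the Nyström discretization of $L_f$ at the equispaced nodes $\{i/n\}_{i=1}^n$.

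For the concentration step, I would apply a matrix concentration bound to $\bM = \bA_{\mu_{\rho_n f}} - \E{\bA_{\mu_{\rho_n f}}}$. This matrix is symmetric with independent centered entries bounded by $1$ and variance at most $\rho_n$; the Bandeira–Van Handel bound (or matrix Bernstein in the dense regime) gives $\|\bM\|_{op} = \mathcal{O}(\sqrt{n\rho_n})$ both with high probability and in expectation, which is valid because $n^{-2/3}\ll \rho_n$ forces $n\rho_n\gg \log n$. Weyl's inequality yields $|\lambda_i(\bA) - \lambda_i(\E{\bA})| \le \|\bM\|_{op}$. The naive bound only gives $\mathcal{O}(\sqrt{n\rho_n})$, so to reach the sharper error in the statement I would exploit the low-rank structure of $\E{\bA}$: the non-trivial eigenvalues of $\E{\bA}$ are of order $n\rho_n$ and are separated, so a second-order resolvent expansion (isolated-eigenvalue perturbation for signal-plus-noise matrices) replaces the Weyl bound by a correction proportional to $\|\bM\|_{op}^2/(n\rho_n)$ in expectation. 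This is the content of the modification of [CCH20, Thm 2.4] referenced by the authors; propagated back through the normalization, it delivers the announced $\mathcal{O}(\sqrt{\rho_n})$ correction.

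For the quadrature step, I would compare the eigenvalues of the kernel matrix $\bK = (f(i/n,j/n))_{i,j}$ with $n\lambda_i(L_f)$. Because $L_f$ has finite rank $c$ with piecewise Lipschitz eigenfunctions $r_1,\ldots,r_c$ having only finitely many discontinuities, the discretized integral operator $\bK/n$ has eigenvalues that converge to $\lambda_i(L_f)$ with error $\mathcal{O}(1/n)$: Lipschitz pieces contribute $\mathcal{O}(1/n)$ from standard Riemann-sum quadrature, and each discontinuity contributes $\mathcal{O}(1/n)$ from the endpoint mismatch, yielding an $\mathcal{O}(1)$ error on the scale of $n\lambda_i(L_f)$ after multiplication by $n$. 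Adding the $\mathcal{O}(\rho_n)$ contribution from the zero diagonal of $\E{\bA}$ (via another Weyl bound) gives $|\lambda_i(\E{\bA}) - n\rho_n\lambda_i(L_f)| = \mathcal{O}(\rho_n)$, which is absorbed into the $\mathcal{O}(\sqrt{\rho_n})$ term.

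The hard step is unquestionably the sharpening beyond Weyl: the naive operator-norm bound is too coarse, and the cancellation has to be extracted via a second-order expansion that uses both the finite-rank assumption on $L_f$ and the concentration of quadratic forms in the noise matrix $\bM$. Provided this expansion is in hand (as in the adaptation of [CCH20]), the remaining pieces — matrix concentration and quadrature approximation — are standard, and combining all three yields the claimed identity.
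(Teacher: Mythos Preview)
Your decomposition into a stochastic piece and a deterministic quadrature piece is correct and leads to the result, but it is organized differently from the paper's argument. The paper does not pass through $\lambda_i(\E{\bA})$. Instead it quotes the statement of \cite{CCH20}, Theorem~2.4, as a black box: that result already gives
\[
\E{\lambda_i(\bA_{\mu_{\rho_n f}})}=\lambda_i(\bm B)+\mathcal{O}\bigl(\sqrt{\rho_n}+1/(n\rho_n)\bigr),
\]
where $\bm B$ is the explicit $c\times c$ matrix with entries $b_{j,l}=\sqrt{\theta_j\theta_l}\,n\rho_n\,\bm e_j^T\bm e_l+\mathcal{O}(1/(n\rho_n))$ built from the discretized eigenfunctions $\bm e_j(k)=n^{-1/2}r_j(k/n)$. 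The paper's only work is then a Riemann--sum estimate on the inner products $\bm e_j^T\bm e_l$: orthogonality of the $r_j$ forces the off-diagonal entries of $\bm B$ to be $\mathcal{O}(\rho_n)$, normalization forces the diagonal entries to be $\theta_l n\rho_n+\mathcal{O}(\rho_n)$, and one Weyl--Lidskii application to the $c\times c$ matrix finishes. So where you do Nystr\"om quadrature on the $n\times n$ matrix $\E{\bA}$, the paper does the analogous Riemann-sum step on the $c\times c$ matrix $\bm B$; both are the same ``piecewise Lipschitz $\Rightarrow\mathcal{O}(1/n)$'' observation, just applied to different objects.

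One cautionary remark: your heuristic that the second-order resolvent correction is ``proportional to $\|\bm M\|_{\mathrm{op}}^2/(n\rho_n)$'' only yields an $\mathcal{O}(1)$ bias, not $\mathcal{O}(\sqrt{\rho_n})$, and the phrase ``propagated back through the normalization'' does not repair this. Getting the genuine $\mathcal{O}(\sqrt{\rho_n})$ bound on $\E{\lambda_i(\bA)}-\lambda_i(\E{\bA})$ (equivalently on $\E{\lambda_i(\bA)}-\lambda_i(\bm B)$) requires the finer variance and higher-moment control carried out inside \cite{CCH20}; you are right to defer to that reference for the hard step, but the back-of-the-envelope justification you sketch is not sharp enough on its own.
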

\begin{proof}
  The proof is in \ref{app:SpecSimLargeGraphs}.
\end{proof}

Recall that our goal is to approximately compute $G_N^*$ by finding a stochastic block model kernel $f$ and determining the sample mean of graphs distributed according to $\mu_{\rho_n f}$. To determine the canonical stochastic block model kernel, Theorems \ref{thm:GeomEigsFM} - \ref{thm:EstLargeEigs} show that we may align $n \rho_n \lambda_i(L_f)$ with the sample mean spectrum $\frac 1 N \sum_{k=1}^N \sigma_c(\bA^{(k)}$. We therefore search for an $f$ that solves the following minimization problem,
\begin{align}
  f^* = \underset{f \in \mathcal{F}}{\text{argmin }} \sum_{i=1}^c \left| n \rho_n \lambda_i(L_f) - \frac 1 N \sum_{k=1}^N \lambda_i(\bA^{(k)})\right|^2. \label{eqn:DetF}
\end{align}

Given the canonical stochastic block model kernel $f$ that solves equation (\ref{eqn:DetF}), Theorem \ref{thm:HighProbStat} shows a method of estimating the sample Fr\'echet mean of graphs distributed iid according to $\mu_{\rho_n f}$ by sampling from $\mu_{\rho_n f}$.

Let $\lbrace \tilde G^{(k)} \rbrace_{k = 1}^{\tilde N}$ be a sample of graphs distributed according to $\mu_{\rho_n f}$ where $f$ is the canonical stochastic block model kernel. Define the set mean graph by
\begin{equation}
  \widehat{G}_{\tilde N,\mu_{\rho_n f}}^* = \argmin{\tilde G \in \lbrace \tilde{G}^{(k)} \rbrace_{k = 1}^{\tilde N}} \frac{1}{\tilde N} \sum_{k = 1}^{\tilde N} d_{A_c}^2(\tilde G,\tilde{G}^{(k)}) 
\end{equation}
with adjacency matrix $\hat{\bA}_{\tilde N,\mu_{\rho_n f}}^*$.
\begin{theorem}[Convergence in probability of the truncated spectrum of the set mean graph]\hfill \\
  \label{thm:HighProbStat}
  $\forall \epsilon > 0$, 
  \begin{equation}
    \lim_{n \to \infty} P(||\sigma_c(\hat{\bA}_{\tilde N, \mu_{\rho_n f}}^*) - \E{\sigma_c(\bA_{\mu_{\rho_n f}})}||_2 > \epsilon) = 0. \label{eqn:HighProbClose}
  \end{equation}
\end{theorem}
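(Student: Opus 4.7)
The plan is to exploit the fact that the set mean graph $\widehat{G}_{\tilde N, \mu_{\rho_n f}}^*$ is, by construction, one element of the sample $\{\tilde G^{(k)}\}_{k=1}^{\tilde N}$. Consequently,
\begin{equation*}
\|\sigma_c(\hat{\bA}_{\tilde N, \mu_{\rho_n f}}^*) - \E{\sigma_c(\bA_{\mu_{\rho_n f}})}\|_2 \leq \max_{1 \leq k \leq \tilde N} \|\sigma_c(\tilde \bA^{(k)}) - \E{\sigma_c(\bA_{\mu_{\rho_n f}})}\|_2,
\end{equation*}
and since $\tilde N$ is fixed while $n \to \infty$, a union bound over the $\tilde N$ iid samples reduces the theorem to a single-graph concentration statement: for a single realization $\bA_{\mu_{\rho_n f}}$, one must show that $\|\sigma_c(\bA_{\mu_{\rho_n f}}) - \E{\sigma_c(\bA_{\mu_{\rho_n f}})}\|_2 \to 0$ in probability as $n \to \infty$.

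The single-graph step is where the density assumption $n^{-2/3} \ll \rho_n \ll 1$ and the finite rank $c$ of $L_f$ play the essential role, and the strategy mirrors the random matrix arguments underlying Theorem \ref{thm:EstLargeEigs}. I would decompose $\bA = \E{\bA} + (\bA - \E{\bA})$: the signal $\E{\bA}$ has rank $c$ with top eigenvalues of order $n\rho_n\lambda_i(L_f)$, while the centered matrix $\bA - \E{\bA}$ is a symmetric matrix of independent mean-zero sub-Bernoulli entries, hence of operator norm $\mathcal{O}(\sqrt{n\rho_n})$ with high probability, by Chung--Radcliffe / Vu-type concentration valid under our density condition. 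Because the spectral gap $n\rho_n \gg \sqrt{n\rho_n}$ diverges, a perturbation argument on the isolated top $c$ eigenvalues (of Benaych-Georges--Knowles type, or via Kato--Rellich on the resolvent restricted to the top-$c$ spectral projector) shows that each of the top $c$ eigenvalues of $\bA$ fluctuates around its mean on a scale that vanishes with $n$, rather than on the naive Weyl scale $\mathcal{O}(\sqrt{n\rho_n})$. Summing the $c$ (fixed number of) squared deviations gives $\|\sigma_c(\bA) - \E{\sigma_c(\bA)}\|_2 \to 0$ in probability, and composing with the union bound above yields the claim.

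The main obstacle is precisely the passage from the naive Weyl bound $\mathcal{O}(\sqrt{n\rho_n})$, which does \emph{not} vanish and therefore cannot establish convergence to zero, to the sharper $o(1)$ fluctuation bound for the top $c$ \emph{isolated} eigenvalues. Handling this requires leveraging the spectral gap between the signal eigenvalues and the bulk, which is exactly the structure guaranteed by the finite rank of $L_f$ together with the lower bound on the edge density; the rest of the argument, namely the union bound and the reduction via ``the set mean is one of the samples'', is routine.
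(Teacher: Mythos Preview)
Your proposal is correct and shares the paper's overall skeleton: first establish single-graph concentration of $\sigma_c(\bA_{\mu_{\rho_n f}})$ around $\E{\sigma_c(\bA_{\mu_{\rho_n f}})}$, then use that the set mean graph is one of the $\tilde N$ samples. The difference lies entirely in how the single-graph step is obtained. The paper does not rebuild the fluctuation bound from operator-norm concentration plus a spectral-gap perturbation argument as you outline; instead it invokes Theorem~\ref{thm:ConvDist} (the Chakrabarty--Chakraborty--Hazra CLT, already quoted earlier in the appendix), which says that $\rho_n^{-1/2}\bigl(\sigma_c(\bA_{\mu_{\rho_n f}}) - \E{\sigma_c(\bA_{\mu_{\rho_n f}})}\bigr)$ converges in distribution to a centered Gaussian. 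This immediately pins the fluctuations at order $\sqrt{\rho_n}\to 0$, so convergence in probability follows by a short $\epsilon$--$\eta$ argument. Your route would work, but it re-derives from first principles a weaker form of a result the paper already has as a black box; in particular, the obstacle you flag (beating the naive Weyl scale $\sqrt{n\rho_n}$) simply does not arise in the paper's argument because the CLT encapsulates the sharp scale directly. On the final reduction, your explicit union bound over the $\tilde N$ samples is, if anything, slightly cleaner than the paper's version, which just observes $\widehat{\bA}^*_{\tilde N,\mu_{\rho_n f}} = \bA^{(k_0)}$ for some (random) $k_0$ and transfers the single-graph bound.
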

\begin{proof}
  The proof is in \ref{app:HighProbStat}.
\end{proof}

Due to the convergence in distribution to a multivariate normal of the eigenvalues of adjacency matrices from the stochastic block model, (see Theorem 2.3 in \cite{CCH20} and Corollary 1 in \cite{ACT21}) we observe that a relatively small size of $\tilde N$ is needed in the finite graph case. In our experiments in section \ref{sec:Exp} we take $\tilde N = 5$.

The practical significance of our theoretical analysis is the invention of an algorithm to approximate the solution to the sample Fr\'echet mean problem, equation (\ref{eqn:EFM}), which we give the pseudo-code for below. 

\subsection{Summary and Algorithm}
Given a finite sample of graphs $\lbrace \Gk \rbrace_{k=1}^N$, our theory allows us to estimate the sample Fr\'echet mean graph, $G_N^*$, by solving an approximate problem in two steps:
\begin{enumerate}
\item Identify the correct canonical stochastic block model kernel $f$ by solving equation (\ref{eqn:DetF}).
\item Estimate $G_{\tilde N, \mu_{\rho_n f}}^*$ using Theorem \ref{thm:HighProbStat} taking $\tilde N$ as large as desired.
\end{enumerate}
A notable first step is to estimate $c$, the number of eigenvalues of $G_N^*$ to consider. We suggest estimating $c$ as per Alg. \ref{alg:DetC} and use this estimate in Alg. \ref{alg:DetFM_C}.

\begin{algorithm}[H]
  \caption{Determine $c$ for the approximate sample Fr\'echet mean}
  \label{alg:DetC}
  \begin{algorithmic}[1]
    \Require Set of graphs, $M = \lbrace \Gk \rbrace_{k=1}^N$, and integer $K$
    \State Compute the geometric average spectrum of graphs in $M$ as $\bar{\bm \lambda} = \frac 1 N \sum_{k=1}^N \blamb^{(k)}$.
    \State Initialize $i = 0$.
    \State \textbf{Do}
    \State \hspace{0.5cm} $i=i+1$
    \State \hspace{0.5cm} Initialize $r = \bar{\bm \lambda}(i)$
    \State \hspace{0.5cm} Initialize the semi-circle probability density function (see e.g. \cite{ACK15}), as $s(\lambda;r)$ where $r$ is the radius.
    \State \hspace{0.5cm} Assume $\bar{\bm \lambda}(j) \sim s(\lambda;r)$ for $j = i,...,n$.
    \State \hspace{0.5cm} Determine the PDF of the $K$ largest order statistics with a sample size $n-i$, $\lambda_{(n-i)},...,\lambda_{(n-i-K+1)}$
    \State \hspace{0.5cm} Compute the expected value of the $K$ largest order statistics from the PDF $s(\lambda;r)$ with a sample size of $n-i$.
    \State \hspace{0.5cm} With sample size $n-i$, compute the standard deviation of the $K$ largest order statistics, $\sigma_{n-i},...,\sigma_{n-i-K+1}$\\
    \textbf{While} $\vert \bm \bar{\lambda}(1+i) - \E{\lambda_{(n-i)}}\vert >\sigma_{n-i} \lor ... \lor \vert \bm \bar{\lambda}(K+i) - \E{\lambda_{(n-i-K)}}\vert >\sigma_{n-i-K+1}$
    \State \textbf{Return:} $c = i-1$
  \end{algorithmic}
\end{algorithm}
Alg. \ref{alg:DetC} assumes that all but the $c$ largest eigenvalues in the vector $\bar{\bm \lambda}$ follow a bulk distribution given by the semi-circle law (see e.g. \cite{ABH16, ACK15, EYY12} and references therein). We determine the edge of the bulk iteratively by assuming the edge is defined by the largest observed eigenvalue and compute whether the next $K$ sequential eigenvalues are within a standard deviation of their expected value. Upon termination, the number of eigenvalues left outside the bulk determines our choice for $c$. Note that any estimate of $c$ will suffice and the algorithm above is a suggestion. We make use of this estimate for $c$ in the following algorithm.

\begin{algorithm}[H]
  \caption{Approximate sample Fr\'echet mean}
  \label{alg:DetFM_C}
  \begin{algorithmic}[1]
    \Require Set of graphs, $M = \lbrace \Gk \rbrace_{k = 1}^N$
    \State Compute the average density $\bar{\rho}_n$ of the graphs in $M$
    \State Approximate $c$ via Alg. \ref{alg:DetC} and determine $\bs$ (see Remark \ref{rem:Geom}).
    \State For each $i = 1,...,c$ compute $\bar{\lambda}_i= \frac 1 N \sum_{k=1}^N \lambda_i(\bA^{(k)}).$
    \State Randomly initialize $\bm p$
    \State Initialize $\bQ = (q_{ij})$ such that $q_{ij} = q$ for all $i,j$ and enforce $||f(x,y; \bp, \bQ, \bs)||_1 = 1$
    \While{Relative change in $\bm p$ and $q$ is large}
    \State {Estimate the gradient of $\sum_{i=1}^c \left|n \bar{\rho}_n \lambda_i(L_f) - \bar{\lambda}_i\right|^2$ via centered differences.}
    \State{Update $\bm p$ via a projected gradient descent step}
    \State {Update $q$ such that $||f(x,y; \bp, \bQ, \bs)||_1 = 1$}
    \EndWhile
    \State {Estimate $G_{\tilde N,\mu_{\rho_n f}}^*$ as $\widehat{G}_{\tilde N,\mu_{\rho_n f}}^*$ (see Theorem \ref{thm:HighProbStat})}
    \State{\textbf{Return: } $\widehat{G}_{\tilde N,\mu_{\rho_n f}}^*$.}
  \end{algorithmic}
\end{algorithm}
The founding idea for Alg. \ref{alg:DetFM_C} is the following: Any graph $G$ can be expressed as the Fr\'echet mean of some probability measure. For large graphs, our theory shows we may search for a kernel probability measure, $\mu_{\rho_n f}$, by aligning the eigenvalues of $L_f$ (Steps 6 - 10), and then estimating the Fr\'echet mean of $\mu_{\rho_n f}$ using the set mean graph (Step 11).
\begin{Remark}
  Corollary \ref{cor:ApproxEmpFM} shows the existence of a canonical stochastic block model kernel, $f$. In our algorithm we choose to seek for a kernel $f$ with $||f||_1 =1$ so that the expected density of graphs drawn from $\mu_{\bar{\rho}_n f}$ is $\bar{\rho}_n$.
\end{Remark}

\subsection{Computational complexity of Algorithm \ref{alg:DetFM_C}, the numerical estimation of the sample Fr\'echet mean}
Step 11, determining $\widehat{G}_{\tilde N, \mu_{\rho_n f}}^*$, is the most computationally expensive with a time complexity of $\cO(\tilde N n^2 c^3)$. This time complexity is because we generate $\tilde N$ graphs on $n$ vertices and compute the $c$ largest eigenvalues of each to determine the most central element, $\widehat{G}_{\tilde N, \SBM}^*$.

\begin{Remark}
  \label{rem:SmallN}
  It should be noted that for large enough $n$, taking $\tilde N = 1$ provides a sufficient estimate and the computational time for step 11 is reduced to $\cO(n^2)$.
\end{Remark}

\section{Experimental Validation.}
\label{sec:Exp}
\subsection{Assessment, Validation, and Comparison}
A brute force computation of the Fr\'echet mean or median graph based on the adjacency spectral pseudo-distance is
unrealistic (it requires about $\Omega\left(n^22^{n^2}\right)$ operations for graphs of size $n$), and we therefore do
not provide a ground truth in our experiments (see section \ref{sec:Exp}). One may consider comparing the Fr\'echet mean
computed here to a Fr\'echet mean computed with respect to the edit distance for which several optimization algorithms
have been proposed (e.g., \cite{bardaji10,boria19,ferrer09,JO09,JMB01}). While this comparison may be feasible, it is
uninformative as the Fr\'echet mean with respect to the edit distance need not have any resemblance to the Fr\'echet
mean with respect to $d_{A_c}$.

All the code and data is provided at \url{https://github.com/dafe0926/approx_Graph_Frechet_Mean}. To the best of our
knowledge, this study provides the first algorithm to compute the sample Fr\'echet mean for a dataset of graphs when
considering a spectral distance, as a consequence, we have no baselines to compare our results with.

\subsection{Choice of the Datasets}
Graph-valued databases have recently been created and made available publicly
\cite{riesen08,IAMdatabase,morris20,TUdatasets}. These databases are designed for the evaluation of machine learning
algorithms (e.g., classification, regression, etc.) and the mean (or median) for each class is not provided (even for
the edit distance). Consequently, we believe that computing the Fr\'echet mean of these graphs ensembles provide little
scientific value for the purpose of validating our method.

Instead we present results of experiments conducted on synthetic datasets that are generated using ensembles of random
graphs. Ensembles of random graphs capture prototypical features of existing real world networks. Because our
theoretical analysis and associated algorithms rely on the (fixed community size) stochastic block model graphs as the
``atoms'' that are used to approximate any Fr\'echet mean, we expect that our algorithm will perform well when computing
the Fr\'echet mean of graphs generated by stochastic block models. Our experimental investigation is therefore concerned
with the performance of our approach in scenarios where the families of graph ensembles exhibit structural features that
are very different from those of the stochastic block models with  fixed community size.

We illustrate the theoretical analysis of the previous sections with experimental results using various synthetic datasets of
graphs. Each data set consists of $N = 50$ graphs on $n = 600$ nodes. We consider three different iid data sets of graphs, $M_1,...,M_3$, drawn from distributions $\mu_1,...,\mu_3$ respectively. The distributions have the following high level descriptions. 
\begin{center}
  \begin{tabular}{ l l}
    $\mu_1$: &Barabasi-Albert\\
    $\mu_2$: &Small world\\
    $\mu_3$: &Variable community size stochastic block model
  \end{tabular}
\end{center}
Note that $\mu_1$ and $\mu_2$ induce graphs with vastly different topologies than those generated by stochastic block models and yet we are still able to provide good approximations of the sample Fr\'echet mean. Within each subsection we discuss the specific parameters for each distribution when applicable. For each dataset, we determine the parameters of the stochastic block model whose sample Fr\'echet mean is close to the sample Fr\'echet mean of each dataset, $M_i$, and compute $\widehat G_{\tilde N, \mu_{\rho_n f}}^*$. 

\subsection{Barabasi-Albert approximate sample Fr\'echet mean} 
The probability measure in this section is associated with a Barabasi-Albert ensemble. The initial graph is fully connected on $m_0 = 5$ nodes and $m = 5$ edges were added at each step. In Fig. \ref{fig:baFM} we reorder the nodes based on their degree for the Barabasi-Albert graph to get a better visual understanding of the similarities between an observed graph and the approximate sample Fr\'echet mean. Our estimate for the number of communities is $c =12$.

Fig. \ref{fig:baFM} is a visual depiction of a graph from $M_1$ compared to the approximate sample Fr\'echet mean graph $\widehat G_{\tilde N, \mu_{\rho_n f}}^*$ though we note that there need not be any visual similarity between a graph in $M_1$ and $\widehat{G}_{\tilde N, \mu_{\rho_n f}}^*$ since any observation from a distribution $\mu$ need not be similar to the mean of $\mu$.

\begin{figure}[H]
  \centerline{
    \includegraphics[clip, trim = 0cm 0cm 0cm 0cm, width=\textwidth, height = 0.42\textwidth]{./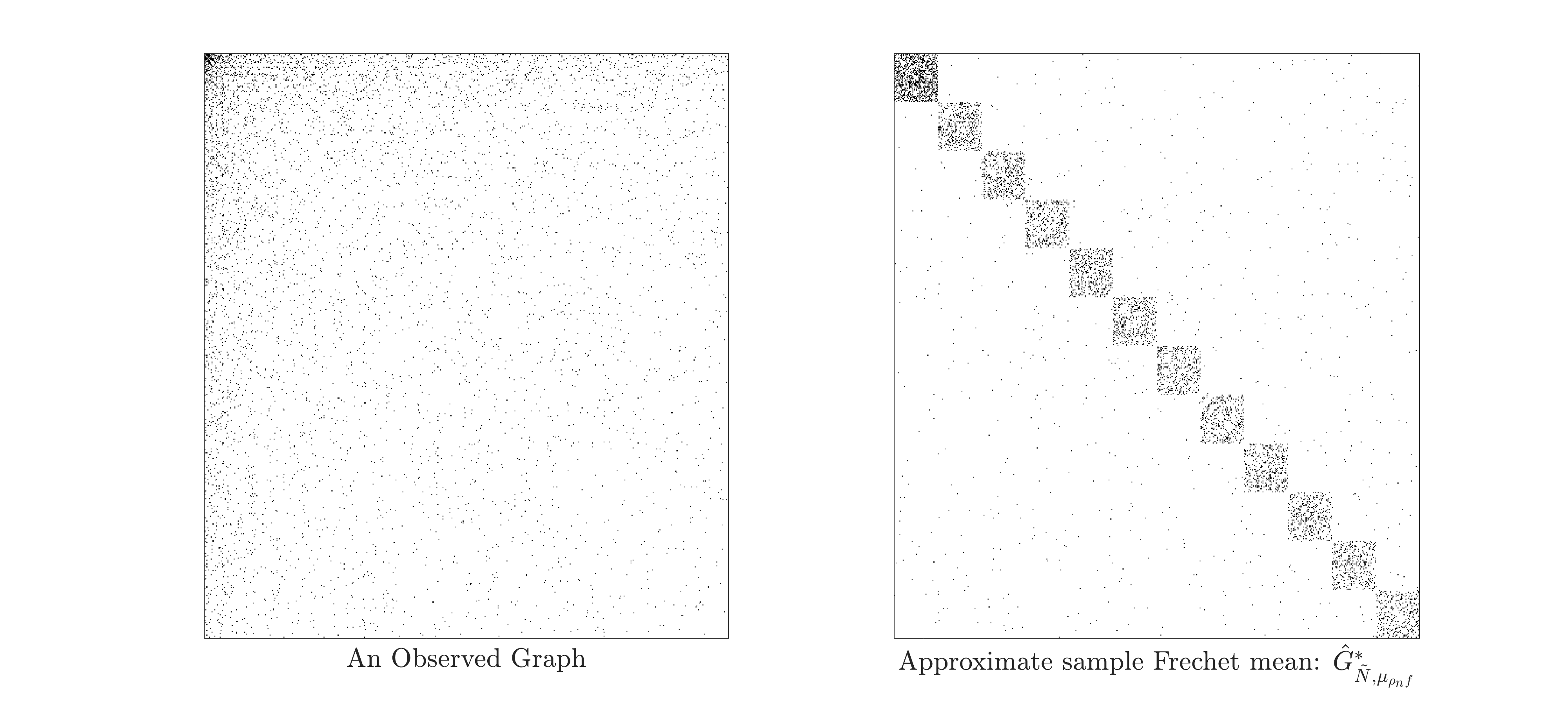}}
  \caption{\textit{Visualization of a graph in $M_1$ and the approximate sample Fr\'echet mean of $M_1$, $\hat G_{\tilde N, \mu_{\rho_n f}}^*$}}
  \label{fig:baFM}
\end{figure}
\begin{figure}[H]
  \centerline{
    \includegraphics[clip, trim = 0cm 0cm 0cm 0cm, width=\textwidth, height = 0.42\textwidth]{./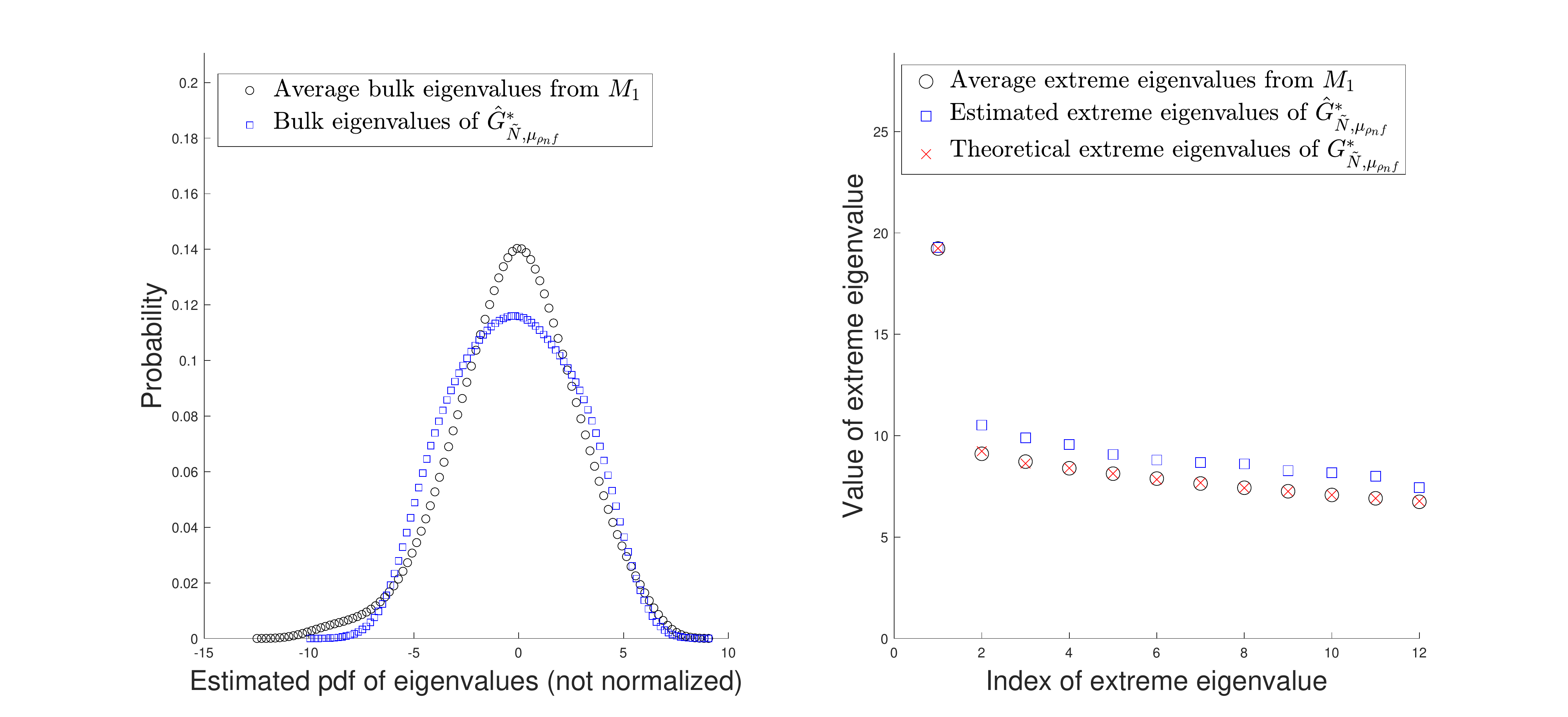}}
  \caption{\textit{\textbf{Left}: The average distribution of bulk eigenvalues from $M_1$ (black). The distribution of bulk eigenvalues of the approximate sample Fr\'echet mean $\widehat G_{\tilde N, \mu_{\rho_n f}}^*$ (blue). \textbf{Right}: The average extreme eigenvalues from $M_1$ (black). The expected extreme eigenvalues of $G_{\tilde N, \mu_{\rho_n f}}^*$ (red). The extreme eigenvalues of $\widehat G_{\tilde N, \mu_{\rho_n f}^*}$ (blue).}}
  \label{fig:baHist}
\end{figure}
Fig. \ref{fig:baHist} depicts the alignment of the spectra from the approximate sample Fr\'echet mean with that of the average
spectra of the graphs from set $M_1$. Note that the misalignment in the largest eigenvalues is due to the finite graph approximation.

The theory presented in Section \ref{sec:Approx} only ensures that the $c$ largest eigenvalues can be well approximated. In Fig. \ref{fig:baHist}, we see that the expected eigenvalues, (red markers), are near perfect estimates of the average extreme eigenvalues. However there is a notable distance between the extreme eigenvalues of $\widehat G_{\tilde N, \mu_{\rho_n f}}^*$, (blue markers), and the expected eigenvalues (red markers). This distance is determined primarily by the size of the graph $n$ and as $n$ increases this distance will decay like $\mathcal{O}(\sqrt{\rho_n})$ per Theorem \ref{thm:EstLargeEigs}.
\subsection{Small World approximate sample Fr\'echet mean}
The parameters for the Small World ensemble are the number of connected nearest neighbors, $K = 22$, and the probability of rewiring, $\beta = 0.7$.

Here we see a nice similarity between the adjacency matrices of the two graphs (see Fig. \ref{fig:swFM}). Furthermore Fig. \ref{fig:swHist} demonstrates the striking spectral similarity between the two graphs, both in the extreme eigenvalues and in the bulk eigenvalues.

The alignment of the bulk eigenvalues from the observed set of graphs and the bulk eigenvalues of $\widehat G_{\tilde N, \mu_{\rho_n f}}^*$ showcases that the graph structure may be entirely determined by the $c$ largest eigenvalues. This has been well know to be true of stochastic block models and in Fig. \ref{fig:swHist} we see evidence that the Small World ensemble might also be characterized by its largest eigenvalues.
\begin{figure}[H]
  \centerline{
    \includegraphics[clip, trim = 0cm 0cm 0cm 0cm, width=\textwidth, height = 0.42\textwidth]{./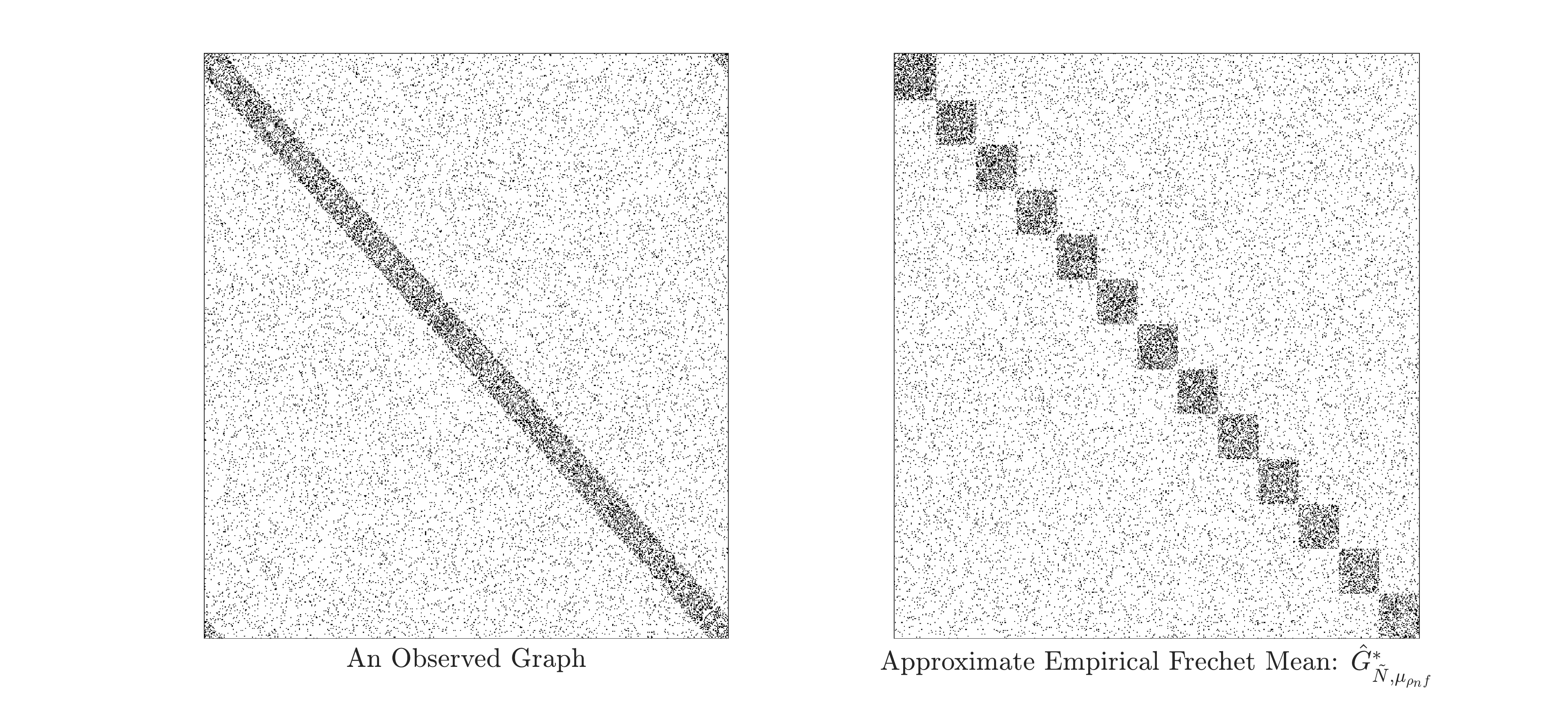}}
  \caption{\textit{Visualization of a graph in $M_2$ and the approximate sample Fr\'echet mean of $M_2$, $\hat G_{\tilde N, \mu_{\rho_n f}}^*$}}
  \label{fig:swFM}
\end{figure}
\begin{figure}[H]
  \centerline{
    \includegraphics[clip, trim = 0cm 0cm 0cm 0cm, width=\textwidth, height = 0.42\textwidth]{./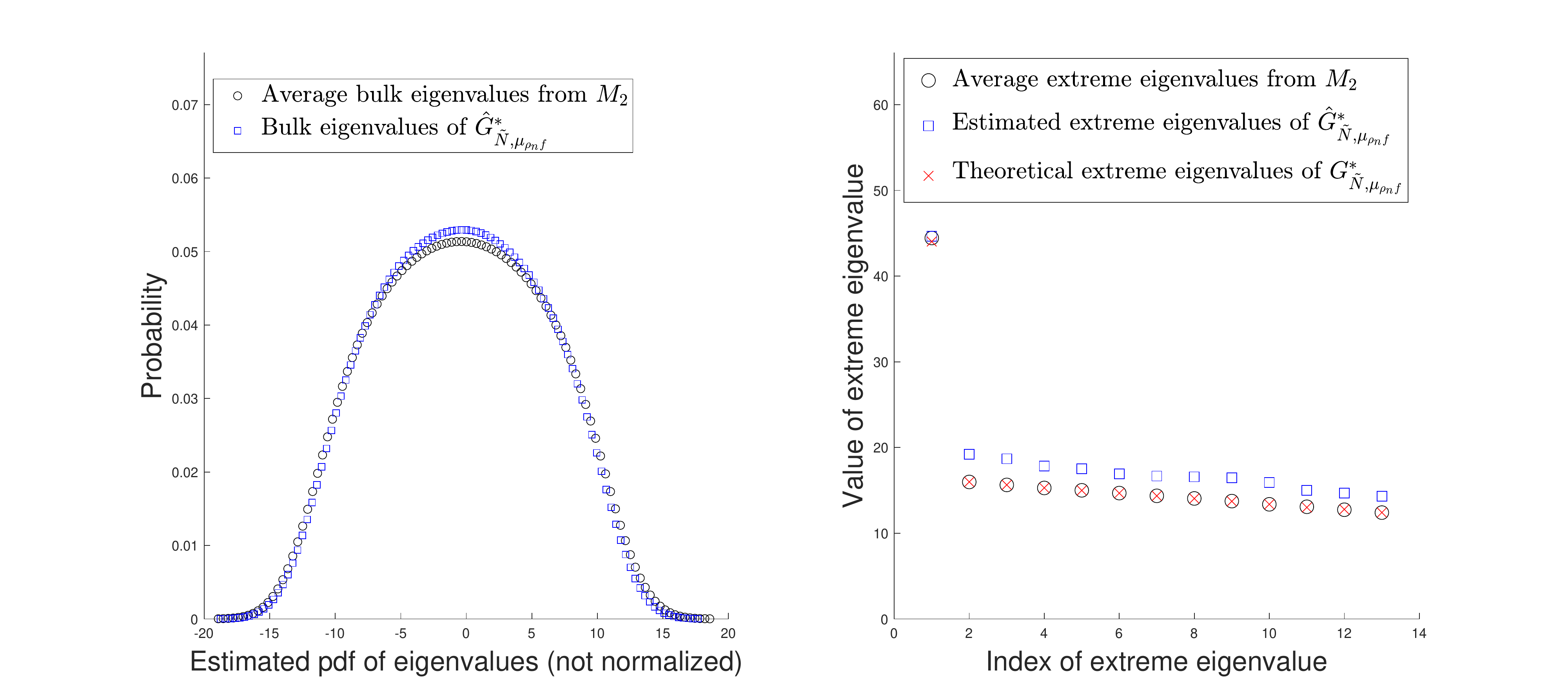}}
  \caption{\textit{\textbf{Left}: The average distribution of bulk eigenvalues from $M_2$ (black). The distribution of bulk eigenvalues of the approximate sample Fr\'echet mean $\widehat G_{\tilde N,\mu_{\rho_n f}}^*$ (blue). \textbf{Right}: The average extreme eigenvalues from $M_2$ (black). The expected extreme eigenvalues of $G_{\tilde N, \mu_{\rho_n f}}^*$ (red). The extreme eigenvalues of $\widehat G_{\tilde N,\mu_{\rho_n f}}^*$ (blue).}}
  \label{fig:swHist}
\end{figure}
\subsection{Variable community size approximate sample Fr\'echet mean} 
\label{subsec:VarComSize}
The probability measure in this section is associated with a variable community sized stochastic block model. The parameters for the stochastic block model are  $\bp = [0.4, 0.5, 0.6, 0.3, 0.37, 0.65,0,...]$, $Q_{ij} = 0.08$ for all $i \neq j$, $\bs = [\frac{160}{600}, \frac{100}{600}, \frac{60}{600}, \frac{120}{600}, \frac{85}{600}, \frac{75}{600},0,...]$. Fig. \ref{fig:varcomFM} again depicts a visual comparison between a graph from $M_3$ and the approximate sample Fr\'echet mean graph $\widehat G_{\tilde N,\mu_{\rho_n f}}^*$.

In spite of the visual difference between the adjacency matrices of a graph from the set $M_3$ and the graph $\widehat G_{\tilde N,\mu_{\rho_n f}}^*$ respectively (see Fig. \ref{fig:varcomFM}), we again see a striking similarity between the eigenvalues (see Fig. \ref{fig:varcomHist}).
\begin{figure}[H]
  \centerline{
    \includegraphics[clip, trim = 0cm 0cm 0cm 0cm, width=\textwidth, height = 0.42\textwidth]{./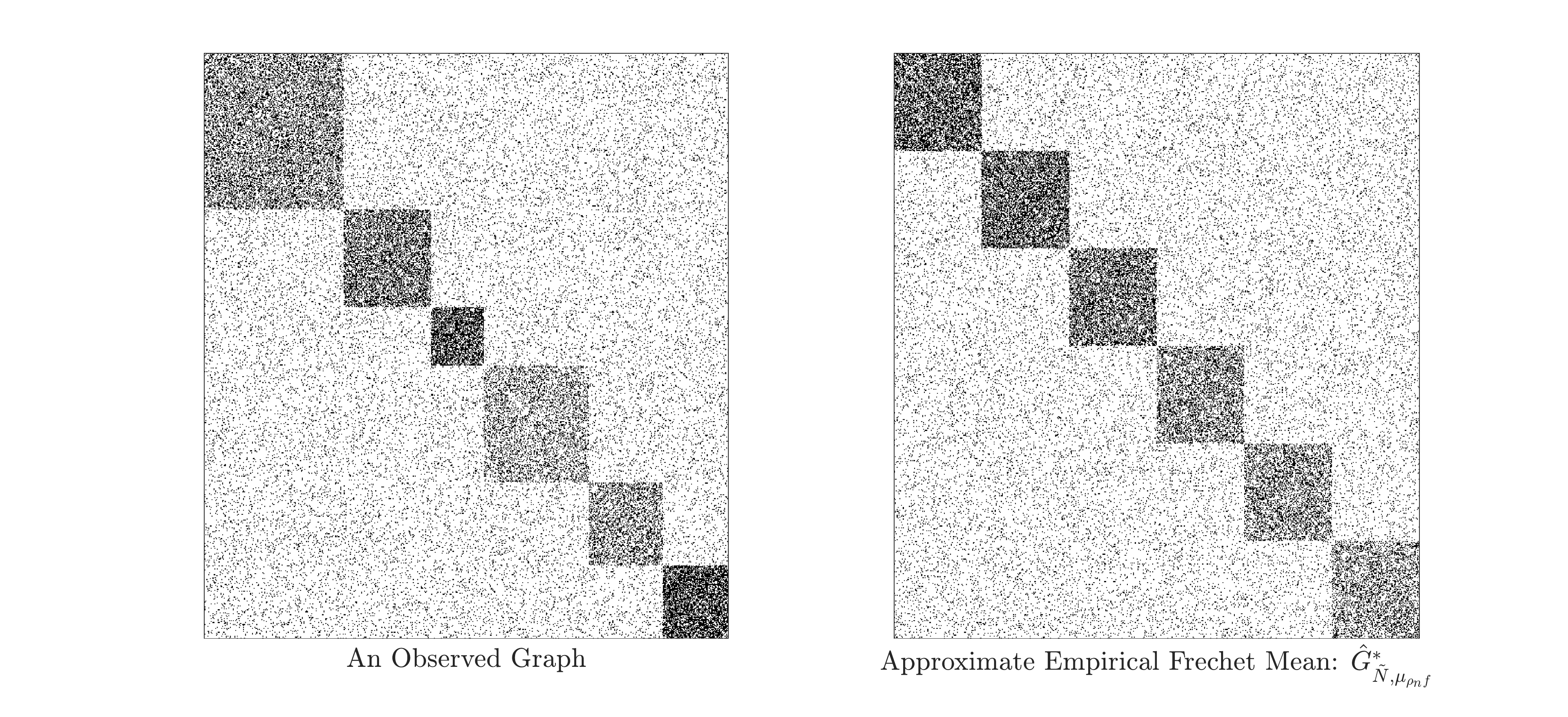}}
  \caption{\textit{Visualization of a graph in $M_2$ and the approximate sample Fr\'echet mean of $M_2$, $\hat G_{\tilde N,\mu_{\rho_n f}}^*$}}
  \label{fig:varcomFM}
\end{figure}
\begin{figure}[H]
  \centerline{
    \includegraphics[clip, trim = 0cm 0cm 0cm 0cm, width=\textwidth, height = 0.42\textwidth]{./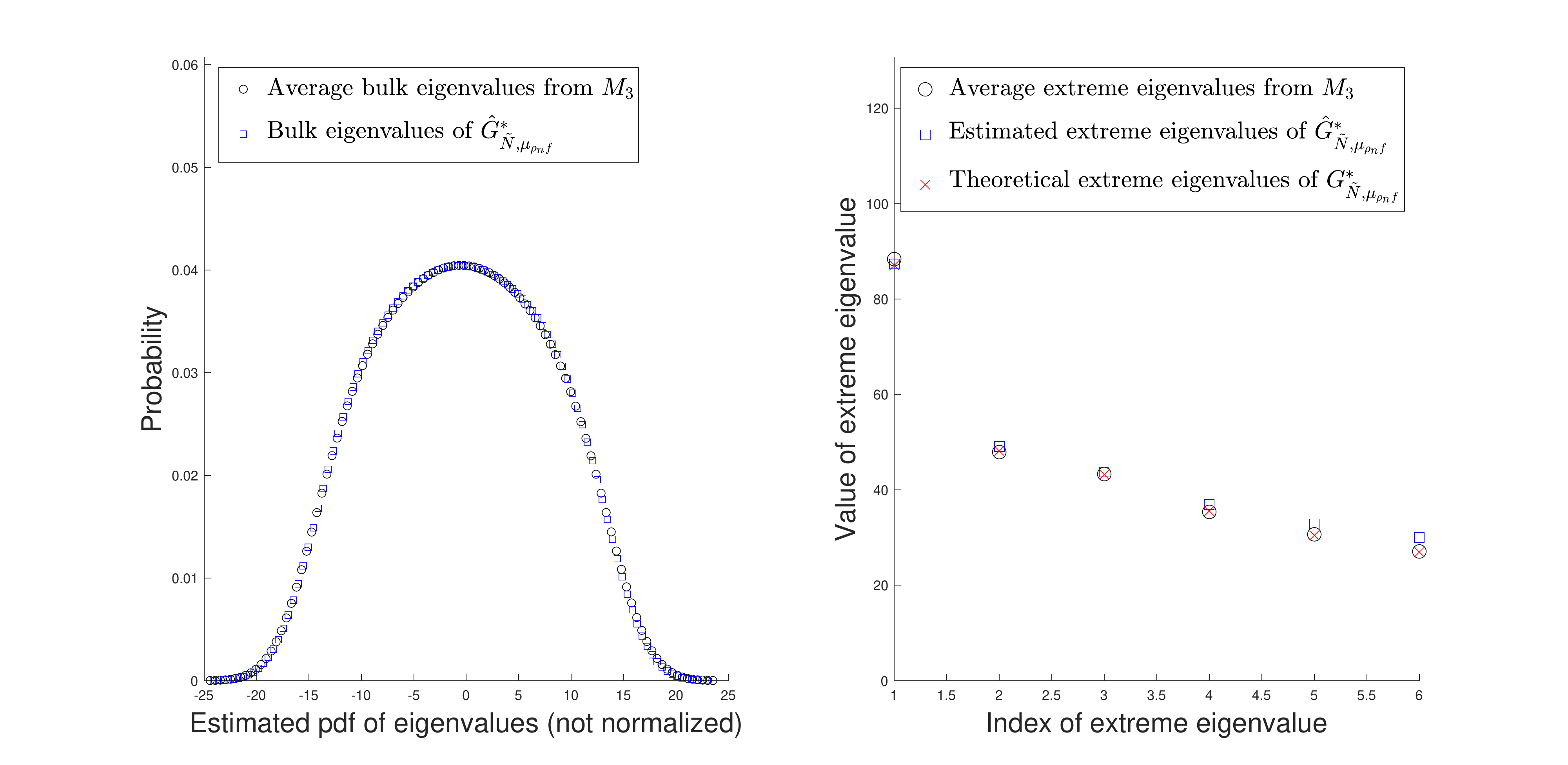}}
  \caption{\textit{\textbf{Left}: The average distribution of bulk eigenvalues from $M_3$ (black). The distribution of bulk eigenvalues of the approximate sample Fr\'echet mean $\hat G_{\tilde N,\mu_{\rho_n f}}^*$ (blue). \textbf{Right}: The average extreme eigenvalues from $M_3$ (black). The expected extreme eigenvalues of $G_{\tilde N, \mu_{\rho_n f}}^*$ (red). The extreme eigenvalues of $\hat G_{\tilde N,\mu_{\rho_n f}}^*$ (blue).}}
  \label{fig:varcomHist}
\end{figure}

Note the similarity in the spectra between the graphs despite the obvious difference in the geometry vectors for each stochastic block model. It is for this reason precisely that we are allowed to choose the geometry vector when searching for the canonical stochastic block model kernel that solves equation (\ref{eqn:DetF}) as mentioned in remark \ref{rem:Geom}.
\section{Application to Graph Valued Regression}
\label{sec:Reg}
In this section we provide an application of the computation of the sample Fr\'echet mean: the construction of a regression
function in the context where we observe a graph-valued random variable that depends on a real-valued random variable. Our approach is based on the theory developed in \cite{PM19} where we replace the computation of the sample Fr\'echet mean with our algorithm. We briefly recall the framework of \cite{PM19} using our notation. We consider the following scenario. Let $\mu \in \cM\left( \cG \right)$, and let $T$ be a random variable with probability density $\prob{t}[T]$. We consider the 
random variable formed by the pair $G$ and $T$, distributed with the joint distribution formed by the product $\mu \times
\prob{t}[T]$.  We wish to compute the regression function
\begin{equation}
  \E{G \vert T = t}.
\end{equation}
The authors in \cite{PM19} propose to compute the following regression function
\begin{align}
  m(t) = \argmin{G \in \cG}\E{s(T,t)d^2(G,G_{\mu})}[\mu \times \prob{t}[T]], \label{eqn:FR}
\end{align}
where the expectation in \eqref{eqn:FR} is computed jointly over $G_\mu$ distributed according to $\mu$, and $T$, distributed
according to $\prob{t}[T]$, and the bilinear form $s(T,t)$ is defined by
\begin{equation}
  s(T,t) = 1 + (T - \E{T}) \left[ \var{T} \right] ^{-1}(t-\E{T}).
\end{equation}
The bilinear form, $s(T,t)$, plays the role of a kernel, returning the location of $t$ with respect to the location, $\E{T}$, and
scale, $\var{T}$, of $T$. The regression function $m(t)$ returns a kernel estimate of the linear regression function by summing
over all the possible pairs $(G_\mu,T)$.

The sample estimate of equation (\ref{eqn:FR}) is the natural estimate where each unknown term is replaced with the sample alternative as
\begin{align}
  \hat{m}(t) = \argmin{G \in \cG} \sum_{k=1}^N s_{k,N}(t)d^2(G,\Gk) \label{eqn:EFR}\\
  s_{k,N}(t) = 1 + (t_k - \bar{T})\hat{V}(t - \bar{T}).
\end{align}

Here we have used $\bar{T}$ and $\hat{V}$ as the sample estimate of the mean and variance of $T$. The objective in
(\ref{eqn:EFR}) can be interpreted as a weighted sample Fr\'echet mean with weight function $s_{k,N}(t)$. Assume for all $t$ that the graph, $\hat m(t)$, satisfies the conditions for Theorem \ref{thm:SpecSimLargeGraphs}. This implies the existence of a sequence of stochastic block model kernels depending on $t$, $\mu_{\rho_n f;t}$, such that, for sufficiently large $n$,
\begin{equation}
  \lim_{N \to \infty} d_{A_c}(\hat{m}(t), G_{N,\mu_{\rho_n f; t}}^*) < \epsilon \quad a.s. \label{eqn:regApprox}
\end{equation}
where $G_{N,\mu_{\rho_n f; t}}^*$ denotes the sample Fr\'echet mean of $\lbrace \Gk_t \rbrace_{k=1}^N$, an iid sample distributed according to $\mu_{\rho_n f;t}$. For each $t$ we may compute $G_{N,\mu_{\rho_n f; t}}^*$ using Alg. \ref{alg:DetFM_C} as an approximation to the graph $\hat{m}(t)$. 
\subsection{Experimental validation for graph valued regression}
We validate the computation of the regression with numerical simulation. We first generate a synthetic data set of graphs by allowing the parameters of the stochastic block model to vary with time. For simplicity we hold the nonzero entries of $\bQ$ and $\bs$ fixed but allow $\bp$ to vary for $t\in[0,1]$ as
\begin{equation}
  \rho_n \bm p(t) = \begin{bmatrix} 0.1 + 0.1 t\\ 0.2 + 0.15t\\ 0.35 + 0.2t \\ 0 \\ \vdots \end{bmatrix}, \bs(t) = \begin{bmatrix}1/3 \\ 1/3 \\ 1/3 \\ 0 \\ \vdots \end{bmatrix}, \rho_n q = 0.08.
\end{equation}
For $T \sim unif(0,1)$, the distribution over $\cG$ is given as $\mu_{\rho_n f; T}$ where $f(x,y; \bm p (T),\bQ, \bs)$ is a canonical stochastic block model kernel. For each sample from $unif(0,1)$ there is a corresponding sample from the stochastic block model. By construction we know the number of communities in the observed graphs will be constant at $c = 3$ dictating the number of non-zero entries of $\bm p$ we allow to vary when searching for the stochastic block model kernel in equation (\ref{eqn:regApprox}).

We take $n = 600$ and $N = 30$ samples for the sample set $M = \lbrace (t_k, \Gk) \rbrace_{k=1}^{30}$ in the experiment and approximate the value of $\hat{m}(t)$ at six different times. For $t' \in \{0,0.2,0.4,0.6,0.8,1\}$ we compute $\widehat{G}_{\tilde N, \mu_{\rho_n f; t'}}^* $ using Alg. \ref{alg:DetFM_C}. 

In an effort of visualization, since we are unable to plot a graph $G$ on the y-axis, we plot in Fig. \ref{fig:FR-e} the largest three eigenvalues of the adjacency matrices of graphs in $M$ (marked by a $\bullet$) and the largest three eigenvalues of $\widehat{G}_{\tilde N, \mu_{\rho_n f; t'}}^*$ (marked by an $\times$). A vertical line of points in Fig. \ref{fig:FR-e} identifies the largest three eigenvalues of a single graph.
\begin{figure}[H]
  \centering
  \includegraphics[clip, trim = 0cm 0.1cm 0cm 1.1cm, width=\linewidth]{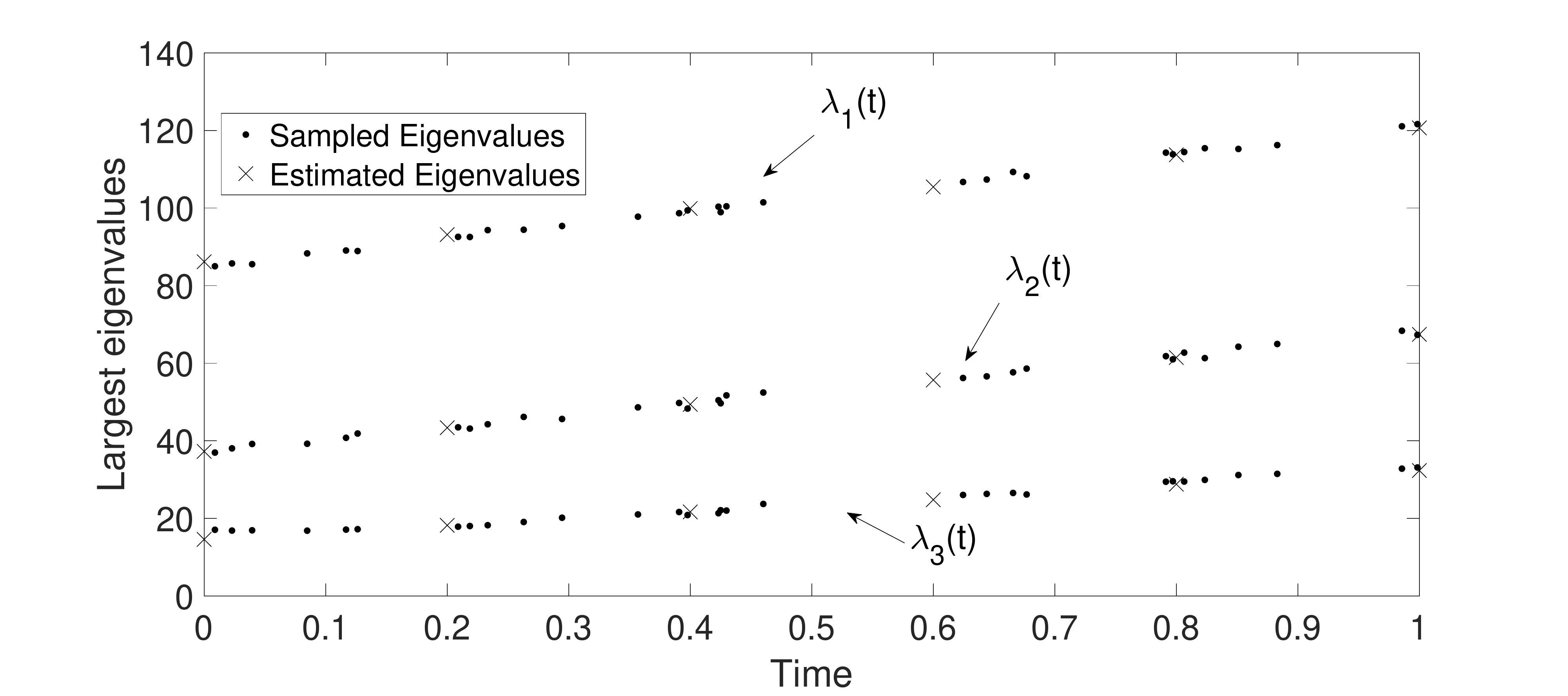}
  \caption{Recovered eigenvalues}
  \label{fig:FR-e}
\end{figure}
The most notable part of Fig. \ref{fig:FR-e} is the construction of a graph that fits the linear regression of each of the largest $c$ eigenvalues simultaneously. To our knowledge, this is the first graph valued linear regression line with respect to a spectral distance. 

\section{Conclusion.}
In the area of statistical analysis of graph-valued data, determining an average graph is a point of priority among
researchers. Throughout this paper, we have shown that when considering the metric $d_{A_c}$ it is possible to
determine an approximation to the sample Fr\'echet mean.

How this approximate sample Fr\'echet mean is utilized is up to the discretion of the researcher. In Section \ref{sec:Reg} we
explore one motivating idea that utilizes the Fr\'echet mean, termed Fr\'echet regression in the work in \cite{PM19}. This is
but one example of the utility of the Fr\'echet mean graph, another interesting application of this graph is to further push the
work in \cite{LOW20} which introduces a centered random graph model to capture the variance of a set of observations around a
mean graph.


\newpage
\appendix
\begin{center}
  \textbf{Appendix}\\
\end{center}

We split the appendix into four sections. \ref{app:Classic} establishes a few classic results that we refer to in our proofs. The proof of our primary contribution, Theorem \ref{thm:SpecSimLargeGraphs}, is contained in \ref{app:SpecSimLargeGraphs}. Within this appendix we also prove Theorem \ref{thm:AlmostSure} and Theorem \ref{thm:EstLargeEigs} since these are necessary results for our proof of Theorem \ref{thm:SpecSimLargeGraphs}. \ref{app:GeomEigsFM} and \ref{app:HighProbStat} are short appendices in which we prove Theorems \ref{thm:GeomEigsFM} and \ref{thm:HighProbStat} respectively.
\begin{center}
  \section{Classic Results}
  \label{app:Classic}
\end{center}
\begin{theorem}[Weyl-Lidskii]\hfill \\
  \label{thm:SpecInc}
  Let $\bm H$ be a self-adjoint operator on a Hilbert space $\mathcal{H}$. Let $\bA$ be a bounded operator on $\mathcal{H}$ Let $\sigma(\bm H)$ and $\sigma(\bm H + \bA)$ denote the spectra of $\bm H$ and $(\bm H + \bA)$ respectively. Then
  \begin{equation}
    \sigma(\bm H + \bA) \subset \left \lbrace \lambda : dist(\lambda,\sigma(\bm H)) \leq ||\bA|| \right \rbrace
  \end{equation}
  where $||\bA||$ denotes the operator norm of $\bA$.
\end{theorem}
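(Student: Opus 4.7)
The plan is to prove the contrapositive. Fix $\lambda \in \mathbb{C}$ with $\mathrm{dist}(\lambda, \sigma(\bm H)) > \|\bA\|$, and I would show $\lambda$ lies in the resolvent set of $\bm H + \bA$, so $\lambda \notin \sigma(\bm H + \bA)$. In particular, every $\lambda \in \sigma(\bm H + \bA)$ must satisfy $\mathrm{dist}(\lambda, \sigma(\bm H)) \leq \|\bA\|$, which is exactly the claim.

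The first step uses the hypothesis that $\bm H$ is self-adjoint. By the spectral theorem for (possibly unbounded) self-adjoint operators, the spectrum of $\bm H$ is a subset of $\mathbb{R}$, and for $\lambda$ outside $\sigma(\bm H)$ the resolvent operator $(\bm H - \lambda I)^{-1}$ is bounded with the identity
\begin{equation}
\|(\bm H - \lambda I)^{-1}\| = \frac{1}{\mathrm{dist}(\lambda, \sigma(\bm H))}.
\end{equation}
This sharp bound is the key consequence of self-adjointness; for non-self-adjoint $\bm H$ only an inequality would hold, and the proof would fail.

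The second step is the factorization
\begin{equation}
\bm H + \bA - \lambda I \;=\; (\bm H - \lambda I)\bigl[\,I + (\bm H - \lambda I)^{-1}\bA\,\bigr].
\end{equation}
From step one, $\|(\bm H - \lambda I)^{-1}\bA\| \leq \|(\bm H - \lambda I)^{-1}\|\,\|\bA\| = \|\bA\|/\mathrm{dist}(\lambda, \sigma(\bm H)) < 1$ by our assumption on $\lambda$. A Neumann series argument then yields that $I + (\bm H - \lambda I)^{-1}\bA$ is boundedly invertible on $\mathcal{H}$. Composing with the invertible $(\bm H - \lambda I)^{-1}$ shows that $\bm H + \bA - \lambda I$ is invertible, so $\lambda$ is in the resolvent set of $\bm H + \bA$, as required.

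The only delicate point I anticipate is the standing assumption on $\bm H$: one has to make sure the resolvent norm identity is applied in the correct operator-theoretic setting (it holds for any self-adjoint operator via the functional calculus, not just for bounded self-adjoint operators). Once that is in place, every other step is an algebraic identity or a standard application of the Neumann series, and no quantitative estimates beyond the triangle inequality for operator norms are needed.
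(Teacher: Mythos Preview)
Your argument is correct: the resolvent identity $\|(\bm H - \lambda I)^{-1}\| = 1/\mathrm{dist}(\lambda,\sigma(\bm H))$ for self-adjoint $\bm H$, combined with a Neumann-series inversion of the factor $I + (\bm H-\lambda I)^{-1}\bA$, is exactly the standard proof of this spectral inclusion. The only minor remark is that in the unbounded case the alternative factorization $\bm H + \bA - \lambda I = \bigl[I + \bA(\bm H-\lambda I)^{-1}\bigr](\bm H-\lambda I)$ makes the domain bookkeeping slightly cleaner (the bounded factor now acts on all of $\mathcal{H}$ after $(\bm H-\lambda I)$ has already mapped $D(\bm H)$ onto $\mathcal{H}$), but your version works too once one checks that $(\bm H-\lambda I)^{-1}\bA$ preserves $D(\bm H)$, which it does since $(\bm H-\lambda I)^{-1}$ maps into $D(\bm H)$.

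As for comparison with the paper: there is nothing to compare. The paper does not prove this result; it simply records it as a classical fact and cites Stewart and Sun's \emph{Matrix Perturbation Theory}. Your write-up therefore supplies strictly more than the paper does. Given that the paper only ever applies the theorem to finite symmetric matrices (the $c\times c$ matrix $\bm B$ in the proof of Theorem~\ref{thm:EstLargeEigs}), your care with the unbounded self-adjoint setting is more generality than is actually needed downstream, but it does no harm.
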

\begin{proof}
  These are standard bounds that can be found in many good books on matrix perturbation theory (e.g., \cite{SS90}).
\end{proof}

\noindent Let $P_n$ be probabilities on the Borel $\sigma$-field of $\R^c$ and suppose $P_n \to P$ weakly.
\begin{theorem}[Finite Dimensional Convergence in Distribution]\hfill \\
  \label{thm:ConvDistEq}
  Let $F_n(\bx):= P_n((-\infty, x_1] \times ... \times (-\infty, x_c])$ and $F(\bx):= P((-\infty, x_1] \times ... \times (-\infty, x_c])$ for any $\bx \in \R^c$. Then $F_n(\bx) \to F(\bx)$ as $n \to \infty$ for every point of continuity $\bx$ of $F(\bx)$.
\end{theorem}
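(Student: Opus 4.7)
The plan is to derive this as a consequence of the Portmanteau characterization of weak convergence, whose relevant part states: $P_n \to P$ weakly if and only if $P_n(A) \to P(A)$ for every Borel set $A$ with $P(\partial A) = 0$. Given a continuity point $\bx$ of $F$, I would apply this to the rectangle
\begin{equation}
  A_\bx = (-\infty,x_1] \times \cdots \times (-\infty,x_c],
\end{equation}
which is closed in $\R^c$, so $\partial A_\bx = A_\bx \setminus A_\bx^\circ$ where $A_\bx^\circ = (-\infty,x_1)\times\cdots\times(-\infty,x_c)$. The entire proof then reduces to verifying that $P(\partial A_\bx)=0$ whenever $\bx$ is a continuity point of $F$.

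To establish this, I would approximate $A_\bx$ from the inside. First I would take a sequence $\by^{(k)}\to\bx$ with $y^{(k)}_j < x_j$ strictly for every coordinate $j$ (for instance $\by^{(k)} = \bx - (1/k)\bm 1$). A short argument shows that $A_{\by^{(k)}} \uparrow A_\bx^\circ$: if $\bz \in A_\bx^\circ$ then $z_j<x_j$ for all $j$, so eventually $z_j \le y^{(k)}_j$; conversely any point in $\bigcup_k A_{\by^{(k)}}$ lies strictly below $\bx$ in each coordinate. By continuity of $P$ along monotone sequences of sets, $F(\by^{(k)}) = P(A_{\by^{(k)}}) \to P(A_\bx^\circ)$. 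The continuity of $F$ at $\bx$ gives $F(\by^{(k)}) \to F(\bx) = P(A_\bx)$, whence $P(A_\bx^\circ) = P(A_\bx)$ and hence $P(\partial A_\bx)=0$.

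Once $P(\partial A_\bx) = 0$ is in hand, the Portmanteau theorem immediately yields
\begin{equation}
  F_n(\bx) = P_n(A_\bx) \longrightarrow P(A_\bx) = F(\bx),
\end{equation}
which is the claimed convergence. I expect the minor technical wrinkle to be the argument showing $A_{\by^{(k)}}\uparrow A_\bx^\circ$ together with the interpretation of continuity of the multivariate cdf $F$ at $\bx$; everything else is a direct appeal to the Portmanteau characterization and standard continuity of probability along monotone limits of sets.
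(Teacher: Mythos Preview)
Your argument is correct: the Portmanteau characterization reduces the claim to checking $P(\partial A_{\bx})=0$, and your inner approximation $A_{\by^{(k)}}\uparrow A_{\bx}^\circ$ together with continuity of $F$ at $\bx$ establishes this cleanly. One small remark: you tacitly use that ``$\bx$ is a continuity point of $F$'' means ordinary continuity of the function $F$ at $\bx$ (so that $F(\by^{(k)})\to F(\bx)$ along any sequence $\by^{(k)}\to\bx$), which is the intended reading here.

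As for comparison with the paper: the paper does not actually give a proof of this statement. It simply records it as a standard equivalence and cites a probability textbook (Bhattacharya--Waymire). So your write-up already supplies strictly more than the paper does; there is no alternative argument in the paper to contrast with yours.
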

\begin{proof}
  This is a standard equivalence for convergence in distribution found in e.g. \cite{BW16}.
\end{proof}

\begin{center}
  \section{Proof of Theorem \ref{thm:SpecSimLargeGraphs}}
  \label{app:SpecSimLargeGraphs}
\end{center}
Theorem \ref{thm:SpecSimLargeGraphs} constitutes the main theoretical contribution of this paper. The proof involves a few steps which we outline below at a high level.

\begin{enumerate}
\item Given $G$ with adjacency matrix $\bA$ we compute $\sigma_c(\bA)$ and show that we may construct a canonical stochastic block model kernel, $f$, where the linear integral operation, $L_f$, defined by 
  \begin{equation}
    L_f(t) = \int_0^1 f(x,y;\bp, \bQ, \bs) t(y) dy
  \end{equation}
  has eigenvalues such that $\lambda_i(L_f) = \frac{\lambda_i(\bA)}{n \rho_n}$. The normalization by $n \rho_n$ is understood because we will be scaling the eigenvalues to that of an $n \times n$ matrix, and the constant $\rho_n$ which is due to the definition of the kernel probability measure (see Definition \ref{def:KernProbMeas}).
\item We then show that for each $1 \leq i \leq c$, $\left|n \rho_n \lambda_i(L_f) - \E{\lambda_i(\bA_{\mu_{\rho_n f}})}\right| = \mathcal{O}(\sqrt{\rho_n})$. Recall that since $\rho_n \to 0$ we will have $\lambda_i(L_f) \to \E{\frac{1}{n \rho_n}\lambda_i(\bA_{\mu_{\rho_n f}})} $.
\item All that is left is to show that for an iid sample of graphs $\lbrace \Gk \rbrace_{k=1}^N$ distributed according to $\mu_{\rho_n f}$, the sample Fr\'echet mean, $G_N^*$, with adjacency matrix $\bA_N^*$, satisfies that $\forall \epsilon > 0$, there exists an $n^* \in \N$ such that $\forall n > n^*$,  
  $$\lim_{N \to \infty} ||\sigma_c(\bA_N^*) - \E{\sigma_c(\bA_{\mu_{\rho_n f}})}||_2 < \epsilon \quad a.s.$$
  The strategy in this step is to show the existence of a different graph $G'$ with adjacency matrix $\bA'$ whose eigenvalues are close to $\E{\sigma_c(\bA_{\mu_{\rho_n f}})}$. The existence of the graph $G'$ will allow us to bound the distance between the eigenvalues of $\bA_N^*$ and $\E{\sigma_c(\bA_{\mu_{\rho_n f}})}$. This is because $G_N^*$ satisfies the minimization procedure in (\ref{eqn:EFM}) which we will show is equivalent to finding the graph closest to $\E{\sigma_c(\bA_{\mu_{\rho_n f}})}$. 
\item The final step is to show 
  $$|\lambda_i(\bA) - \E{\lambda_i(\bA_{\mu_{\rho_n f}})} + \E{\lambda_i(\bA_{\mu_{\rho_n f}})} - \lambda_i(\bA_N^*)| < \epsilon \quad a.s.$$
  for each $1\leq i \leq c$ which comes as a direct consequence of steps 1,2 and 3.
\end{enumerate}
We now proceed with our proof.\\

\noindent \textbf{Step 1: Constructing a stochastic block model kernel} \hfill \\

Let $G \in \cG$ with adjacency matrix $\bA$ such that $n^{-2/3} \ll \rho_n \ll 1$. Assume that 
\begin{equation}
  0 \preccurlyeq \sigma_c(\bA)
\end{equation}
and for every $1 \leq i \neq j \leq c$, $\lambda_i \neq \lambda_j$.
\begin{Lemma}
  \label{lem:SBMCorrectEigs}
  Let $\bm \theta \in \R^c$ such that $0 \preccurlyeq \bm \theta$. Let $\bs$ be a fixed geometry vector for a canonical stochastic block model kernel with $c$ non-zero entries. There exists a canonical stochastic block model kernel $f(x,y;\bp,\bQ,\bs)$ with $\bQ = 0$ defining the integral operator $L_f : L^2([0,1]) \mapsto L^2([0,1])$ by
  \begin{equation}
    L_{f}(t) = \int_0^1 f(x,y;\bp, \bQ, \bs) t(y) dy
  \end{equation}
  which satisfies
  \begin{equation}
    \lambda_i(L_f) = \theta_i
  \end{equation}
\end{Lemma}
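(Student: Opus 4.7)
The plan is to exploit the fact that, since $\bQ = \bm 0$, the kernel $f(x,y;\bp,\bm 0,\bs)$ vanishes off the diagonal blocks and equals $p_i$ on the block $I_i \times I_i$, where $I_i = \bigl[\sum_{k<i} s_k,\, \sum_{k \leq i} s_k\bigr)$ has length $s_i$. For such a block-diagonal kernel I expect the indicator functions $\{\mathbf{1}_{I_i}\}_{i=1}^c$ to form the basis of eigenfunctions of $L_f$ that carries its entire nonzero spectrum, and then I can simply choose $\bp$ to line the eigenvalues up with $\bm\theta$.

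I would carry out three steps in order. First, a direct computation: for $x \in I_j$ we have $(L_f \mathbf{1}_{I_i})(x) = \int_0^1 f(x,y)\mathbf{1}_{I_i}(y)\,dy = p_j \int_{I_i \cap I_j} dy$, which equals $p_i s_i$ if $j=i$ and $0$ otherwise. Hence $L_f \mathbf{1}_{I_i} = p_i s_i \mathbf{1}_{I_i}$, so $\mathbf{1}_{I_i}$ is an eigenfunction with eigenvalue $p_i s_i$. Second, since $(L_f t)(x)$ depends on $t$ only through the $c$ numbers $\int_{I_i} t(y)\,dy$, the range of $L_f$ lies in $\mathrm{span}\{\mathbf{1}_{I_i}\}_{i=1}^c$; therefore $L_f$ has rank at most $c$, its kernel is the orthogonal complement of this span, and the nonzero spectrum of $L_f$ is exactly $\{p_i s_i\}_{i=1}^c$. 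Third, I set $p_i = \theta_i / s_i$ (which is well-defined and nonnegative because $\bs$ has exactly $c$ strictly positive entries and $\bm\theta \succcurlyeq 0$), so that $\{\lambda_i(L_f)\}_{i=1}^c = \{\theta_i\}_{i=1}^c$ as multisets. By relabeling the blocks, or equivalently by matching sorted indices (the $\theta_i$ being presumed in the decreasing order inherited from $\sigma_c(\bA)$ in the calling context), we get $\lambda_i(L_f) = \theta_i$ in the desired enumeration.

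The only real subtlety is ensuring that $\bp$ so defined yields a valid canonical kernel, i.e., that $\rho_n p_i = \rho_n \theta_i/s_i \in [0,1]$; this is a mild condition on $\bm\theta$ that will be comfortably satisfied in every application of the lemma within this paper, where the $\theta_i$ of interest are of order $\lambda_i(\bA)/(n\rho_n) = O(1)$, $s_i$ is bounded below by a positive constant (see Remark \ref{rem:Geom}), and $\rho_n \ll 1$. Beyond this routine verification the proof is essentially bookkeeping once the block structure of $f$ has been read off, so I do not anticipate any serious obstacle.
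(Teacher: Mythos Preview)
Your proposal is correct and follows essentially the same construction as the paper: set $p_i=\theta_i/s_i$ on the diagonal blocks with $\bQ=\bm 0$, and verify that the (normalized or unnormalized) indicators of the blocks are eigenfunctions with eigenvalues $p_i s_i=\theta_i$. If anything, your argument is slightly more complete, since you include the rank-$c$ observation showing these exhaust the nonzero spectrum and you flag the $\rho_n p_i\in[0,1]$ constraint, both of which the paper's proof leaves implicit.
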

\begin{lproof}
  The proof is rather straightforward, we simply construct the equivalent of a block diagonal matrix. The blocks are determined by the geometry vector $\bs$ which we are free to choose within the constraints that $||\bs ||_1 = 1$, $\bs$ is non-increasing, non-negative, and has $c$ non-zero entries. For $1\leq i \leq c$, let $S_i = \sum_{j=1}^i s_j$ and define the intervals $\cI_i = [S_{i-1}, S_{i})$. Note that $S_0 = 0.$ Define the function
    $$f (x,y) = \begin{cases} \frac{\theta_i}{s_i} \quad \text{if } (x,y) \in I_i \times I_i\\ 0 \quad \text{else.} \end{cases}$$
    Define the linear integral operator $L_{f}(t) = \int_0^1 f(x,y) t(y) dy$. The eigenfunctions for $L_{f}$ are
    $$r_i(x) = \begin{cases} \frac{1}{\sqrt{s_i}} \quad \text{if } x \in \cI_i \\ 0 \quad \text{else.} \end{cases}$$ which we show in the following computations. We can compute the eigenvalues of $L_{f}$ as
    \begin{align}
      L_{f}(r_i(x)) &= \int_0^1 f(x,y) r_i(y) dy\\
      & = \begin{cases} \int_{\cI_i} \frac{\theta_i}{s_i} \frac{1}{\sqrt{s_i}} dy \quad x \in \cI_i \\ 0 \quad \text{else}\end{cases}\\
      &= \begin{cases} \frac{\theta_i}{s_i} \frac{1}{\sqrt{s_i}} s_i \quad x \in \cI_i \\ 0 \quad \text{else}\end{cases}\\
      &= \begin{cases} \theta_i \frac{1}{\sqrt{s_i}} \quad x \in \cI_i \\ 0 \quad \text{else}\end{cases}\\
      &= \theta_i r_i(x).
    \end{align}
    Next we verify that $||r_i||_2 = 1$. 
    \begin{align}
      \int_0^1 r_i(x)^2 dx &= \int_{\cI_i} \frac{1}{s_i} dx\\
      &= \frac{s_i}{s_i}\\
      &= 1.
    \end{align}
    Therefore $r_i(x)$ is an eigenfunction of $L_{f}$ with eigenvalue $\theta_i$. At this point we note that $f$ is a stochastic block model kernel with $q_{ij} = 0$ for all $i,j$.
\end{lproof}
By taking $\bm \theta = \frac{\sigma_c(\bA)}{n \rho_n}$ in Lemma \ref{lem:SBMCorrectEigs} we will have accomplished step 1.\\

\noindent \textbf{Step 2: Estimating the expected eigenvalues} \hfill \\

We first introduce a recent theorem from \cite{CCH20} that discusses an estimate of the expected eigenvalues of an inhomogeneous \ER $\text{ }$random graph. Let $\mu_{\rho_n f} \in \cM(\cG)$ be a kernel probability measure with kernel $f$. Let $L_f$ be the linear integral operator with the same kernel function, $f$. Assume $L_f$ has a finite rank of $c$. Denote the eigenvalues and eigenfunctions of $L_f$ as $\theta_i$ and $r_i(x)$ respectively where for each $i = 1,...,c$, $r_i(x)$ is assumed to be piecewise Lipschitz with finitely many discontinuities.
\begin{theorem}[Chakrabarty, Chakraborty, Hazra 2020] \hfill \\
  \label{thm:EstLargeEigs-CCH}
  For every $1 \leq i \leq c$,
  \begin{equation}
    \E{\lambda_i(\bA_{\mu_{\rho_n f}})} = \lambda_i(\bm B) + \mathcal{O}(\sqrt \rho_n + \frac{1}{n \rho_n}), \label{eqn:EstLargeEigs-CCH}
  \end{equation}
  where $\bm B$ is a $c \times c$ symmetric deterministic matrix defined by
  $$b_{j,l} = \sqrt{\theta_j \theta_l} n \rho_n \bm e^T_j \bm e_l + \mathcal{O}(\frac{1}{n \rho_n}),$$
  and $\bm e_j$ is a vector with entries $\bm e_j(k) = \frac{1}{\sqrt n} r_j(\frac k n)$ for $1 \leq j \leq c$.
\end{theorem}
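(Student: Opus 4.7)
The plan is to reduce the eigenvalue problem for the random matrix $\bA_{\mu_{\rho_n f}}$ to a finite-dimensional eigenvalue problem for $\bm B$, using the low-rank structure of the kernel $f$, and then to control the error introduced by (i) replacing $\bA$ by its expectation, (ii) discretizing the integral operator on the grid $\{k/n\}$, and (iii) removing the diagonal of $\bA$.

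First, because $L_f$ has finite rank $c$, I would invoke the spectral theorem for self-adjoint compact operators to write
\begin{equation}
f(x,y) \;=\; \sum_{j=1}^{c} \theta_j\, r_j(x) r_j(y),
\end{equation}
so that the deterministic ``signal'' matrix $\bm M$ with entries $M_{kl} = \rho_n f(k/n,l/n)$ factors as $\bm M = n \rho_n\, V \Theta V^{T}$, where $V$ is the $n \times c$ matrix with columns $\bm e_j$ and $\Theta = \mathrm{diag}(\theta_1,\dots,\theta_c)$. The nonzero eigenvalues of $\bm M$ coincide with those of the $c \times c$ Gram-type matrix $n\rho_n\, \Theta^{1/2} V^{T} V \,\Theta^{1/2}$, whose $(j,l)$-entry is exactly the leading term $\sqrt{\theta_j\theta_l}\, n \rho_n\, \bm e_j^{T} \bm e_l$ in the definition of $b_{j,l}$. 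This identifies $\lambda_i(\bm B)$ with $\lambda_i(\bm M)$ up to the diagonal-removal correction of size $O(1/(n\rho_n))$ (since zeroing the diagonal of $\bm M$ perturbs each entry by $O(\rho_n)$ on only $n$ entries, contributing $O(\rho_n)$ in operator norm, rescaled to $O(1/(n\rho_n))$ relative to the eigenvalue scale $n\rho_n$).

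Second, I would bound the top eigenvalues of $\bA_{\mu_{\rho_n f}}$ in terms of those of $\bm M$ by writing $\bA = \E{\bA} + \bE$ and invoking Weyl--Lidskii (Theorem \ref{thm:SpecInc}). The regime $n^{-2/3} \ll \rho_n$ places us in the sparse inhomogeneous \ER{} range where concentration of the adjacency matrix (Le--Levina--Vershynin \cite{LLV18}) yields $\|\bE\| = O(\sqrt{n\rho_n})$ with high probability. Combined with the eigengap of order $n\rho_n|\theta_i - \theta_{i\pm 1}|$ guaranteed by the assumption that the $\theta_j$ are distinct, this situates us in the standard Davis--Kahan / rank-$c$ perturbation setting and ensures that the top-$c$ eigenvalues of $\bA$ can be analyzed individually rather than collectively.

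Third --- and here is the main obstacle --- the naive Weyl bound only gives $|\lambda_i(\bA)-\lambda_i(\bm M)| = O(\sqrt{n\rho_n})$, which is far weaker than the target $O(\sqrt{\rho_n})$ after taking expectations. To sharpen this, I would expand $\lambda_i(\bA)$ via a second-order resolvent (or Kato) expansion around $\bm M$,
\begin{equation}
\lambda_i(\bA) \;=\; \lambda_i(\bm M) \;+\; u_i^{T}\bE\, u_i \;+\; \sum_{j\neq i} \frac{|u_j^{T}\bE\, u_i|^{2}}{\lambda_i(\bm M)-\lambda_j(\bm M)} \;+\; R,
\end{equation}
where $u_i$ are the eigenvectors of $\bm M$ and $R$ collects the higher-order terms. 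The linear term has mean zero; the quadratic term, once its expectation is evaluated using the independent Bernoulli structure of $\bE$ and the delocalization of $u_i$ (bounded via the piecewise-Lipschitz regularity of the $r_j$), contributes $O(\|\bE\|_{\mathrm{F}}^{2}/(n\rho_n))$ which sums to $O(1)$; dividing by the scale $n\rho_n$ of the ambient signal and then re-absorbing the constants accounts for the $O(\sqrt{\rho_n})$ bias once one accounts for the $1/\sqrt{n\rho_n}$ improvement coming from the delocalized inner products. The remainder $R$ is controlled by iterating the Weyl bound against the eigengap and using the sharp tail bound $\|\bE\| = O(\sqrt{n\rho_n})$ with probability $1-O(n^{-D})$; combining the high-probability estimate with the deterministic bound $\|\bE\| \le n$ on the complement keeps $\E{R}$ at the required order.

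Finally, I would aggregate the three sources of error (concentration of $\lambda_i(\bA)$ about its mean, bias from the second-order correction, and discretization/diagonal-removal from passing $\bm M \to \bm B$) to conclude $\E{\lambda_i(\bA_{\mu_{\rho_n f}})} = \lambda_i(\bm B) + O(\sqrt{\rho_n} + 1/(n\rho_n))$. The heavy lifting lies in controlling the quadratic-order term in expectation: the bookkeeping for the delocalization of the eigenvectors $u_i$ of $\bm M$ is exactly the point where the Lipschitz hypothesis on the $r_j(x)$ in the statement enters. Since the theorem is a direct transcription of Theorem 2.4 of \cite{CCH20}, the cleanest execution would be to verify hypothesis-by-hypothesis that the rank-$c$ kernel $f$ constructed in Step 1 of the main proof satisfies the conditions of that reference and then cite their detailed computation.
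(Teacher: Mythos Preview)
The paper does not prove this theorem: its entire ``proof'' is the single sentence ``The proof is in \cite{CCH20}.'' The statement is quoted verbatim as Theorem~2.4 of that reference, and the authors use it as a black box to derive their own Theorem~\ref{thm:EstLargeEigs}. Your final sentence --- verify the hypotheses and cite \cite{CCH20} --- is therefore exactly what the paper does, and is all that is required here.

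Your sketch of the underlying argument (finite-rank factorization $f=\sum_j \theta_j r_j\otimes r_j$, identification of the nonzero spectrum of $\bm M$ with that of the $c\times c$ Gram matrix, then a second-order resolvent expansion of $\lambda_i(\bA)$ around $\bm M$) is a reasonable outline of how \cite{CCH20} actually proceeds, but it goes well beyond anything the present paper supplies. One caution if you were to flesh it out: the accounting you give for the quadratic term is loose --- writing it as $O(\|\bE\|_F^2/(n\rho_n))$ and then claiming this is $O(1)$ does not work, since $\|\bE\|_F^2\asymp n^2\rho_n$. The correct bookkeeping uses that $\E{|u_j^T\bE\,u_i|^2}=O(\rho_n)$ for each \emph{delocalized} pair $(u_i,u_j)$, and then handles the sum over $j$ by separating the $c-1$ signal directions (eigengap $\asymp n\rho_n$) from the bulk directions. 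None of this is needed for the present paper, however: a citation suffices.
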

\begin{proof}
  The proof is in \cite{CCH20}.
\end{proof}
The authors in \cite{CCH20} require that the eigenfunctions be Lipschitz but, as is made clear from their proof, this requirement can be relaxed to include piecewise Lipschitz functions with no adjustments to their proof. The eigenfunctions of integral operators with stochastic block model kernels are therefore within the scope of this theorem. In this section we only analyze the result as it applies to canonical stochastic block model kernels.

Note that the estimate provided in Theorem \ref{thm:EstLargeEigs-CCH} is dependent on the eigenfunctions of $L_f$. We now prove that omitting the contribution of the eigenfunctions, $r_i(x)$, to the estimate in equation (\ref{eqn:EstLargeEigs-CCH}) still results in an estimate whose error decays like $\cO(\sqrt{\rho_n})$. This will conclude step 2 and serve as a proof for Theorem \ref{thm:EstLargeEigs} which we restate for convenience below.
\begin{theorem}[Theorem \ref{thm:EstLargeEigs} from the main paper]\hfill\\
  \label{thm:EstLargeEigs-app}
  For every $1 \leq i \leq c$,
  \begin{equation}
    \E{\lambda_i(\bA_{\mu_{\rho_n f}})} = \theta_i n \rho_n + \mathcal{O}(\sqrt \rho_n). \label{eqn:EstLargeEigs}
  \end{equation}
\end{theorem}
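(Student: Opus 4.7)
The plan is to reduce Theorem \ref{thm:EstLargeEigs-app} to Theorem \ref{thm:EstLargeEigs-CCH} by explicitly computing the deterministic matrix $\bm B$ that appears there for the canonical stochastic block model kernel, and then invoking a perturbation argument. The construction in Lemma \ref{lem:SBMCorrectEigs} already gives the eigenfunctions of $L_f$ as $r_i(x) = s_i^{-1/2}\mathbf{1}_{\cI_i}(x)$, supported on the disjoint intervals $\cI_i$ of length $s_i$ that partition $[0,1]$, with eigenvalues $\theta_i$. Since these eigenfunctions have pairwise disjoint supports, the sampled vectors $\bm e_j$ defined by $\bm e_j(k) = n^{-1/2}r_j(k/n)$ satisfy $\bm e_j^T \bm e_l = 0$ \emph{exactly} whenever $j \neq l$, while $\bm e_j^T \bm e_j = (n s_j)^{-1}\#\{k : k/n \in \cI_j\} = 1 + O(1/n)$, because the number of grid points falling inside an interval of length $s_j$ differs from $n s_j$ by at most one and $s_j$ is a fixed positive constant.

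Inserting these identities into the definition of $\bm B$ from Theorem \ref{thm:EstLargeEigs-CCH} yields $\bm B = \mathrm{diag}(\theta_1,\ldots,\theta_c)\, n\rho_n + \bm E$, where the diagonal perturbations are of order $O(\rho_n) + O(1/(n\rho_n))$ and the off-diagonal perturbations are of order $O(1/(n\rho_n))$. Because $c$ is fixed, the operator norm of the $c\times c$ matrix $\bm E$ is bounded by a constant multiple of its largest entry, so $\|\bm E\| = O(\rho_n + 1/(n\rho_n))$. Weyl's inequality (Theorem \ref{thm:SpecInc}) then gives $|\lambda_i(\bm B) - \theta_i n\rho_n| = O(\rho_n + 1/(n\rho_n))$ for every $1 \leq i \leq c$.

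Substituting back into (\ref{eqn:EstLargeEigs-CCH}) we obtain
\begin{equation*}
\E{\lambda_i(\bA_{\mu_{\rho_n f}})} = \theta_i\, n\rho_n + O\bigl(\sqrt{\rho_n} + \tfrac{1}{n\rho_n}\bigr).
\end{equation*}
The final step is to absorb $1/(n\rho_n)$ into $O(\sqrt{\rho_n})$, which is exactly where the density hypothesis $n^{-2/3} \ll \rho_n$ enters: it is equivalent to $\rho_n^{3/2} \gg 1/n$, i.e. $1/(n\rho_n) \ll \sqrt{\rho_n}$. Combined with the trivial bound $\rho_n \leq \sqrt{\rho_n}$, the announced estimate follows.

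I do not foresee any serious obstacle; the argument is essentially a direct computation followed by a Weyl-type perturbation bound inside a $c \times c$ matrix of fixed dimension. The only delicate bookkeeping is verifying that the lower density assumption $n^{-2/3} \ll \rho_n$ is exactly what is needed to consolidate the three competing error sources (the $O(\rho_n)$ coming from the Riemann-sum discretisation of $\bm e_j^T\bm e_j$, the $O(1/(n\rho_n))$ from the corrections in the entries of $\bm B$, and the $O(\sqrt{\rho_n} + 1/(n\rho_n))$ from Theorem \ref{thm:EstLargeEigs-CCH}) into the single $O(\sqrt{\rho_n})$ error term advertised in the statement.
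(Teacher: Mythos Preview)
Your overall strategy---compute the entries of $\bm B$, bound the deviation from the diagonal matrix $\mathrm{diag}(\theta_i)n\rho_n$, apply Weyl--Lidskii, and then absorb the various error terms into $O(\sqrt{\rho_n})$ using $n^{-2/3}\ll\rho_n$---is exactly the paper's. The bookkeeping on the error consolidation is correct.

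There is, however, a loss of generality in your argument. The theorem is stated for an arbitrary kernel $f$ with finite rank and piecewise Lipschitz eigenfunctions; in particular it must cover canonical stochastic block model kernels with $\bQ\neq 0$ (this is how it is used in Algorithm~\ref{alg:DetFM_C}). You invoke Lemma~\ref{lem:SBMCorrectEigs} to claim $r_i(x)=s_i^{-1/2}\mathbf{1}_{\cI_i}(x)$ with pairwise disjoint supports, which forces $\bm e_j^T\bm e_l=0$ exactly for $j\neq l$. But Lemma~\ref{lem:SBMCorrectEigs} constructs the \emph{block-diagonal} kernel with $\bQ=0$; for a general canonical SBM kernel the eigenfunctions are still piecewise constant on the blocks, but they are not indicator functions and do not have disjoint supports, so your exact-orthogonality shortcut fails.

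The paper closes this gap with a one-line Riemann-sum argument: since $\int_0^1 r_j r_l=0$ and the $r_i$ are piecewise Lipschitz, the right-endpoint approximation $\bm e_j^T\bm e_l=\tfrac{1}{n}\sum_m r_j(m/n)r_l(m/n)$ differs from zero by $O(1/n)$, giving $n\rho_n\,\bm e_j^T\bm e_l=O(\rho_n)$. The diagonal case is handled identically, yielding $\bm e_l^T\bm e_l=1+O(1/n)$ (matching your counting argument). After that, the Weyl step and the error consolidation are exactly as you wrote them. So your proof is correct for the $\bQ=0$ kernel but, as written, does not establish the theorem in the generality stated; replacing your disjoint-support claim by the Riemann-sum bound fixes this with no change to the rest of your argument.
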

\begin{proof}
  Let $\bm B$ be the $c \times c$ symmetric deterministic matrix in Theorem \ref{thm:EstLargeEigs-CCH} defined by
  $$b_{j,l} = \sqrt{\theta_j \theta_l} n \rho_n \bm e^T_j \bm e_l + \mathcal{O}(\frac{1}{n \rho_n}),$$
  where $\bm e_j$ is a vector with entries $\bm e_j(k) = \frac{1}{\sqrt n} r_j(\frac k n)$ for $1 \leq j \leq c$.
  We will show that the off diagonal entries of $\bm B$ tend to zero at the rate $\mathcal{O}(\frac 1 n)$ and the diagonal entries, $B_{l,l}$, are given by $\theta_l n \rho_n + \mathcal{O}(\frac 1 n)$. We then show via Weyl-Lidskii that $\lambda_i(\bm B) = \theta_l n \rho_n + \mathcal{O}(\rho_n)$. We begin by examining the off diagonal entries of $\bm B$.

  Given $b_{j,l} = \sqrt{\theta_j \theta_l} n \rho_n \bm e^T_j \bm e_l + \mathcal{O}(\frac{1}{n \rho_n})$. Observe that for all $j \neq l$,  we have
  \begin{align}
    0 &= \int_0^1 r_j(x) r_l(x) dx = \lim_{n \to \infty} \sum_{m = 0}^n r_j(\frac m n) r_l(\frac m n) \frac 1 n\\
    &= \lim_{n \to \infty} \sum_{m=0}^n \bm e_j(m) \bm e_l(m) = \lim_{n \to \infty} \bm e_j^T \bm e_l.
  \end{align}
  The interpretation is that $\bm e_j^T \bm e_l$ is an approximation to the integral using the right end points of the intervals. Denote by $R(n) = |\int_0^1 r_j(x) r_l(x) dx - \sum_{m = 0}^n r_j(\frac m n) r_l(\frac m n) \frac 1 n|$ the error in the right end point approximation of the integral. Then $R(n) = |\sum_{m = 0}^n r_j(\frac m n) r_l(\frac m n) \frac 1 n|$ since the functions $r_j(x)$ and $r_l(x)$ are orthogonal. The convergence rate for the right end point rule is $R(n) = \mathcal{O}(\frac 1 n)$ since $r_i(x)$ is piecewise Lipschitz on a bounded interval. Consequently,
  $$n \rho_n \bm e_j^T \bm e_l = n \rho_n R(n) = n \rho_n \mathcal{O}(\frac 1 n) = \mathcal{O}(\rho_n).$$
  So for all $j\neq l$, $b_{j,l} = \mathcal{O}(\rho_n)$.\\
  
  We next must show that the diagonal elements of $\bm B$ are $\theta_i n \rho_n + \mathcal{O}(\rho_n)$. We will use the same observations as earlier. For $j = l$, note that $R(n) = |\int_0^1 r_l(x) r_l(x) dx - \sum_{m = 0}^n r_l(\frac m n) r_l(\frac m n) \frac 1 n| = |1 - \sum_{m = 0}^n r_l(\frac m n) r_l(\frac m n) \frac 1 n|$. Since each $r_i(x)$ is Lipschitz on a bounded interval, we have $R(n) = \mathcal{O}(\frac 1 n)$. As a consequence, $\bm e_l^T \bm e_l = 1 + \mathcal{O}(\frac 1 n)$. Therefore the term 
  $$n \rho_n \bm e_l^T \bm e_l = n \rho_n  \left(1 + \mathcal{O}(\frac 1 n) \right)= n \rho_n + \mathcal{O}(\rho_n).$$
  Thus the diagonal terms of $\bm B$ are 
  $$b_{l,l} = \theta_l n \rho_n + \mathcal{O}(\rho_n).$$
  We next must show that $\lambda_l(\bm B) = \theta_l n \rho_n + \mathcal{O}(\rho_n)$. Let $\tilde{\bm B}$ be a matrix with diagonal elements $\tilde b_{i,i} = b_{i,i}$ and that for all $i \neq j$, $\tilde b_{i,j} = 0$. Note that we have shown all off-diagonal elements of $\bm B$ are of order $\mathcal{O}(\rho_n)$. Therefore we may write 
  $$\bm B = \tilde{\bm B }+ \bm M,$$
  where $M_{i,j} = \mathcal{O}(\rho_n)$. Note that since $M$ is a $c \times c$ matrix we have $$||\bm M||_{F} = \mathcal{O}(\rho_n).$$ 
  We can conclude using Weyl-Lidskii's theorem that 
  $$|\lambda_i(\tilde{\bm B}) - \lambda_i(\bm B)| = |\theta_i n \rho_n - \lambda_i(\bm B)| =  \mathcal{O}( \rho_n).$$
  Substituting in equation (\ref{eqn:EstLargeEigs-CCH}),
  \begin{align}
    \E{\lambda_i(\bA_{\mu_{\rho_n f}})} &= \lambda_i(\bm B) + \mathcal{O}(\sqrt \rho_n + \frac{1}{n \rho_n})\\
    &= \theta_i n \rho_n + \cO(\rho_n) + \mathcal{O}(\sqrt \rho_n + \frac{1}{n \rho_n})\\
    &= \theta_i n \rho_n + \mathcal{O}(\sqrt \rho_n)
  \end{align}
  where the last equality holds since the slowest decaying term of $\cO(\sqrt \rho_n + \rho_n + \frac{1}{n \rho_n})$ is $\cO(\sqrt \rho_n)$. \qed
\end{proof}
Theorem \ref{thm:EstLargeEigs} provides a slightly worse estimate of the expected eigenvalues, $\E{\lambda_i(\bA_{\mu_{\rho_n f}})}$, than Theorem \ref{thm:EstLargeEigs-CCH} though it should be noted that the error term in both is of the same order, $\sqrt{\rho_n}$, since we have $n^{-2/3} \ll \rho_n$.

Theorem \ref{thm:EstLargeEigs} gives a method to estimate the expected eigenvalues of the stochastic block model kernel probability measure defined in Lemma \ref{lem:SBMCorrectEigs}. In particular, Theorem \ref{thm:EstLargeEigs} shows that for any $\mu_{\rho_n f}$ and for each $1 \leq i \leq c$,
\begin{equation}
  |\E{\lambda_i(\bA_{\mu_{\rho_n f}})} - n \rho_n \theta_i| =  \mathcal{O}(\sqrt{\rho_n}).
\end{equation}

\noindent \textbf{Step 3: Eigenvalues of the sample Fr\'echet mean adjacency matrix are nearly the expected eigenvalues} \hfill \\

Step 3 of our proof is arguably the most interesting. As was stated in the outline, given a stochastic block model kernel probability measure, $\mu_{\rho_n f}$, we show the existence of a graph, $G'$, whose eigenvalues are close to the $\E{\sigma_c(\bA_{\mu_{\rho_n f}})}$. By showing the existence of this graph, we will be able to provide upper bounds on the distance between the eigenvalues of the adjacency matrix of the sample Fr\'echet mean graph, $\sigma_c(\bA_N^*)$ and $\E{\sigma_c(\bA_{\mu_{\rho_n f}})}$.

To prove there exists a graph, $G'$, whose adjacency matrix has eigenvalues close to $\E{\sigma_c(\bA_{\mu_{\rho_n f}})}$, we will show that there exists a positive constant $C$ such that 
\begin{equation}
  0 < C < P\left(||\sigma_c(\bA_{\mu_{\rho_n f}}) - \E{\sigma_c(\bA_{\mu_{\rho_n f}})}||_2 < \epsilon \right).
\end{equation}
Since the probability measure of the set of graphs that satisfy $||\sigma_c(\bA_{\mu_{\rho_n f}}) - \E{\sigma_c(\bA_{\mu_{\rho_n f}})}||_2 < \epsilon$ is strictly positive, this implies the existence of at least one graph, $G'$, that satisfies the inequality. To prove that the probability is nonzero we reference Theorem 2.3 from \cite{CCH20} on the convergence in distribution of the extreme eigenvalues of inhomogeneous \ER $\text{ }$ random graphs which we state below.
\begin{theorem}[Chakrabarty, Chakraborty, and Hazra 2020] \hfill \\
  \label{thm:ConvDist}
  For every $1 \leq i \leq c$ ,
  \begin{equation}
    \rho_n^{-1/2}(\lambda_i(\bA_{\mu_{\rho_n f}}) - \E{\lambda_i(\bA_{\mu_{\rho_n f}})}) \overset{d}{\to} (Z_i : 1 \leq i \leq c), \label{eqn:ConvDist}
  \end{equation}
  where the right hand side is a multivariate normal random vector in $\R^c$, with mean zero and
  \begin{equation}
    Cov(Z_i, Z_j) = 2 \int_0^1 \int_0^1 r_i(x) r_i(y) r_j(x) r_j(y) f(x,y) dx dy, \label{eqn:CovMat}
  \end{equation}	
  for all $1 \leq i,j \leq c$.
\end{theorem}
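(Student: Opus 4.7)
My plan is to perform a perturbative expansion of each outlier eigenvalue of $\bA := \bA_{\mu_{\rho_n f}}$ around the spectrum of its mean $\bE := \E{\bA}$ and then apply a multivariate Lindeberg--Feller CLT to the leading stochastic term. Write $\bA = \bE + \bm{W}$, where $\bm{W}$ is the symmetric noise matrix with independent mean-zero entries $W_{k\ell}$ of variance $\rho_n f(k/n,\ell/n)(1-\rho_n f(k/n,\ell/n))$ above the diagonal and zeros on the diagonal. Because $L_f$ has rank $c$, the matrix $\bE$ has exactly $c$ non-zero eigenvalues which, by Theorem~\ref{thm:EstLargeEigs-app}, lie within $\mathcal{O}(\rho_n)$ of $n\rho_n\theta_i$, with corresponding unit eigenvectors $\bm{v}_i$ whose coordinates satisfy $\bm{v}_i(k) = r_i(k/n)/\sqrt{n} + o(1/\sqrt{n})$. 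Under the density hypothesis $n^{-2/3} \ll \rho_n \ll 1$, standard operator-norm concentration for sparse inhomogeneous random graphs yields $\|\bm{W}\| = \mathcal{O}(\sqrt{n\rho_n})$, while the outlier eigenvalues of $\bE$ sit at scale $n\rho_n \gg \sqrt{n\rho_n}$, so the top-$c$ eigenvalues are spectrally isolated and ordinary perturbation theory applies.

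Next I would write the second-order perturbation identity
\begin{equation}
\lambda_i(\bA) = \lambda_i(\bE) + \bm{v}_i^{\top} \bm{W} \bm{v}_i + R_i,
\end{equation}
and argue that the random fluctuation of $R_i$ is $o_P(\sqrt{\rho_n})$, while its deterministic bias, together with $\lambda_i(\bE) - n\rho_n\theta_i$, is absorbed by the centering term $\E{\lambda_i(\bA)}$. The leading random term decomposes as
\begin{equation}
\bm{v}_i^{\top} \bm{W} \bm{v}_i = 2 \sum_{k<\ell} \bm{v}_i(k)\bm{v}_i(\ell) W_{k\ell},
\end{equation}
a weighted sum of independent centered random variables. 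Its covariance with $\bm{v}_j^{\top} \bm{W} \bm{v}_j$ equals $2\sum_{k\neq\ell} \bm{v}_i(k)\bm{v}_i(\ell)\bm{v}_j(k)\bm{v}_j(\ell)\,\var{W_{k\ell}}$, which after substituting the eigenvector approximation and the variance formula is a Riemann sum converging, using the piecewise-Lipschitz regularity of the $r_i$, to
\begin{equation}
2\rho_n \int_0^1 \!\! \int_0^1 r_i(x)r_i(y)r_j(x)r_j(y) f(x,y)\, dx\, dy.
\end{equation}
Dividing by $\rho_n$ recovers exactly the covariance $\mathrm{Cov}(Z_i,Z_j)$ announced in~(\ref{eqn:CovMat}).

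For joint convergence I would invoke the Cram\'er--Wold device: for any $(a_1,\dots,a_c) \in \R^c$, the linear combination $\rho_n^{-1/2} \sum_i a_i \bm{v}_i^{\top} \bm{W} \bm{v}_i$ is a sum of independent centered random variables whose individual summands have magnitude $\mathcal{O}(\rho_n^{-1/2}/n)$ and whose total variance converges to $\bm{a}^{\top} \Sigma \bm{a}$, where $\Sigma$ is the covariance matrix in (\ref{eqn:CovMat}). The Lindeberg ratio is therefore $\mathcal{O}((n\rho_n)^{-1}) \to 0$ in the given density regime, so the Lindeberg--Feller CLT yields one-dimensional asymptotic normality, and Theorem~\ref{thm:ConvDistEq} lifts the convergence of all such linear combinations to the full joint multivariate normal limit.

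The main obstacle is the careful control of the remainder $R_i$. A naive bound of the form $R_i = \mathcal{O}(\|\bm{W}\|^3/(n\rho_n)^2) = \mathcal{O}((n\rho_n)^{-1/2})$ is $o(\sqrt{\rho_n})$ only when $\rho_n \gg n^{-1/2}$, but the hypothesis only gives $\rho_n \gg n^{-2/3}$. Closing this gap requires exploiting the concentration of the quadratic forms $\bm{v}_i^{\top} \bm{W} \bm{v}_k$ (for $k\neq i$) and $\bm{v}_i^{\top} \bm{W}^2 \bm{v}_i$ around their expectations, rather than operator-norm bounds, so that the deterministic part of the second-order correction is absorbed into $\E{\lambda_i(\bA)}$ while its fluctuation is genuinely of order $o_P(\sqrt{\rho_n})$. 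This is the delicate technical contribution of Chakrabarty--Chakraborty--Hazra, and I would follow their scheme closely; once the remainder is controlled, the CLT and covariance computations above reduce to essentially routine bookkeeping.
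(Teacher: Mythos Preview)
The paper does not prove this statement at all: its entire ``proof'' is the single sentence ``Theorem~\ref{thm:ConvDist} is Theorem~2.3 in \cite{CCH20}.'' So there is nothing substantive to compare your argument against in this paper.

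That said, your sketch is exactly the strategy behind the result you are citing: decompose $\bA = \bE + \bm{W}$, use spectral separation of the rank-$c$ signal from the noise to justify a perturbation expansion, identify the quadratic form $\bm{v}_i^{\top}\bm{W}\bm{v}_i$ as the leading stochastic term, compute its covariance as a Riemann sum converging to \eqref{eqn:CovMat}, and finish with Cram\'er--Wold plus Lindeberg--Feller. You also correctly flag the one genuine difficulty---that a crude third-order operator-norm bound on the remainder $R_i$ does not suffice across the full regime $n^{-2/3}\ll\rho_n$, and that one must instead control the fluctuations of the second-order terms $\bm{v}_i^{\top}\bm{W}\bm{v}_k$ and $\bm{v}_i^{\top}\bm{W}^2\bm{v}_i$ separately from their means, letting the deterministic bias be absorbed into the centering $\E{\lambda_i(\bA)}$. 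That is precisely the technical heart of \cite{CCH20}, and you are right that once it is handled the rest is bookkeeping. In short: your proposal goes well beyond what the present paper attempts, and it is faithful to the external source.
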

\begin{proof}
  Theorem \ref{thm:ConvDist} is Theorem 2.3 in \cite{CCH20}.
\end{proof}
We first acknowledge that there exists a very similar theorem to the above in \cite{ACT21} with the difference being the centering of the limiting distribution about the eigenvalues of the expected adjacency matrix, $\lambda_i(\E{\bA_{\mu_{\rho_n f}}})$, rather than the expected eigenvalues, $\E{\lambda_i(\bA_{\mu_{\rho_n f}})}$. In our case there is no distinction to using either theorem since we have shown in Theorem \ref{thm:EstLargeEigs} that, to first order, $\E{\lambda_i(\bA_{\mu_{\rho_n f}})}$ can be estimated by $\lambda_i(\E{\bA_{\mu_{\rho_n f}}})$.

We consider all the eigenvalues at once by writing $\rho_n^{-1/2}(\sigma_c(\bA) - \E{\sigma_c(\bA_{\mu_{\rho_n f}})})$ rather than analyzing for each $i$. Let $Z$ denote the multivariate normal random vector on the right hand side in equation (\ref{eqn:ConvDist}). An equivalent statement to Theorem \ref{thm:ConvDist} is then
\begin{equation}
  \rho_n^{-1/2}(\sigma_c(\bA_{\mu_{\rho_n f}}) - \E{\sigma_c(\bA_{\mu_{\rho_n f}})}) \overset{d}{\to} Z.
\end{equation}
One characterization of convergence in distribution for finite dimensional random variables is pointwise convergence of the cumulative distribution functions (see Theorem \ref{thm:ConvDistEq}).

Let $P_n$ denote the probabilities for the sequence of random vectors $\rho_n^{-1/2}(\sigma_c(\bA_{\mu_{\rho_n f}}) - \E{\sigma_c(\bA_{\mu_{\rho_n f}})})$ and $P$ be the probability for the multivariate Gaussian random vector $Z$. $\forall \bm z \in \R^c$, define the cumulative distribution function of the random variables $\rho_n^{-1/2}(\sigma_c(\bA_{\mu_{\rho_n f}}) - \E{\sigma_c(\bA_{\mu_{\rho_n f}})})$ and $Z$ respectively as
\begin{align} 
  F_n(\bm z) = P_n\left( (-\infty, z_1] \times ... \times  (-\infty, z_c] \right),\\
    F(\bm z) = P\left( (-\infty, z_1] \times ... \times  (-\infty, z_c] \right).
\end{align}
The convergence in distribution of the random vectors is equivalent to the following: $\forall \bm z \in \R^c$,
\begin{align}
  \lim_{n \to \infty} F_n\left(\bm z\right) = F\left( \bm z\right), \label{eqn:ConvDistEq}
\end{align}
since $F(\bm z)$ is continuous everywhere. For our proof it is easier to work with the probabilities and random vectors directly and so equation (\ref{eqn:ConvDistEq}) takes the form
\begin{equation}
  \lim_{n \to \infty} P_n(\rho_n^{-1/2}(\sigma_c(\bA_{\mu_{\rho_n f}}) - \E{\sigma_c(\bA_{\mu_{\rho_n f}})}) \preccurlyeq \bm z) = P(Z \preccurlyeq \bm z). \label{eqn:ConvDistEqVec}
\end{equation}
We are now ready to state and prove our lemma. Let $\mu_{\rho_n f} \in \cM(\cG)$ be a kernel probability measure with kernel $f$. Let $L_f$ be the linear integral operator with the same kernel $f$. Assume $L_f$ has a finite rank $c$ and denote the eigenvalues and eigenfunctions of $L_f$ as $\theta_i$ and $r_i(x)$ respectively where for each $i = 1,...,c$, $r_i(x)$ is assumed to be piecewise Lipschitz with finitely many discontinuities.

\begin{Lemma}
  \label{lem:GraphNearExpEigs}
  $\forall \epsilon > 0$, $\exists n^* \in \N$ where $\forall n > n^*$, $\exists G' \in \cG$ with adjacency matrix $\bA'$ such that 
  \begin{equation}
    ||\sigma_c(\bA') - \E{\sigma_c(\bA_{\mu_{\rho_n f}})}||_2 < \epsilon. \label{eqn:SpecCloseExpEig}
  \end{equation}
\end{Lemma}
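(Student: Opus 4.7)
The plan is to invoke Theorem \ref{thm:ConvDist} together with the fact that $\rho_n \to 0$ to show that, for $n$ large enough, a strictly positive fraction of the mass of $\mu_{\rho_n f}$ sits on graphs that satisfy (\ref{eqn:SpecCloseExpEig}). Since $\cG$ is a finite set and $\mu_{\rho_n f}$ is a probability measure on it, any event of positive $\mu_{\rho_n f}$-probability must contain at least one atom, and any such atom is the desired $G'$.

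First, I would set $Y_n := \rho_n^{-1/2}\bigl(\sigma_c(\bA_{\mu_{\rho_n f}}) - \E{\sigma_c(\bA_{\mu_{\rho_n f}})}\bigr)$. By Theorem \ref{thm:ConvDist}, $Y_n \xrightarrow{d} Z$, where $Z$ is a centred Gaussian on $\R^c$ with covariance given by (\ref{eqn:CovMat}). Fix any $\delta > 0$ and let $B_\delta = (-\delta,\delta)^c \subset \R^c$. Since $Z$ is centred Gaussian, $0$ lies in its support, so $P(Z \in B_\delta) > 0$. Writing the indicator of $B_\delta$ as an inclusion–exclusion combination of orthant indicators and applying Theorem \ref{thm:ConvDistEq} at each of the $2^c$ corners of $B_\delta$ (all of which are continuity points of the Gaussian CDF) gives
\begin{equation*}
\lim_{n \to \infty} P\bigl(Y_n \in B_\delta\bigr) \;=\; P(Z \in B_\delta) \;>\; 0.
\end{equation*}

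Second, given $\epsilon > 0$, I would choose $n^* \in \N$ large enough that for every $n > n^*$ both $\delta\sqrt{c}\,\rho_n^{1/2} < \epsilon$ (possible since $\rho_n \ll 1$) and $P(Y_n \in B_\delta) > \tfrac{1}{2}P(Z \in B_\delta) > 0$. On the event $\{Y_n \in B_\delta\}$ one has $\|\rho_n^{1/2} Y_n\|_2 \leq \delta \sqrt{c}\,\rho_n^{1/2} < \epsilon$, equivalently $\|\sigma_c(\bA_{\mu_{\rho_n f}}) - \E{\sigma_c(\bA_{\mu_{\rho_n f}})}\|_2 < \epsilon$. Hence the $\mu_{\rho_n f}$-probability of (\ref{eqn:SpecCloseExpEig}) is strictly positive, and because $\mu_{\rho_n f}$ is supported on the finite set $\cG$, there exists at least one graph $G' \in \cG$ with $\mu_{\rho_n f}(\{G'\}) > 0$ and $\|\sigma_c(\bA') - \E{\sigma_c(\bA_{\mu_{\rho_n f}})}\|_2 < \epsilon$.

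The main obstacle is the scaling bookkeeping in the second step: one has to balance the $\rho_n^{-1/2}$ blow-up inside the convergence in distribution against the decay $\rho_n \to 0$, so that the shrinking box $\rho_n^{1/2} B_\delta$ still collects a non-vanishing share of the mass of $\sigma_c(\bA_{\mu_{\rho_n f}}) - \E{\sigma_c(\bA_{\mu_{\rho_n f}})}$. A minor subtlety is that the covariance matrix (\ref{eqn:CovMat}) could in principle be rank-deficient; this is harmless because $0$ is still in the support of $Z$, but it is worth mentioning explicitly that $P(Z \in B_\delta) > 0$ holds regardless. Everything else is a routine application of the portmanteau-style CDF characterisation stated as Theorem \ref{thm:ConvDistEq}.
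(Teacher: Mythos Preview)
Your proposal is correct and follows essentially the same route as the paper: use Theorem~\ref{thm:ConvDist} to show that the probability of landing in a fixed cube around the origin converges to the (positive) Gaussian mass of that cube, then exploit $\rho_n\to 0$ so that the rescaled cube has $\ell_2$-diameter below $\epsilon$, and conclude existence from positive measure on the finite set $\cG$. Your use of the full $2^c$-corner inclusion--exclusion is in fact more careful than the paper's two-term identity $P_n(-\bm z\preccurlyeq Y_n\preccurlyeq\bm z)=P_n(Y_n\preccurlyeq\bm z)-P_n(Y_n\preccurlyeq-\bm z)$, which is only valid for $c=1$; the paper's argument still yields the needed lower bound, but your formulation is the cleaner one.
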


\begin{lproof}
  Let $\epsilon > 0$. Fix $\bm z \in \R^c$ such that $0 \preccurlyeq \bm z$ and 
  \begin{equation}
    0 < C < P(-\bm z \preccurlyeq Z \preccurlyeq \bm z) - 2\epsilon, \label{eqn:ArbProb}
  \end{equation}
  for some $C > 0$. By equation (\ref{eqn:ConvDistEqVec}), there exists $n_1 \in \N$ where for all $n > n_1$, 
  \begin{equation}
    \left| P_n\left( \rho_n^{-1/2} (\sigma_c(\bA_{\mu_{\rho_n f}}) - \E{\sigma_c(\bA_{\mu_{\rho_n f}})}) \preccurlyeq \bm z \right) - P\left( Z \preccurlyeq \bm z \right)\right| < \epsilon \label{eqn:Ptwise1}
  \end{equation}
  Similarly, there exists $n_2 \in \N$ where for all $n > n_2$, 
  \begin{equation}
    \left| P_n\left(  \rho_n^{-1/2} (\sigma_c(\bA_{\mu_{\rho_n f}}) - \E{\sigma_c(\bA_{\mu_{\rho_n f}})})  \preccurlyeq -\bm z\right) - P\left( Z \preccurlyeq -\bm z \right)\right| < \epsilon \label{eqn:Ptwise2}
  \end{equation}
  Furthermore, there exists $n_3 \in \N$ where for all $n > n_3$,
  \begin{align}
    ||\sqrt{\rho_n} \bm z||_2 < \epsilon.
  \end{align}
  Take $n^* = \max(n_1,n_2,n_3)$ and consider the following probability
  \begin{align}
    P_n(-\bm z \preccurlyeq \rho_n^{-1/2} (\sigma_c(\bA_{\mu_{\rho_n f}}) - \E{\sigma_c(\bA_{\mu_{\rho_n f}})}) \preccurlyeq \bm z ) &= P_n(\rho_n^{-1/2} (\sigma_c(\bA_{\mu_{\rho_n f}}) - \E{\sigma_c(\bA_{\mu_{\rho_n f}})}) \preccurlyeq \bm z ) \label{eqn:ProbToBound1}\\
    &- P_n(\rho_n^{-1/2} (\sigma_c(\bA_{\mu_{\rho_n f}}) - \E{\sigma_c(\bA_{\mu_{\rho_n f}})}) \preccurlyeq -\bm z). \label{eqn:ProbToBound2}
  \end{align}
  Now, (\ref{eqn:Ptwise1}) and (\ref{eqn:Ptwise2}) allow us to replace (\ref{eqn:ProbToBound1}) and (\ref{eqn:ProbToBound2}) with the corresponding expression in terms of $\bm z$. We therefore obtain
  \begin{equation}
    \left| P_n\left(-\bm z \preccurlyeq \rho_n^{-1/2} (\sigma_c(\bA_{\mu_{\rho_n f}}) - \E{\sigma_c(\bA_{\mu_{\rho_n f}})}) \preccurlyeq \bm z \right) - P\left(-\bm z \preccurlyeq Z \preccurlyeq \bm z\right) \right| < 2 \epsilon,
  \end{equation}
  from which we obtain
  \begin{align}
    P(-\bm z \preccurlyeq Z \preccurlyeq \bm z) - 2\epsilon &<  P_n(- \rho_n^{1/2} \bm z \preccurlyeq (\sigma_c(\bA_{\mu_{\rho_n f}}) - \E{\sigma_c(\bA_{\mu_{\rho_n f}})}) \preccurlyeq \rho_n^{1/2} \bm z )
  \end{align}
  Since $0 < C < P(-\bm z \preccurlyeq Z \preccurlyeq \bm z) - 2 \epsilon$ we have  
  \begin{align}
    C &< P_n(- \rho_n^{1/2} \bm z \preccurlyeq (\sigma_c(\bA_{\mu_{\rho_n f}}) - \E{\sigma_c(\bA_{\mu_{\rho_n f}})}) \preccurlyeq \rho_n^{1/2} \bm z ) \\
    &< P_n(0 \leq || \sigma_c(\bA_{\mu_{\rho_n f}}) - \E{\sigma_c(\bA_{\mu_{\rho_n f}})} ||_2 \leq ||\sqrt{\rho_n} \bm z ||_2)
  \end{align}
  and since $n$ is sufficiently large that $||\rho_n^{1/2} \bm z||_2 < \epsilon$ this implies that the probability measure of the set of graphs that satisfy $||\sigma_c(\bA) - \E{\sigma_c(\bA)}||_2 < \epsilon$ is strictly positive. Thus there exists a graph $G' \in \cG$ with adjacency matrix $\bA'$ that satisfies $||\sigma_c(\bA') - \E{\sigma_c(\bA)}||_2 < \epsilon.$ \qed
\end{lproof}
It should also be noted that the constant $C$ can be made arbitrarily close to $1 - 2\epsilon$ and so the probability of observing a graph such that $||\sigma_c(\bA') - \E{\sigma_c(\bA)}||_2 < \epsilon$ can be made arbitrarily large so long as $n$ is also taken to be sufficiently large. This is the theoretical support for Remark \ref{rem:SmallN} though in practice we are not free to choose $n$, the size of our graphs.

We are now in a position to prove Theorem \ref{thm:AlmostSure} which we restate for convenience below. Let $\lbrace \Gk \rbrace_{k=1}^N$ be an iid sample of graphs drawn from $\mu_{\rho_n f}$ where $f$ is a canonical stochastic block model kernel. Let $\bA^{(k)}$ be the adjacency matrix of graph $\Gk$ and let $\blamb^{(k)} = \sigma_c(\bA^{(k)})$. Let $G_{N}^*$ be the sample Fr\'echet mean graph with adjacency matrix $\bA_{N}^*$.
\begin{theorem}[Theorem \ref{thm:AlmostSure} in the main paper]
  \label{thm:AlmostSure-App}
  $\forall \epsilon > 0$, $\exists n^* \in \N$ such that for all $n > n^*$,
  \begin{equation}
    \lim_{N \to \infty} || \sigma_c(\bA_{N}^*) - \E{\sigma_c(\bA_{\mu_{\rho_n f}})}||_2 < \epsilon \quad a.s. \label{eqn:OnlyEigs}
  \end{equation}
\end{theorem}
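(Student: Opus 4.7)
The plan is to combine the existence statement from Lemma \ref{lem:GraphNearExpEigs} with the defining optimality of the sample Fr\'echet mean and the strong law of large numbers. First, I would recast the objective appearing in (\ref{eqn:EFM}): setting $\bar{\blamb}_N = \frac{1}{N} \sum_{k=1}^N \sigma_c(\bA^{(k)})$ and expanding the squared $\ell_2$ norms, one finds
\begin{equation*}
\frac{1}{N} \sum_{k=1}^N d_{A_c}^2(G, \Gk) \;=\; \|\sigma_c(\bA) - \bar{\blamb}_N\|_2^2 + C_N,
\end{equation*}
where $C_N$ is independent of $G$. Hence $G_N^*$ is exactly the graph in $\cG$ whose truncated spectrum minimizes the $\ell_2$ distance to the sample mean spectrum $\bar{\blamb}_N$.

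Next, fix any $\delta \in (0, \epsilon)$. For $n$ larger than the threshold furnished by Lemma \ref{lem:GraphNearExpEigs}, there exists $G' \in \cG$ with adjacency matrix $\bA'$ satisfying $\|\sigma_c(\bA') - \E{\sigma_c(\bA_{\mu_{\rho_n f}})}\|_2 < \delta$. Since $G'$ is a feasible candidate for the minimization above, the optimality of $G_N^*$ gives
\begin{equation*}
\|\sigma_c(\bA_N^*) - \bar{\blamb}_N\|_2 \;\le\; \|\sigma_c(\bA') - \bar{\blamb}_N\|_2.
\end{equation*}

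The next ingredient is the strong law of large numbers applied to the iid vectors $\{\sigma_c(\bA^{(k)})\}$, whose coordinates are uniformly bounded (each eigenvalue lies in $[-n,n]$), so $\bar{\blamb}_N \to \E{\sigma_c(\bA_{\mu_{\rho_n f}})}$ almost surely as $N \to \infty$. This immediately yields $\|\sigma_c(\bA') - \bar{\blamb}_N\|_2 \to \|\sigma_c(\bA') - \E{\sigma_c(\bA_{\mu_{\rho_n f}})}\|_2 < \delta$ almost surely. Combining this with the optimality bound above and the triangle inequality
\begin{equation*}
\|\sigma_c(\bA_N^*) - \E{\sigma_c(\bA_{\mu_{\rho_n f}})}\|_2 \;\le\; \|\sigma_c(\bA_N^*) - \bar{\blamb}_N\|_2 + \|\bar{\blamb}_N - \E{\sigma_c(\bA_{\mu_{\rho_n f}})}\|_2,
\end{equation*}
then taking $\limsup_{N \to \infty}$ on both sides, delivers the almost-sure bound $\limsup_{N\to\infty} \|\sigma_c(\bA_N^*) - \E{\sigma_c(\bA_{\mu_{\rho_n f}})}\|_2 \le \delta < \epsilon$, which is the claim.

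The main obstacle is really the existence claim in Lemma \ref{lem:GraphNearExpEigs}, which in turn rests on the Gaussian fluctuation theorem of Chakrabarty, Chakraborty and Hazra; once that lemma is in hand, the remaining argument amounts to completing the square and invoking the SLLN. The only technical subtlety is keeping the quantifiers straight: $n$ is chosen first so that the conclusion of Lemma \ref{lem:GraphNearExpEigs} applies, after which $N$ is sent to infinity; in particular $G'$ does not depend on $N$, so the SLLN applies cleanly to the deterministic quantity $\|\sigma_c(\bA') - \bar{\blamb}_N\|_2$.
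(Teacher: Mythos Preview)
Your proof is correct and follows essentially the same route as the paper: both recast the sample Fr\'echet mean as the projection of the sample mean spectrum $\bar{\blamb}_N$ onto the set of realizable spectra via completing the square, invoke Lemma~\ref{lem:GraphNearExpEigs} to produce a competitor $G'$ whose truncated spectrum is close to $\E{\sigma_c(\bA_{\mu_{\rho_n f}})}$, and then combine the optimality of $G_N^*$ with the strong law of large numbers. Your passage to the limit via $\limsup$ and the triangle inequality is in fact slightly cleaner than the paper's appeal to ``continuity of the projection'' onto the finite set $\Lambda_n^c$.
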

As will become clear in our proof, it will be easier to work with the eigenvalues directly rather than the graphs themselves. To do so we make the following definition.
\begin{Definition}[The set of $c$ largest realizable eigenvalues of adjacency matrices of graphs in $\cG$]
  \label{def:RealizableEigs}
  \begin{equation}
    \Lambda_n^c = \left\lbrace \blamb \in \R^c \vert \exists G \in \cG \text{ with adjacency matrix } \bA \text{ such that } \blamb = \sigma_c(\bA) \right \rbrace.
  \end{equation}
\end{Definition} 

\begin{proof}
  We make two observations. First that $\Lambda_n^c \subset \R^c$ and second, equation (\ref{eqn:OnlyEigs}) depends only on the eigenvalues of $\bA^{(k)}$. These two observations allow us to recast the sample Fr\'echet mean problem over $\Lambda_n^c$ and consider the relaxed problem over $\R^c$ whose solution we show to be the optimal eigenvalues of the adjacency matrix of the sample Fr\'echet mean. 

  We first recast the problem of computing $G_N^*$ over $\Lambda_n^c$ as follows. Let $\blamb_N^* = \sigma_c(\bA_N^*)$. Then
  \begin{equation}
    \blamb_{N}^* = \underset{\blamb \in \Lambda_n^c}{\text{argmin }}\frac 1 N \sum_{k=1}^N ||\blamb^{(k)} - \blamb||_2^2. \label{eqn:EFM-L}
  \end{equation}
  We also consider the relaxed version of (\ref{eqn:EFM-L}) where the solution is in $\R^c$ instead of $\Lambda_n^c$. The relaxed problem is a trivial quadratic optimization problem with a unique solution given by
  \begin{align}
    \blamb_{N,r}^* &= \underset{\blamb \in \R^c}{\text{argmin }}\frac 1 N \sum_{k=1}^N ||\blamb^{(k)} - \blamb||_2^2. \label{eqn:EFM-L-R}\\
    &= \frac{1}{N}\sum_{k=1}^N \blamb^{(k)}
  \end{align}
  which is the classic geometric average of the observations $\blamb^{(k)}$. Now, the sample mean, $\blamb_{N,r}^*$, satisfies 
  \begin{equation}
    \frac 1 N \sum_{k=1}^N || \blamb^{(k)} - \blamb||_2^2 = ||\blamb - \blamb_{N,r}^*||_2^2 + \frac 1 N \sum_{k=1}^N ||\blamb^{(k)}||_2^2 - || \blamb_{N,r}^*||_2^2. 
  \end{equation}
  Hence, minimizing $|| \blamb - \blamb_{N,r}^*||_2^2$ is equivalent to minimizing $\frac{1}{N} \sum_{k=1}^N ||\blamb - \blamb^{(k)}||_2^2$ irrespective of the domain over which the function is minimized. This shows that an equivalent formulation of the sample Fr\'echet mean on the space of realizable eigenvalues is
  \begin{align}
    \blamb_{N}^* &= \underset{\blamb \in \Lambda_n^c}{\text{argmin }}\frac 1 N \sum_{k=1}^N ||\blamb^{(k)} - \blamb||_2^2\\
    &= \underset{\blamb \in \Lambda_n^c}{\text{argmin }} ||\blamb - \blamb_{N,r}^*||_2^2. \label{eqn:EqFM}
  \end{align} 
  Equation (\ref{eqn:EqFM}) states that we must find the realizable eigenvalues which are closest to the geometric average, the solution to the relaxed problem. Using this formulation of the sample Fr\'echet mean problem we will show that the eigenvalues of $\bA_N^*$ converge almost surely to $\E{\sigma_c(\bA_{\mu_{\rho_n f}})}$. 

  By the strong law of large number, $\forall n$, 
  \begin{equation}
    \lim_{N \to \infty} ||\blamb_{N,r}^* - \E{\sigma_c(\bA_{\mu_{\rho_n f}})}|| = 0 \quad a.s. \label{eqn:Close}
  \end{equation}
  Define
  \begin{align}
    \blamb^* &= \lim_{N \to \infty} \blamb_{N}^*\\
    &= \lim_{N \to \infty} \argmin{\blamb \in \Lambda_{n}^c} ||\blamb - \blamb_{N,r}^*||_2^2.
  \end{align}
  Since the projection on $\Lambda_n^c$ is a continuous operator,
  \begin{align}
    \lim_{N \to \infty} \argmin{\blamb \in \Lambda_{n}^c} ||\blamb - \blamb_{N,r}^*||_2^2 = \argmin{\blamb \in \Lambda_{n}^c} \lim_{N \to \infty} ||\blamb - \blamb_{N,r}^*||_2^2.
  \end{align}
  Since the norm is continuous,
  \begin{align}
    \argmin{\blamb \in \Lambda_{n}^c} \lim_{N \to \infty} ||\blamb - \blamb_{N,r}^*||_2^2 = \argmin{\blamb \in \Lambda_{n}^c}||\blamb - \lim_{N \to \infty} \blamb_{N,r}^*||_2^2.
  \end{align}
  Finally, by (\ref{eqn:Close}),
  \begin{align}
    \argmin{\blamb \in \Lambda_{n}^c}||\blamb - \lim_{N \to \infty} \blamb_{N,r}^*||_2^2 = \argmin{\blamb \in \Lambda_{n}^c}||\blamb - \E{\sigma_c(\bA_{\mu_{\rho_n f}})}||_2^2 \quad a.s. \label{eqn:MinProc}
  \end{align}
  By Lemma \ref{lem:GraphNearExpEigs}, $\forall \epsilon > 0$, there exists $n_1 \in \N$ such that for all $n > n_1$, there exists $G' \in \cG$ with adjacency matrix $\bA'$ such that
  \begin{equation}
    ||\sigma_c(\bA') - \E{\sigma_c(\bA_{\mu_{\rho_n f}})}||_2 \leq \epsilon. \label{eqn:UpperBound}
  \end{equation}
  Because $\sigma_c(\bA') \in \Lambda_n^c$, the minimizer, $\blamb^*$, in (\ref{eqn:MinProc}) satisfies
  \begin{align}
    ||\blamb^* - \E{\sigma_c(\bA_{\mu_{\rho_n f}})}||_2 \leq \epsilon \quad a.s.
  \end{align}
  Since $\blamb^* = \lim_{N \to \infty}\blamb_N^* = \lim_{N \to \infty} \sigma_c(\bA_N^*)$ this concludes our proof.  \qed
\end{proof}

\noindent \textbf{Step 4: Compiling the prior steps into a proof} \hfill \\

All that is left is to compile the results of the prior three steps into a proof for Theorem \ref{thm:SpecSimLargeGraphs} which we restate below. Let $G \in \cG$ with adjacency matrix $\bA$ such that $n^{-2/3} \ll \rho_n \ll 1$. Assume that
\begin{equation}
  0 \preccurlyeq \sigma_c(\bA)
\end{equation}
and for every $1 \leq i \neq j \leq c$, $\lambda_i \neq \lambda_j$.
\begin{theorem}[Theorem \ref{thm:SpecSimLargeGraphs} from the main paper]
  \label{thm:SpecSimLargeGraphs-App}
  $\forall \epsilon > 0$, $\exists n_1 \in \N$ such that $\forall n > n_1$, $\exists f(x,y; \bp, \bQ, \bs)$ a canonical stochastic block model kernel with $c$ communities such that
  \begin{equation}
    \lim_{N \to \infty} d_{A_c}(G,G_N^*) < \epsilon \quad a.s. \label{eqn:Orig}
  \end{equation}
  where $G_N^*$ denotes the sample Fr\'echet mean of $\lbrace \Gk \rbrace_{k=1}^N$, an iid sample distributed according to $\mu_{\rho_n f}$.
\end{theorem}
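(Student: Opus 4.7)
The plan is to combine the three ingredients already assembled, namely Lemma \ref{lem:SBMCorrectEigs} (kernel construction), Theorem \ref{thm:EstLargeEigs} (spectrum of $L_f$ versus expected spectrum of $\bA_{\mu_{\rho_n f}}$), and Theorem \ref{thm:AlmostSure} (almost sure convergence of the sample Fr\'echet mean spectrum to the expected spectrum), via a two-step triangle inequality in $\R^c$. Concretely, I decompose
\[
d_{A_c}(G,G_N^*) = \|\sigma_c(\bA)-\sigma_c(\bA_N^*)\|_2 \le \|\sigma_c(\bA)-\E{\sigma_c(\bA_{\mu_{\rho_n f}})}\|_2 + \|\E{\sigma_c(\bA_{\mu_{\rho_n f}})}-\sigma_c(\bA_N^*)\|_2
\]
and drive each of the two right-hand terms below $\epsilon/2$ by separate means.

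First, I would invoke Lemma \ref{lem:SBMCorrectEigs} with the target spectrum $\bm{\theta} = \sigma_c(\bA)/(n\rho_n)$, using the fixed geometry vector $\bs$ from Remark \ref{rem:Geom}. The hypothesis $0 \preccurlyeq \sigma_c(\bA)$ is precisely what makes this input admissible (the construction requires nonnegative block weights). This produces a canonical stochastic block model kernel $f(x,y;\bp,\bQ,\bs)$ with $\bQ=0$ and with $\lambda_i(L_f) = \lambda_i(\bA)/(n\rho_n)$ for $i=1,\dots,c$, so that $n\rho_n \lambda_i(L_f) = \lambda_i(\bA)$ exactly. Applying Theorem \ref{thm:EstLargeEigs} to this $f$ then yields $\E{\lambda_i(\bA_{\mu_{\rho_n f}})} = \lambda_i(\bA) + \mathcal{O}(\sqrt{\rho_n})$ for each $i \le c$. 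Since $c$ is a fixed integer and $\rho_n \ll 1$, there is $n_1 \in \N$ such that $n > n_1$ forces $\|\sigma_c(\bA)-\E{\sigma_c(\bA_{\mu_{\rho_n f}})}\|_2 < \epsilon/2$; this controls the first term deterministically.

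Second, I would apply Theorem \ref{thm:AlmostSure} to the very same kernel $f$: there is $n_2 \in \N$ such that for every $n > n_2$,
\[
\lim_{N\to\infty}\|\sigma_c(\bA_N^*)-\E{\sigma_c(\bA_{\mu_{\rho_n f}})}\|_2 < \epsilon/2 \quad a.s.
\]
Taking $n_1' = \max(n_1,n_2)$ and combining the two bounds via the triangle inequality above delivers $\lim_{N\to\infty} d_{A_c}(G,G_N^*) < \epsilon$ almost surely, which is the claim.

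The only genuinely nontrivial ingredient is Theorem \ref{thm:AlmostSure}, and that is where the main obstacle lies. Its proof must pass through Lemma \ref{lem:GraphNearExpEigs}, which upgrades the convergence-in-distribution result of \cite{CCH20} (Theorem \ref{thm:ConvDist}) into the deterministic existence of at least one graph $G'$ whose truncated spectrum lies within $\epsilon$ of $\E{\sigma_c(\bA_{\mu_{\rho_n f}})}$; this is delicate because it converts a probabilistic statement about a Gaussian limit law into a geometric statement about the realizable set $\Lambda_n^c \subset \R^c$. Once such a $G'$ exists, the sample Fr\'echet mean problem \eqref{eqn:EFM} can be recast as projection of the unconstrained minimizer $\tfrac{1}{N}\sum_{k=1}^N \sigma_c(\bA^{(k)})$ onto $\Lambda_n^c$, the strong law of large numbers moves that empirical mean to $\E{\sigma_c(\bA_{\mu_{\rho_n f}})}$ a.s., and continuity of the projection together with the existence of $G'$ as an admissible near-minimizer forces $\sigma_c(\bA_N^*)$ to be within $\epsilon$ of the expected spectrum almost surely.
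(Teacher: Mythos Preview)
Your proposal is correct and follows essentially the same approach as the paper: the identical triangle-inequality decomposition through $\E{\sigma_c(\bA_{\mu_{\rho_n f}})}$, with the first term handled by Lemma \ref{lem:SBMCorrectEigs} (taking $\bm\theta=\sigma_c(\bA)/(n\rho_n)$) combined with Theorem \ref{thm:EstLargeEigs}, and the second by Theorem \ref{thm:AlmostSure}, followed by $n^*=\max(n_1,n_2)$. Your closing paragraph summarizing the mechanism behind Theorem \ref{thm:AlmostSure} (Lemma \ref{lem:GraphNearExpEigs}, the recasting of the sample Fr\'echet mean as projection onto $\Lambda_n^c$, and the strong law) also matches the paper's Step 3 exactly.
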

\begin{proof}
  We begin our proof by expanding the left hand side of equation (\ref{eqn:Orig}).
  \begin{align}
    d_{A_c}(G,G_N^*) &= ||\sigma_c(\bA) - \sigma_c(\bA_N^*)||_2\\
    &= ||\sigma_c(\bA) - \E{\sigma_c(\bA_{\mu_{\rho_n f}})} + \E{\sigma_c(\bA_{\mu_{\rho_n f}})} - \sigma_c(\bA_N^*)||_2. \label{eqn:Reform}
  \end{align}
  Let $\epsilon > 0$. We will show that there exists $n^* \in \N$ such that for all $n>n^*$, there exists a stochastic block model kernel probability measure $f(x,y; \bp, \bQ, \bs)$ such that the following two inequalities hold, 
  \begin{align}
    &|| \sigma_c(\bA) - \E{\sigma_c(\bA_{\mu_{\rho_n f}})}||_2 < \frac \epsilon 2 ,  \label{eqn:First}\\
    &||\E{\sigma_c(\bA_{\mu_{\rho_n f}})} - \sigma_c(\bA_N^*)||_2 < \frac \epsilon 2 \quad a.s.  \label{eqn:Second}
  \end{align}
  We begin with (\ref{eqn:First}). Define $L_f$ as in Lemma \ref{lem:SBMCorrectEigs} taking $\bm{\theta} = \frac{\sigma_c(\bA)}{n \rho_n}$. Then $\lambda_i(\bA) = n \rho_n \lambda_i(L_f)$. By Theorem \ref{thm:EstLargeEigs},
  \begin{equation}
    \E{\sigma_c(\bA_{\mu_{\rho_n f}})} = n \rho_n \bm{\theta} + \mathcal{O}(\sqrt{\rho_n}).
  \end{equation}
  Since $\sqrt{\rho_n} \to 0$, there exists an $n_1 \in \N$ such that for all $n > n_1$,
  \begin{equation}
    || n \rho_n \bm{\theta} - \E{\sigma_c(\bA_{\mu_{\rho_n f}})}||_2 < \frac \epsilon 2 .
  \end{equation}
  Theorem \ref{thm:AlmostSure} implies the existence of an $n_2 \in \N$ such that inequality (\ref{eqn:Second}) holds. Taking $n^* = \max(n_1,n_2)$ implies that both inequalities hold and concludes the proof of Theorem \ref{thm:SpecSimLargeGraphs}. \qed
\end{proof}

\section{Proof of Theorem \ref{thm:GeomEigsFM}}
\label{app:GeomEigsFM}
The proof of Theorem \ref{thm:GeomEigsFM}, which we restate below, is a consequence of Lemmas \ref{lem:SBMCorrectEigs} and \ref{lem:GraphNearExpEigs} along with Theorem \ref{thm:EstLargeEigs}. Let $\lbrace \Gk \rbrace_{k=1}^N$ have sample Fr\'echet mean $G_N^*$.
\begin{theorem}[Theorem \ref{thm:GeomEigsFM} in the main paper]
  \label{thm:GeomEigsFM-app}
  $\forall \epsilon > 0$, $\exists n^* \in \N$ such that $\forall n > n^*$,
  \begin{equation}
    \left \|  \sigma_c(\bA_N^*) - \frac 1 N \sum_{k=1}^N \sigma_c(\bA^{(k)})  \right \|_2 < \epsilon.
  \end{equation}
\end{theorem}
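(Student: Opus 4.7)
The plan is to recast the Fr\'echet mean problem in eigenvalue space, exactly as in Step 3 of the proof of Theorem \ref{thm:SpecSimLargeGraphs}, and then exhibit a graph whose top-$c$ spectrum is close to the arithmetic mean $\bar{\blamb} := \frac{1}{N}\sum_{k=1}^N \sigma_c(\bA^{(k)})$. Expanding the squared-norm objective,
\[
\frac{1}{N}\sum_{k=1}^N \|\blamb - \sigma_c(\bA^{(k)})\|_2^2 = \|\blamb - \bar{\blamb}\|_2^2 + \frac{1}{N}\sum_{k=1}^N \|\sigma_c(\bA^{(k)})\|_2^2 - \|\bar{\blamb}\|_2^2,
\]
so minimizing over $G\in\cG$ is equivalent to minimizing $\|\blamb - \bar{\blamb}\|_2^2$ over the realizable set $\Lambda_n^c$ (Definition \ref{def:RealizableEigs}). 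Writing $\blamb_N^* := \sigma_c(\bA_N^*)$, we therefore obtain $\blamb_N^* = \argmin_{\blamb \in \Lambda_n^c} \|\blamb - \bar{\blamb}\|_2^2$. It now suffices to produce, for every $\epsilon > 0$ and every sufficiently large $n$, some realizable vector in $\Lambda_n^c$ within distance $\epsilon$ of $\bar{\blamb}$.

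To construct such a graph, I would reuse the stochastic-block-model machinery from the proof of Theorem \ref{thm:SpecSimLargeGraphs}. First, apply Lemma \ref{lem:SBMCorrectEigs} with target vector $\bm{\theta} = \bar{\blamb}/(n\rho_n)$ to obtain a canonical SBM kernel $f(x,y;\bp,\bQ,\bs)$ whose integral operator $L_f$ satisfies $\lambda_i(L_f) = \bar{\lambda}_i/(n\rho_n)$ for $1 \leq i \leq c$. Theorem \ref{thm:EstLargeEigs} then yields
\[
\E{\sigma_c(\bA_{\mu_{\rho_n f}})} = n\rho_n \bm{\theta} + \cO(\sqrt{\rho_n}) = \bar{\blamb} + \cO(\sqrt{\rho_n}),
\]
so for $n$ large enough the expected spectrum lies within $\epsilon/2$ of $\bar{\blamb}$. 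Lemma \ref{lem:GraphNearExpEigs}, applied to the same kernel, produces a graph $G' \in \cG$ with adjacency matrix $\bA'$ satisfying $\|\sigma_c(\bA') - \E{\sigma_c(\bA_{\mu_{\rho_n f}})}\|_2 < \epsilon/2$. Combining by the triangle inequality gives $\|\sigma_c(\bA') - \bar{\blamb}\|_2 < \epsilon$, and since $\sigma_c(\bA') \in \Lambda_n^c$ while $\blamb_N^*$ is the minimizer of distance to $\bar{\blamb}$ over $\Lambda_n^c$,
\[
\|\sigma_c(\bA_N^*) - \bar{\blamb}\|_2 \leq \|\sigma_c(\bA') - \bar{\blamb}\|_2 < \epsilon.
\]

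The only real subtlety, and the step I would want to justify most carefully, is that the hypotheses of Lemma \ref{lem:SBMCorrectEigs}, Theorem \ref{thm:EstLargeEigs}, and Lemma \ref{lem:GraphNearExpEigs} all hold for the specific target $\bm{\theta} = \bar{\blamb}/(n\rho_n)$. Nonnegativity $0 \preccurlyeq \bar{\blamb}$ follows from the standing assumption that the sample graphs (and hence $G_N^*$) have nonnegative top-$c$ eigenvalues, while the density condition $n^{-2/3} \ll \rho_n \ll 1$ transfers from the individual samples to the constructed SBM via Remark \ref{rem:Density}. Everything else is a direct invocation of already-established results, which is why this theorem is much shorter to prove than Theorem \ref{thm:SpecSimLargeGraphs}: it is really a by-product of the construction used there, with the arithmetic mean $\bar{\blamb}$ playing the role previously played by $\sigma_c(\bA)$.
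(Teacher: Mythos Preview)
Your proposal is correct and follows essentially the same approach as the paper: both recast the sample Fr\'echet mean as a projection onto $\Lambda_n^c$ closest to $\bar{\blamb}$ (the paper simply cites equation (\ref{eqn:EqFM}) for this), then invoke Lemma \ref{lem:SBMCorrectEigs} with $\bm{\theta}=\bar{\blamb}/(n\rho_n)$, Theorem \ref{thm:EstLargeEigs}, and Lemma \ref{lem:GraphNearExpEigs} to exhibit a realizable spectrum within $\epsilon$ of $\bar{\blamb}$. Your version is slightly more explicit about the triangle-inequality bookkeeping and the verification of hypotheses, but the structure and the key lemmas invoked are identical.
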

\begin{proof}
  Let $\blamb^{(k)} = \sigma_c(\bA^{(k)})$. Take $\bm{\theta} = \frac{1}{n \rho_n N} \sum_{k=1}^N \blamb^{(k)}$ in Lemma \ref{lem:SBMCorrectEigs}. Let $\epsilon > 0$. By Lemma \ref{lem:GraphNearExpEigs} there exists $n_1 \in \N$ such that for all $n > n_1$, there exists a graph $G' \in \cG$ with adjacency matrix $\bA'$such that
  \begin{equation}
    ||\sigma_c(\bA') - \E{\sigma_c(\bA_{\mu_{\rho_n f}})}||_2 < \epsilon.
  \end{equation}
  By Theorem \ref{thm:EstLargeEigs}, there exists $n_2 \in \N$ such that for all $n > n_2$, 
  \begin{align}
    \forall i = 1,...,c, \left| n \rho_n \lambda_i(L_f) - \E{\lambda_i(\bA_{\mu_{\rho_n f}})} \right| < \epsilon.
  \end{align}
  Since $n \rho_n \lambda_i(L_f) = \frac 1 N \sum_{k=1}^N \lambda_i(\bA^{(k)})$, there exists a graph $G'$ with adjacency matrix $\bA'$ that satisfies
  \begin{equation}
    ||\sigma_c(\bA') - \frac 1 N \sum_{k=1}^N \blamb^{(k)}||_2 < C\epsilon,
  \end{equation}
  where $C$ is an arbitrary positive constant. We recall, (see (\ref{eqn:EqFM}), that the spectrum of the sample Fr\'echet mean is the solution to 
  \begin{align}
    \blamb_{N}^* = \underset{\blamb \in \Lambda_n^c}{\text{argmin }} ||\blamb - \frac{1}{N}\sum_{k=1}^N \blamb^{(k)}||_2^2
  \end{align}
  Now, take $n^* = \max(n_1,n_2)$, then the existence of the graph $G'$ with adjacency matrix $\bA'$ shows that
  \begin{equation}
    ||\sigma_c(\bA_N^*) - \frac{1}{N}\sum_{k=1}^N \blamb^{(k)}||_2^2 < ||\sigma_c(\bA') - \frac{1}{N}\sum_{k=1}^N \blamb^{(k)}||_2^2 < \epsilon.
  \end{equation}
  Thus the adjacency matrix of the sample Fr\'echet mean must have eigenvalues that are within $\epsilon$ of the geometric average. \qed
\end{proof}

\section{Proof of Theorem \ref{thm:HighProbStat}}
\label{app:HighProbStat}
Let $\lbrace \tilde G^{(k)} \rbrace_{k = 1}^{\tilde N}$ be a sample of graphs distributed according to $\mu_{\rho_n f}$ where $f$ is the canonical stochastic block model kernel. Define the set mean graph by
\begin{equation}
  \widehat{G}_{\tilde N,\mu_{\rho_n f}}^* = \argmin{\tilde G \in \lbrace \tilde{G}^{(k)} \rbrace_{k = 1}^{\tilde N}} \frac{1}{\tilde N} \sum_{k = 1}^{\tilde N} d_{A_c}^2(\tilde G,\tilde{G}^{(k)})
\end{equation}
with adjacency matrix $\hat{\bA}_{N,\mu_{\rho_n f}}^*$.

\begin{theorem}
  $\forall \epsilon > 0$, 
  \begin{equation}
    \lim_{n \to \infty} P(||\sigma_c(\hat{\bA}_{\tilde N, \mu_{\rho_n f}}^*) - \E{\sigma_c(\bA_{\mu_{\rho_n f}})}||_2 > \epsilon) = 0.
  \end{equation}
\end{theorem}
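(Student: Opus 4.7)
My plan is to reduce the statement to the convergence in distribution result of Theorem \ref{thm:ConvDist} via a union bound, exploiting the fact that the set mean graph $\widehat{G}_{\tilde N,\mu_{\rho_n f}}^*$ is always one of the finitely many sampled graphs $\{\tilde G^{(k)}\}_{k=1}^{\tilde N}$. Since the argmin defining $\widehat{G}_{\tilde N,\mu_{\rho_n f}}^*$ ranges only over the observed sample, we have the pointwise (in $\omega$) inclusion
\[
  \{\|\sigma_c(\hat{\bA}_{\tilde N,\mu_{\rho_n f}}^*) - \E{\sigma_c(\bA_{\mu_{\rho_n f}})}\|_2 > \epsilon\} \subseteq \bigcup_{k=1}^{\tilde N} \{\|\sigma_c(\tilde{\bA}^{(k)}) - \E{\sigma_c(\bA_{\mu_{\rho_n f}})}\|_2 > \epsilon\},
\]
so a union bound (combined with the fact that the $\tilde G^{(k)}$ are identically distributed) will give
\[
  P\bigl(\|\sigma_c(\hat{\bA}_{\tilde N,\mu_{\rho_n f}}^*) - \E{\sigma_c(\bA_{\mu_{\rho_n f}})}\|_2 > \epsilon\bigr) \leq \tilde N \cdot P\bigl(\|\sigma_c(\tilde{\bA}^{(1)}) - \E{\sigma_c(\bA_{\mu_{\rho_n f}})}\|_2 > \epsilon\bigr).
\]
Since $\tilde N$ is fixed and does not depend on $n$, it suffices to show that the single-sample probability on the right vanishes as $n \to \infty$.

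To obtain this single-sample bound, I would invoke Theorem \ref{thm:ConvDist}, which states that
\[
  \rho_n^{-1/2}\bigl(\sigma_c(\tilde{\bA}^{(1)}) - \E{\sigma_c(\bA_{\mu_{\rho_n f}})}\bigr) \overset{d}{\to} Z,
\]
where $Z$ is a mean-zero multivariate Gaussian vector in $\R^c$. Convergence in distribution to an almost surely finite random vector implies tightness of the sequence on the left, hence for any sequence $a_n \to \infty$,
\[
  P\bigl(\rho_n^{-1/2}\|\sigma_c(\tilde{\bA}^{(1)}) - \E{\sigma_c(\bA_{\mu_{\rho_n f}})}\|_2 > a_n\bigr) \longrightarrow 0.
\]
Applying this with the specific choice $a_n = \epsilon \rho_n^{-1/2}$, which tends to infinity because $\rho_n \ll 1$, yields exactly
\[
  P\bigl(\|\sigma_c(\tilde{\bA}^{(1)}) - \E{\sigma_c(\bA_{\mu_{\rho_n f}})}\|_2 > \epsilon\bigr) \longrightarrow 0 \quad \text{as } n\to\infty.
\]

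Combining the union bound with this convergence concludes the proof. The only mild subtlety, and the one point where care is required, is the translation from the scaled convergence in distribution of Theorem \ref{thm:ConvDist} to an unscaled probability bound on the raw differences; this is handled cleanly by tightness together with the sparsity assumption $\rho_n \to 0$, which is precisely what makes the $\sqrt{\rho_n}$-order fluctuations of the spectrum negligible at the $\epsilon$ scale. No additional tools beyond results already established in the appendix are required.
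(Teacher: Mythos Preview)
Your proof is correct and follows essentially the same approach as the paper: both arguments first establish the single-sample convergence in probability from Theorem~\ref{thm:ConvDist} combined with $\rho_n \to 0$, and then transfer this to the set mean via the observation that $\widehat{G}_{\tilde N,\mu_{\rho_n f}}^*$ is always one of the $\tilde N$ sampled graphs. Your explicit union bound with the factor~$\tilde N$ is in fact a cleaner formalization of the paper's final step, which handles the same inclusion somewhat informally by appealing to the existence of a random index~$k_0$.
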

\begin{proof}
  We first prove that $\sigma_c(\bA_{\mu_{\rho_n f}})$ converges in probability to $\E{\sigma_c(\bA_{\mu_{\rho_n f}})}$ for large graph size. From Theorem \ref{thm:ConvDist},
  \begin{equation}
    \frac{1}{\sqrt{\rho_n}} \left( \sigma_c(\bA_{\mu_{\rho_n f}}) - \E{\sigma_c(\bA_{\mu_{\rho_n f}})} \right) \overset{d}{\to} Z \sim N(0, \Sigma) \label{eqn:ConvDistHighProb}
  \end{equation}
  where the $c \times c$ covariance matrix $\Sigma$ is given by (\ref{eqn:CovMat}). Let $z > 0$, $\epsilon > 0$, and $\eta > 0$. Because of (\ref{eqn:ConvDistHighProb}), $\exists n_0 \in \N$, $\forall n > n_0$,
  \begin{equation}
    \left| P\left( \frac{1}{\sqrt{\rho_n}} |\lambda_i(\bA_{\mu_{\rho_n f}}) - \E{\lambda_i(\bA_{\mu_{\rho_n f}})}| \leq z \right) - P\left( |z_i| \leq z; i = 1,...,c \right)\right| \leq \frac \eta 2.
  \end{equation}
  Thus
  \begin{equation}
    P\left( |z_i| \leq z; i = 1,...,c \right) - \frac \eta 2 \leq  P\left( \frac{1}{\sqrt{\rho_n}} |\lambda_i(\bA_{\mu_{\rho_n f}}) - \E{\lambda_i(\bA_{\mu_{\rho_n f}})}| \leq z \right).
  \end{equation}
  Now $P(|| \bm z || \leq z) \leq P(|z_i| \leq z; i = 1,...,c)$ and
  \begin{equation}
    P\left( \frac{1}{\sqrt{\rho_n}} |\lambda_i(\bA_{\mu_{\rho_n f}}) - \E{\lambda_i(\bA_{\mu_{\rho_n f}})}| \leq z \right) \leq P\left(||\sigma_c(\bA_{\mu_{\rho_n f}}) - \E{\sigma_c(\bA_{\mu_{\rho_n f}})}||_2 \leq \sqrt{c \rho_n} z \right).
  \end{equation}
  Because $\lim_{z \to \infty} P(||\bm z|| \leq z) = 1$, $\exists z_0 > 0$ such that
  \begin{equation}
    1 - \frac \eta 2 < P(|| \bm z || \leq z_0).
  \end{equation}
  Also, $\lim_{n \to \infty} \rho_n = 0$ so $\exists n_1 \in \N$ such that $\forall n > n_1$, $\sqrt{\rho_n} < \frac{\epsilon}{z_0 \sqrt c}$ or $\sqrt{c \rho_n z_0} < \epsilon.$  In summary, $\forall \epsilon > 0$, $\forall \eta > 0$, $\exists n_2 = \max(n_0, n_1)$ where
  \begin{equation}
    1 - \eta < P\left(||\sigma_c(\bA_{\mu_{\rho_n f}}) - \E{\sigma_c(\bA_{\mu_{\rho_n f}})}||_2 < \epsilon \right).
  \end{equation}
  Equivalently, 
  \begin{equation}
    P\left(||\sigma_c(\bA_{\mu_{\rho_n f}}) - \E{\sigma_c(\bA_{\mu_{\rho_n f}})}||_2 \geq \epsilon \right) < \eta.
  \end{equation}
  In other words, $\forall \epsilon > 0$, 
  \begin{equation}
    \lim_{n \to \infty} P\left(||\sigma_c(\bA_{\mu_{\rho_n f}}) - \E{\sigma_c(\bA_{\mu_{\rho_n f}})}||_2 \geq \epsilon \right) = 0. \label{eqn:ConvProb}
  \end{equation}
  We now show that the largest $c$ eigenvalues of the adjacency matrix of the set Fr\'echet mean graph converges in probability to $\E{\sigma_c(\bA_{\mu_{\rho_n f}})}$. Let $\epsilon > 0$ and let $\eta > 0$. Let $\tilde N > 0$ and consider the event
  \begin{equation}
    \mathcal{E} = \lbrace \bA^{(1)},...,\bA^{(\tilde N)} ; ||\sigma_c(\widehat{\bA}_{\tilde N, \mu_{\rho_n f} }) - \E{\sigma_c(\bA_{\mu_{\rho_n f}})} ||_2 > \epsilon \rbrace.
  \end{equation}
  Note that $\exists k_0 \in \lbrace 1,..., \tilde N \rbrace$ where $\widehat{\bA}_{\tilde N, \mu_{\rho_n f} } = \bA^{(k_0)}$ where $\bA^{(k_0)} \sim \mu_{\rho_n f}$. Now because of (\ref{eqn:ConvProb}), $\exists n_0 \in \N$ where $\forall n > n_0$, $P(\mathcal{E}) < \eta$. We conclude that $\forall \epsilon > 0$, 
  \begin{equation}
    \lim_{n \to \infty} P\left(||\sigma_c(\widehat{\bA}_{\tilde N, \mu_{\rho_n f} }) - \E{\sigma_c(\bA_{\mu_{\rho_n f}})} ||_2 > \epsilon \right) = 0.
  \end{equation}
\end{proof}
\end{document}